\definecolor{myblue}{RGB}{31, 119, 180}
\definecolor{myorange}{RGB}{255, 127, 14}
\definecolor{mygreen}{RGB}{44, 160, 44}
\definecolor{myred}{RGB}{214, 39, 40}
\renewcommand{\thesubfigure}{(\alph{subfigure})}
\definecolor{colorcomment}{RGB}{160, 190, 210}%
\algnewcommand{\LineComment}[1]{\Statex \hskip\ALG@thistlm \(\triangleright\) 
{\color{colorcomment}#1}}
\algnewcommand{\IndentLineComment}[1]{\Statex \hskip\ALG@tlm \(\triangleright\) {\color{colorcomment}#1}}
\newcommand\loss{\ell}
\newcommand\policy{\ensuremath{\pi}}
\newcommand\state{s}
\newcommand\stateDist{d}
\newcommand\transDynamics{\mathcal{P}}
\newcommand\reward{r}
\newcommand\RFunc{R}
\newcommand\VFunc{\ensuremath{\ensuremath{V}}}
\newcommand\QFunc{\ensuremath{\ensuremath{Q}}}
\newcommand\AFunc{\ensuremath{\ensuremath{A}}}
\newcommand\discFactor{\gamma}
\newcommand\stateSpace{\ensuremath{\mathcal{S}}}
\newcommand\action{\ensuremath{a}}
\newcommand\actionSpace{\ensuremath{\mathcal{A}}}
\newcommand\advTemperature{\ensuremath{\mathcal{\xi}}} %
\newcommand\densityTemp{\ensuremath{\mathcal{\zeta}}} %
\newcommand\horizon{\ensuremath{T}}
\newcommand\GNum{\ensuremath{G}} %
\newcommand\EnsembleNum{\ensuremath{E}} %
\newcommand\batch{\ensuremath{N}} %
\newcommand\density{\ensuremath{{w}}}
\newcommand\ReplayBuffer{\ensuremath{\mathcal{R}}}
\newcommand\OfflineBuffer{\ensuremath{\mathcal{D}}}
\newcommand{\expctover}[2]{\mathbb{E}_{#1}\!\left[#2\right]}
\newcommand{\abs}[1]{\left\vert#1\right\vert}
\def \argmax {\mathop{\rm arg\,max}}
\def \argmin {\mathop{\rm arg\,min}}
\newcommand{\kl}{{\mathrm{KL}}}
\newcommand{\algname}{\textsc{$A^3$\text{RL}}\xspace}
\newif\iffinal
    \newcommand{\fix}[1]{#1}
    \newcommand{\YC}[1]{}
    \newcommand{\YCinline}[1]{}
    \newcommand{\XL}[1]{}
    \newcommand{\XLinline}[1]{}
    \newcommand{\MW}[1]{}
    \newcommand{\MWinline}[1]{}
    \newcommand{\note}[1]{}
    \newcommand{\pref}[1]{}
    \newcommand{\fix}[1]{{\color{red} #1}}
    \newcommand{\YC}[1]{\todo[fancyline,color=NavyBlue!40]{YC: #1}\xspace}
    \newcommand{\YCinline}[1]{\textcolor{NavyBlue}{[YC: #1]}}
    \newcommand{\XL}[1]{\todo[fancyline,color=green!40]{XL: #1}\xspace}
    \newcommand{\XLinline}[1]{\textcolor{NavyBlue}{[XL: #1]}}
    \newcommand{\MW}[1]{\todo[fancyline,color=blue!40]{MW: #1}\xspace}
    \newcommand{\MWinline}[1]{\textcolor{blue!90}{[MW: #1]}}
    \newcommand{\note}[1]{{\color{purple}[XL: #1]}}
    \newcommand{\pref}[1]{{\color{blue}(\ref{#1})}}
\newcommand{\tabref}[1]{Table~\ref{#1}}
\newcommand{\figref}[1]{Fig.~\ref{#1}}
\newcommand{\secref}[1]{\S\ref{#1}}
\newcommand{\appref}[1]{Appendix~\ref{#1}}
\newcommand{\paren} [1] {\ensuremath{ \left( {#1} \right) }}
\newcommand{\bracket}[1]{\left[#1\right]}
\newcommand{\tuple}[1]{\ensuremath{\left\langle #1 \right\rangle}}
\newcommand{\curlybracket}[1]{\ensuremath{\left\{#1\right\}}}
\newenvironment{proof}{\emph{Proof:}}{\hfill$\square$}
\let\hat\widehat
\let\tilde\widetilde
\def\given{{\,|\,}}
\newcommand{\cA}{\mathcal{A}}
\newcommand{\EE}{\mathbb{E}}
\newcommand{\PP}{\mathbb{P}}
\DeclareMathOperator{\ind}{\mathds{1}}  %
\title{Advantage-Aligned Active Online Reinforcement Learning with Offline Data}
\author[1,4]{Xuefeng Liu}
\author[1]{Hung T. C. Le}
\author[2]{Siyu Chen}
\author[1]{Rick  Stevens}
\author[2]{Zhuoran Yang}
\author[3]{Matthew Walter}
\author[1]{Yuxin Chen}
\affil[1]{%
Department of Computer Science, University of Chicago, Chicago, IL, USA
}
\affil[2]{%
Department of Statistics and Data Science, Yale University, New Haven, CT, USA
}
\affil[3]{%
Toyota Technological Institute at Chicago, Chicago, IL, USA
}
\affil[4]{%
 Correspondence to: xuefeng@uchicago.edu
 }
\begin{document}

\maketitle

\begin{abstract}
Online reinforcement learning (RL) enhances policies through direct interactions with the environment, but faces challenges related to sample efficiency. {In contrast, offline RL leverages extensive pre-collected data to learn policies but often yields suboptimal results due to limited data coverage and redundancy within the dataset.} Recent efforts integrate offline and online RL in order to harness the advantages of both approaches. However, effectively combining online and offline RL remains challenging due to issues that include %
catastrophic forgetting, lack of robustness { to data quality} and {limited sample efficiency in data utilization}. 
{To address these challenges, we introduce \algname, which incorporates a novel confidence-aware Active Advantage-Aligned ($A^3$) sampling strategy. Our approach dynamically prioritizes samples from both online and offline data sources by aligning the sampling distribution with the current policy improvement direction, thereby enabling more efficient and effective policy optimization. Moreover, we provide theoretical insights into the effectiveness of our active sampling strategy and conduct diverse empirical experiments and ablation studies, demonstrating that our method outperforms competing online RL techniques that leverage offline data.}

\end{abstract} 

\section{Introduction}\label{sec:intro}

Reinforcement learning~(RL) has achieved notable success in many domains, such as robotics~\citep{kober2011reinforcement,kober2013reinforcement},  game play~\citep{mnih2013playing,silver2017mastering}, drug discovery~\citep{liu2023drugimprover, liu2024entropy}, 
and reasoning with Large Language Models (LLMs)~\citep{havrilla2024teaching}.
Online RL algorithms such as Q-learning~\citep{watkins1989learning}, SARSA~\citep{rummery1994line}, and PPO~\citep{schulman2017proximal} learn and make decisions in an online, sequential manner, whereby an agent interacts with an environment and learns from its experience. However, due to the need for exploration that is fundamental to RL, online RL tends to be highly sample inefficient in high-dimensional or sparse reward environments. 
A complementary approach {to improve the sample efficiency} is imitation learning (IL) ~\citep{ross2010efficient,ross2014reinforcement}, where an agent learns a policy by leveraging expert demonstrations~\citep{cheng2020policy,liu2023blending, liu2023active}.

However, in many cases, we do not have access to a live expert to query, but do have access to an abundance of logged data collected from experts. One approach to make use of this data is through offline reinforcement learning. Offline RL~\citep{levine2020offline,prudencio2023survey}  learns a policy solely from such a fixed dataset of pre-collected experiences, without the need to directly interact with the environment. 
Despite its advantages, offline RL often results in a suboptimal policy due to dataset limitations. This has motivated recent work that combines offline and online RL, whereby %
learning begins from a logged dataset before transitioning to online interactions for further improvement. %
While beneficial, contemporary offline-to-online RL methods suffer from catastrophic forgetting, where previously learned knowledge is overwritten during online fine-tuning, leading to significant performance degradation~\citep{luo2023finetuning,zheng2023adaptive}.

{More recently,} methods that integrate online RL with offline datasets utilize off-policy techniques to incorporate offline data while learning online~\citep{ball2023efficient,song2022hybrid}{, mitigating catastrophic performance drops.} 
These techniques do not require any preliminary offline RL training or incorporate specific imitation clauses that prioritize pre-existing offline data. Notably, RLPD~\citep{ball2023efficient} {exhibits strong empirical performance, however}
it employs a uniform random sampling strategy for both offline and online learning, ignoring that different transitions contribute differently to {the various stages of} policy improvement.
{Furthermore, this uniform sampling strategy may lead to data inefficiencies (e.g., sampling unhelpful transitions while missing out on valuable ones) and makes policy improvement highly sensitive to data quality.
}

\begin{figure}[t!]
    \centering
    \includegraphics[width=1\linewidth]{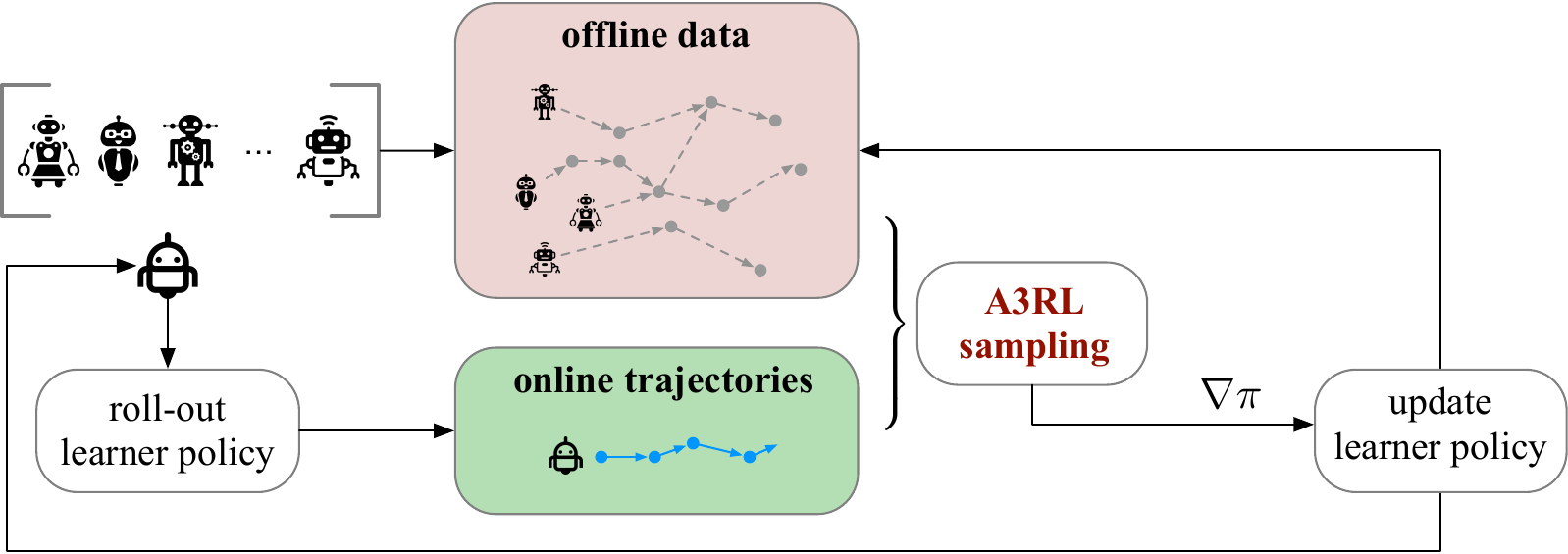}
    \vspace{-3mm}
    \caption{\algname combines online and offline RL using a priority-based sampling strategy that prioritizes samples from online roll-outs and offline data that align with directions for policy improvement.}\label{fig:main:method_overview}
    \vspace{-3mm}
\end{figure}

{\bf Our contributions.} 
In this work, we introduce \underline{A}ctive \underline{A}dvantage-\underline{A}ligned  \underline{R}einforcement \underline{L}earning (\algname),
a novel method that operates in the realm of online RL with an offline dataset, {as illustrated in \figref{fig:main:method_overview}}. 
{
Our approach dynamically prioritizes transitions with the highest potential to drive policy improvement, while also tackling the mixed offline-online nature by aligning the sampling distribution to the policy's data generation.
}
{More specifically, \algname considers not only the relevance of the data in facilitating the current policy's online exploration and exploitation, but also its estimated contribution to policy improvement via {confidence-aware} advantage-based prioritization.}
{\algname demonstrates robustness to data quality in a black-box manner and maintains resilience under varying environmental conditions. Notably, it also effectively accelerates policy improvement, even in a purely online environment.}

In short, our contributions are:
\begin{itemize}[noitemsep, wide,labelwidth=0pt, labelindent=0pt]
    \item We propose \algname, a novel algorithm for online RL with offline data. This algorithm surpasses current state-of-the-art (SOTA) methods through a priority-based sampling strategy integrating a conservative estimate of the advantage function %
    with the extent of online coverage of the offline dataset.
    \item {In contrast to RLPD {and other related works~\citep{lee2022offline, schaul2015prioritized}}, which lack theoretical support, 
    this study provides theoretical insights into our active advantage-aligned sampling strategy, demonstrating superiority and its minimum improvement gap over random sampling.}
    \item Through extensive empirical evaluations on the D4RL benchmark~\citep{fu2020d4rl}, we demonstrate that \algname achieves %
    consistent and significant improvements over prior SOTA models, especially on harder Adroit tasks.
    \item Given the black-box nature of offline datasets, we conduct comprehensive ablation studies across a range of dataset qualities and environmental settings, including the purely online scenario,  to evaluate the robustness of \algname. {These studies consistently confirm its stable performance across diverse conditions, regardless of environmental factors or data quality.}
\end{itemize}

\section{Related Work}\label{sec:related}
\vspace{-1em}
{\bf Online RL with offline datasets.} Several methods exist that incorporate offline datasets in online RL to enhance sample efficiency. %
Many rely on high-quality expert demonstrations~\citep{ijspeert2002learning, kim2013learning, nair2018overcoming, rajeswaran2017learning, vecerik2017leveraging, zhu2019dexterous}. 
\citet{nair2020awac} introduced the Advantage Weighted Actor Critic (AWAC), which utilizes regulated policy updates to maintain the policy's proximity to the observed data during both offline and online phases. On the other hand, \citet{lee2022offline} proposed an initially pessimistic approach to avoid over-optimism and bootstrap errors in the early online phase, gradually reducing the level of pessimism as more online data become available.
Most relevant to our work is RLPD~\citep{ball2023efficient}, which %
adopts a sample-efficient off-policy approach to learning that does not require pre-training. 
Unlike RLPD, which utilizes symmetric sampling to {randomly} draw from both online and offline datasets for policy improvement, \algname adopts a Prioritized Experience Replay \citep{schaul2015prioritized} (PER)-style  method, whereby it selectively uses data from both datasets to enhance policy performance.

{\bf Prioritized experience replay.}
Experience replay~\citep{lin1992self} enhances data efficiency in online RL by reusing past experiences.
PER ~\citep{schaul2015prioritized} introduces prioritization based on temporal difference (TD) error to ensure that miscalibrated $Q$-values are corrected more frequently, and has proven effective in a variety of settings~\citep{hessel2018rainbow, jaderberg2016reinforcement, nair2018overcoming, oh2021model, saglam2022actor, tian2023learning, wang2016dueling, yue2023offline}. 
Alternative prioritization strategies have been explored, such as prioritizing transitions based on expected returns~\citep{isele2018selective} or adjusting sample importance based on recency~\citep{fedus2020revisiting}. 
{\citet{eysenbach2020off} apply a density ratio to the reward instead of weighting the samples.}
These works predominantly focus on either purely online or purely offline applications of PER. In contrast, our approach uniquely integrates the strengths of both online and offline data.
The most relevant studies to ours include \citet{sinha2022experience} that uses the density ratio between off-policy and near-on-policy state-action distributions as an importance weight for policy evaluation, and \citet{lee2022offline} that employs density ratios to select relevant, on-policy–like samples from offline datasets. 
However, existing methods fail to account for a critical factor—the potential of data (e.g., its advantageousness and informativeness) that fundamentally shapes policy improvement.

{\bf Active learning in RL.}
Active learning has been explored in RL for data-efficient exploration %
\citep{epshteyn2008active, fang2017learning, krueger2020active,liu2022cost,liu2023blending, liu2023active,lopes2009active}. %
Unlike previous approaches that focus on %
oracle selection~\citep{liu2023blending, liu2023active}, state exploration~\citep{epshteyn2008active,liu2023active} or reward estimation~\citep{lopes2009active},
\algname introduces an active sampling mechanism tailored to online RL with offline data, prioritizing transitions that maximize policy improvement. We defer more details of related work to the 
\appref{app:relatedWork}.

\section{Preliminaries and Problem Statement}\label{sec:Preliminaries}

We consider a discounted Markov decision process (MDP) environment~\citep{bellman1957markovian} characterized by a tuple $M = (\stateSpace, \actionSpace, \transDynamics, \RFunc, \discFactor, \stateDist_0)$, where $\stateSpace$ represents a potentially infinite state space, $\actionSpace$ is the action space,
$\transDynamics: \stateSpace \times \actionSpace \rightarrow \Delta(\stateSpace)$ is the unknown transition kernel, %
$\RFunc: \stateSpace \times \actionSpace \rightarrow \mathbb{R}$ is the reward function, $\discFactor \in (0,1)$ is the discount factor and $\stateDist_0\paren{\state}$ is the initial state distribution. 
The learner's objective is to solve for the policy $\policy:\stateSpace \rightarrow \Delta(\actionSpace)$ that maximizes the %
expected sum of discounted future rewards $\expctover{\policy}{\sum_{t=1}^{\infty} \gamma^t\reward\paren{\state_t,\action_t}}$,
where the expectation is taken over the trajectory sampled from $\policy$.

{\bf Maximum entropy RL.} 
In this work, we adopt off-policy soft actor-critic (SAC)~\citep{haarnoja2018soft} RL to train an agent with samples generated by any behavior policy. 
{
We use a general maximum entropy objective~\citep{ball2023efficient,haarnoja2018soft,ziebart2010modeling} as follows:
\begin{equation}\label{eq:maxentRL}
    \max_{\policy} \expctover{\state\sim{\ReplayBuffer},{\action\sim\policy}}{\sum_{t=0}^{\infty} \gamma^t\paren{\reward_t+\alpha \mathcal{H}\paren{\policy\paren{\action|\state}}}},
\end{equation}
where {$\ReplayBuffer$ is replay buffer and $\alpha$ is a temperature parameter.} This involves optimizing reward while encouraging exploration, making the learned policy more robust. 
}

\textbf{$\QFunc$-value and advantage function.} 
The $\QFunc$-value function measures the expected return of executing action $\action$ in state $\state$ under policy $\policy$, and satisfies the Bellman self-consistency condition:
$\QFunc^{\policy}\paren{\state,\action} = \mathcal{B}^{\policy}\QFunc^{\policy}\paren{\state,\action}$, 
where $\mathcal{B}^{\policy}$ is the Bellman operator:
    $\mathcal{B}^{\policy}\QFunc\paren{\state,\action} := \reward\paren{\state,\action} + \gamma \expctover{\state' \sim P\paren{\cdot|\state,\action}}{\VFunc^{\policy}\paren{\state'}}$.
The soft state value function is defined as:
$\VFunc^{\policy}\paren{\state}:= \expctover{\action\sim\policy\paren{\cdot|\state}}{\QFunc^{\policy}\paren{\state,\action}- {\log \policy\paren{\action|\state}}}$. 
For a policy $\policy$, the advantage function \citep{sutton1999policy} %
quantifies the relative benefit of selecting $\action$ over the policy's default behavior:
\begin{align}\label{eq:advantage}
   \AFunc^{\policy}\paren{\state,\action} = \QFunc^{\policy}\paren{\state,\action} - \VFunc^{\policy}\paren{\state}.
\end{align}
Specifically, SAC learns a soft Q-Function, denoted as $\QFunc_{\theta}\paren{\state,\action}$ with parameters $\theta$, 
and a stochastic policy $\policy_{\phi}$ parameterized by $\phi$. The SAC method involves alternating between updates for the critic and the actor by minimizing their respective objectives~\citep{lee2022offline} as follows
\begin{align*}
&\mathcal{L}_{\text{critic}}^{\text{SAC}}\paren{\theta}=\mathbb{E}_{\paren{\state_t,\action_t,\state_{t+1}}\sim\ReplayBuffer}[(\QFunc_{\theta}\paren{\state_t,\action_t}-\reward\paren{\state_t,\action_t}\\
&-\gamma{
    \expctover{\action'_{t+1} \sim \policy_{\phi}}{\QFunc_{\theta'}\paren{\state_{t+1},\action'_{t+1}}-\alpha\log\policy_{\phi}\paren{\action'_{t+1}|\state_{t+1}}}})^2], \\
&\mathcal{L}_{\text{actor}}^{\text{SAC}}\paren{\phi}=\expctover{\state_t\sim\ReplayBuffer,\action_t\sim\policy_{\phi}}{\alpha\log\policy_{\phi}\paren{\action_t|\state_t}-\QFunc_{\theta}\paren{\state_t,\action_t}},
\end{align*}
Here, $\ReplayBuffer$
is an experience replay buffer of either on-policy experience~\citep{sutton1999policy} or through off-policy experience~\citep{munos2016safe, precup2000eligibility}, 
and $\overline{\theta}$ denotes the delayed target parameters.

\textbf{Prioritized experience replay.}
PER~\citep{schaul2015prioritized} serves as the basis of our sampling techniques, providing a framework for prioritizing experience replay based on transition importance. Instead of sampling uniformly from the replay buffer $\ReplayBuffer$,  PER assigns higher probability to transitions with miscalibrated $Q$-values, leading to improved sample efficiency~\citep{hessel2018rainbow} as these transitions are seen and corrected more frequently. Each transition $\ReplayBuffer_i = (\state_{i},\action_{i},\reward_{i},\state_{i+1})$ is assigned a priority $P_i$, typically based on the absolute of the TD-error $\delta=\reward+\gamma Q_{{\theta'}}(s_{t+1}, a'_{t+1}) - Q_\theta(s_t, a_t)$ where $a'_{t+1} \sim \pi(\cdot | s_{t+1})$ \citep{brittain2019prioritized,hessel2018rainbow,oh2021model, schaul2015prioritized, van2019use}.
Subsequently, the sampling approach of PER involves sampling an index set $\mathcal{I}$ within the range $\bracket{|\ReplayBuffer|}$ based on the probabilities $p_i$ assigned by the priority set as follows:
    $p_i=\frac{P_i^{\alpha}}{\sum_{k\in\bracket{|\ReplayBuffer|}}P_k^{\alpha}}$, 
with a hyperparameter $\alpha>0$.
To correct for sampling bias, PER applies importance sampling weights:
\begin{equation}
\label{eq:weights}
u_i \propto \bigl ( {1}/ (|\ReplayBuffer|\cdot p_i) \bigr)^{\beta_{priority}},
\end{equation}
where $\beta_{priority}$ anneals from $\beta_0 \in (0,1)$ to $1$ during training to counteract bias in the learning updates, and the importance sampling weights are normalized to have maximum weight of $1$ for stability. While standard PER prioritizes TD-error, our method extends this framework to prioritize transitions based on onlineness and estimated advantage.

\textbf{Online RL with offline datasets.}
In this work, we study online RL, with offline datasets denoted as $\OfflineBuffer$ \citep{ball2023efficient}. These datasets consist of a set of tuples $(\state, \action, \reward, \state')$ generated from a specific MDP. 
A key characteristic of offline datasets is that they typically offer only partial coverage of state-action pairs. In other words, the set of states and actions in the dataset, denoted as {$\curlybracket{\paren{\state,\action}\in \OfflineBuffer}$}, represents a limited subset of the entire state space and action space, $\stateSpace \times \actionSpace$. 
Moreover, learning on the  data with incomplete coverage of state-action pairs potentially results in excessive value extrapolation during the learning process for methods using function approximation~\citep{fujimoto2019off}.
Our model, based on SAC \citep{haarnoja2018soft}, incorporates several effective strategies for RL with offline data, as outlined in RLPD \citep{ball2023efficient}, with the most crucial one being \emph{Clipped Double Q-Learning}. The maximization objective of Q-learning and the estimation uncertainty from value-based function approximation often  leads to value overestimation \citep{van2016deep}. To address this problem, \citet{fujimoto2018addressing} introduced Clipped Double Q-Learning (CDQ) as a means of mitigation. CDQ involves taking the minimum from an ensemble of two Q-functions for computing TD-backups. The targets for updating the critics are given by the equation $y=\reward\paren{\state,\action} + \gamma \min_{i=1,2} \QFunc_{\theta'_i}\paren{\state',\action'}$, where $\action'\sim \policy\paren{\cdot|\state'}$. See \appref{app:preliminaries} for more strategies to stabilize training.

\vspace{-0.3cm}
\section{Algorithm}
Here, we describe our active advantage-aligned sampling algorithm that is theoretically grounded in the performance difference lemma \secref{sec:theoretical-analysis}. At a high level, we do \textit{active} learning in that we select relevant offline transitions that align better with the data distribution generated by the current policy through a \textit{density} term, while also incorporating the \textit{advantage}, which measures the potential impact of the transition on policy improvement.
\vspace{-0.3cm}
\paragraph{Active density term.} To evaluate the onlineness/on-policyness of an offline transition, we estimate the density ratio 
\begin{align}\label{eq:densityratio}
\density\paren{\state,\action}:= {\stateDist^{\text{on}}\paren{\state,\action}}/ {\stateDist^{\text{off}}\paren{\state,\action}}
\end{align}
where $\stateDist^{\text{on}}\paren{\state,\action}$ denotes the state-action density of online samples in the online buffer $\ReplayBuffer$ and the $\stateDist^{\text{off}}\paren{\state,\action}$ denotes that of the offline dataset $\mathcal{D}$. Then, by identifying a transition with a high density ratio $\density\paren{\state,\action}$, we can effectively select a near-on-policy transition $\paren{\state,\action,\state'}$ from the offline dataset $\mathcal{D}$. 
This directly leverages the abundance of the offline data compared to online data, greatly improving the amount of transitions and diversity of coverage for policy improvement in each step.

Estimating the likelihoods $\stateDist^{\text{off}}\paren{\state,\action}$ and $\stateDist^{\text{on}}\paren{\state,\action}$ poses a challenge, as they could represent stationary distributions from mixture of complex policy.
To address this issue, we employ a method studied by \citet{lee2022offline} and \citet{sinha2022experience} for density ratio estimation that does not rely on likelihoods, namely by approximating $\density\paren{\state,\action}$ with a neural network $\density_{\psi}\paren{\state,\action}$.
To this end, we use variational representation of f-divergences~\citep{nguyen2007estimating} in the following way. Let $P$ and $Q$ be probability measures on a measurable space $\mathcal{X}$, with $P$ being absolutely continuous w.r.t. $Q$. Let $f\paren{y}:=y \log\frac{2y}{y+1}+\log\frac{2}{y+1}$, then the Jensen-Shannon (JS) divergence between $P$ and $Q$ can be written as $D_\textrm{JS}\paren{P||Q} = \int_{\mathcal{X}}f\paren{ (dP/dQ)(x)}dQ\paren{x}$. It then has the variational lower bound 
\begin{equation}\label{eq:lower_bound}
    \mathcal{L}^{\text{DR}}\paren{\psi}= \expctover{x\sim P}{f'\paren{\density_{\psi}\paren{x}}}- \expctover{x\sim \QFunc}{f^*\paren{f'\paren{\density_{\psi}\paren{x}}}},
\end{equation}
where $f^*$ denotes the convex conjugate of $f$, that is maximized at $w_\psi(x) = (dP/dQ)(x)$. This enables us to use a parametric model $\density_{\psi}\paren{x}$ to approximate the density ratio $\frac{dP}{dQ}$ via maximizing this $\mathcal{L}^{DR}$ lower bound of $D_\textrm{JS}\paren{P||Q}$.

\begin{algorithm*}[!t]
\caption{\algname}
\label{alg:a3rl}
\begin{algorithmic}[1]
\State Select ensemble size $\EnsembleNum$, UTD ratio $\GNum$, discount $\gamma$,  temperature $\alpha$, critic polyak weight $\rho$, batch size $\batch$.
\State Randomly initialize critic $\left\{\theta_i, i\in \bracket{\EnsembleNum}\right\}$ and actor $\phi$ parameters. Set targets $\theta'_i =\theta_i$.
\State Initialize offline priority buffer $\mathcal{B}^{\text{off}}$ with offline dataset $\mathcal{D}$, {online priority buffer $\mathcal{B}^{\text{on}} = \ReplayBuffer \leftarrow \emptyset$ }.
     \For{each environment step $t$}
       \State Take action $\action_t \sim \policy_{\phi}\paren{\cdot|\state_t}$, update online buffer $\mathcal{B}^{\text{on}} \leftarrow \mathcal{B}^{\text{on}} \cup \curlybracket{\paren{\state_t,\action_t,\reward_t,\state_{t+1}}}$.
       
       \For{$g=1, \ldots, G$}
            \State Uniformly sample $\frac{N}{2}$ transitions from $\mathcal{B}^{\text{off}}$ and similarly $\frac{N}{2}$ from $\mathcal{B}^{\text{on}}$. \textcolor{NavyBlue}{Update $w_\psi$ to maximize \eqref{eq:lower_bound}}.
            \State \textcolor{NavyBlue}{Sample with priority $\frac{N}{2}$ transitions from $\mathcal{B}^{\text{off}}$ and similarly $\frac{N}{2}$ from $\mathcal{B}^{on}$ to form batch $b$ of size $N$.}
            \State 

            \State With $b$, randomly subsample $\mathcal{Z} \subset [E]$ with $\abs{\mathcal{Z}} = 2$, set 
            \begin{equation*}
            y=\reward+\discFactor\Big(\min_{i\in \mathcal{Z}} \QFunc_{\theta'_i}\paren{\state',\action'} + \alpha \log \policy_{\phi}\paren{\action'|\state'}\Big), \quad \action'\sim \policy_{\phi}\paren{\cdot|\state'}.
             \end{equation*}

            \State \textcolor{Maroon}{Within each buffer, calculate importance weights $u$ via \eqref{eq:weights}.}
            \State \textcolor{Maroon}{Scale weights so that sum of offline weights = sum of online weights = $1/2$.}
            \State Perform SAC critic updates with target $y$ and target critic polyak updates.
       \EndFor
       \State Perform SAC actor updates and \textcolor{NavyBlue}{update priorities for $b$ according to \eqref{eq:adv_priority}.}
     \EndFor

\end{algorithmic}
\end{algorithm*}

In practice, since $\density_{\psi}\paren{x}$ approximates $\frac{dP}{dQ} \geq 0$, it has to be constrained to be non-negative (through the use of appropriate activation functions). Furthermore, since we're estimating $d^{\text{on}}/{d^{\text{off}}}$, for $\mathcal{L}^{\text{DR}}$ we sample from $\ReplayBuffer$ for $x \sim P$ and from $\mathcal{D}$ for $x \sim Q$.
To get more stable density ratio estimates $\underline{w}$, similar to \citet{lee2022offline}, we self-normalize estimated density ratio $w$ within $\mathcal{D}$ with a temperature hyperparameter $\zeta$:
\begin{equation*}
    \underline{w}_\psi(x) = {w_\psi(x)^\zeta}/{\mathbb{E}_{y \sim \mathcal{D}}\left[w_\psi(y)^\zeta\right]}.
\end{equation*}
\paragraph{Advantage term.}
{For the advantage term, to enhance robustness, we use the pessimistic CDQ $Q$ estimation through random subsampling of an ensemble $\left\{Q_{\theta_i}, Q_{\theta'_i}: i \in [E]\right\}$ of $Q$ and target $Q$ networks where $E$ is the ensemble size, similar to \citet{ball2023efficient}, while also incorporating uncertainty estimation for the advantage in the following way. Since $\AFunc^\policy(s, a) = \QFunc^\policy(s, a) - \VFunc^\policy(s) = \AFunc^\policy(s, a) = \QFunc^\policy(s, a) - \mathbb{E}_{a' \sim \policy(s)} \left[\QFunc^\policy(s, a')\right]$, we can get $E$ noisy estimates of the advantage by simply taking $A_i(s, a) = Q_{\theta_i}(s, a) - Q_{\theta_i}(s, a')$ where $a' \sim \policy$. The estimated advantage we then use is the advantage lower confidence bound (LCB) $\underline{A}(s, a)$, calculated as
\begin{equation*}
    \underline{A}(s, a) = \hat{A}(s, a) - \beta \hat{\sigma}(s, a),
\end{equation*}
where $\hat{A}(s, a)$ and $\hat{\sigma}(s, a)$ are respectively the sample mean and standard deviation of $\left\{A_i(s, a): i \in [E]\right\}$, and $\beta$ is a hyperparameter confidence level derived from the Student's $t$-distribution. The immediate upside of this calculation is that quantities like $Q_{\theta_i}(s, a)$ and $Q_{\theta_i}(s, a')$ are already computed during regular SAC updates, making this calculation computationally inexpensive.
\vspace{-0.3cm}
\paragraph{{Confidence-aware} active advantage-aligned sampling.} Here, we make the argument that relying solely on either of the two aforementioned terms would yield inevitable insufficiencies. Relying solely on the density ratio is insufficient; even if a transition appears to be relevant in the online context, it may still fail to contribute meaningfully to policy improvement, e.g., it might simply represent a "bad" transition in the value sense. Consider a transition $\paren{\state,\action,\state'}$. If the policy has previously encountered this state and taken the same action, or if the action performed in this state could potentially lead to a negative reward, such a transition would not be that helpful in contributing to policy improvement, regardless of how closely it aligns with on-policy data. On the other hand, as lengthily discussed in prior works (e.g. \citet{lee2022offline}), not addressing the offline-online distributional shift could also yield catastrophic results.

Therefore, we incorporate both estimated advantage and density ratio into our sampling strategy, ensuring that transitions are selected not only based on relevance, but also on their contribution to policy improvement. Concretely, this is done via setting a transition's priority to 
\begin{equation*}
p(s, a) = \begin{cases}
    \underline{\density}(s, a) \exp(\xi \underline{A}(s, a)) &\text{ if } (s, a) \in \mathcal{D} \\
    \exp(\xi \underline{A}(s, a)) &\text{ if } (s, a) \in \mathcal{R}
\end{cases},
\end{equation*}
or more succinctly, 
\begin{equation*}
p(s, a) = (\mathbb{I}^{\text{off}}\underline{\density}(s, a) + \mathbb{I}^{\text{on}})\exp(\xi \underline{A}(s, a)),
\end{equation*}
where $\mathbb{I}^{\text{off}}$ and $\mathbb{I}^{\text{on}}$ are indicator functions of the offline dataset and online buffer respectively.

First, we use the estimated advantage through the exponential term $\exp(\xi \underline{A}(s, a))$. The natural consequence is that the more advantage a transition has (and thus the greater its impact on learning is), the more priority it is assigned, making our sampling mechanism both adaptive and optimization-aware. Since there may be uncertainty and significant variance in estimating the advantage, we adopt the LCB \citep{liu2023blending} a conservative estimate, as previously detailed.

Second, we assign an additional weight of $\underline{w}(s, a)$ for transitions from the offline dataset. In other words, we prioritize offline samples based on both its "onlineness" and advantage value to retrieve near-on-policy samples that also provide policy improvement benefits. For the online transitions, they are themselves samples from the online data distribution, and thus we prioritize them purely based on advantage values under the current policy.
}

Third, we found that the implementation of this prioritized sampling scheme is most straightforward when done as a variation of RLPD \citep{ball2023efficient}, where we similarly give equal total weights to the offline dataset and the online replay buffer, but distribute them non-uniformly. This reaps the benefits of RLPD over previous offline-to-online RL methods, namely avoiding the phase separation between offline pretraining and online finetuning, and directly utilizing the offline dataset in conjunction with online interactions instead.

Last but not least, this advantage-aligned sampling strategy is not a heuristic-based approach. It is theoretically derived from the performance difference lemma~\citep{kakade2002approximately}, 
{which provides insights into its effectiveness and superiority over the random sampling approach. For this, see section \secref{sec:theoretical-analysis}.}

In summary, our approach broadens the distribution of samples used during online updates, centering around on-policy examples, thereby facilitating immediate value. The active advantage-aligned priority is as follows:
\begin{subequations} \label{eq:adv_priority}
    \begin{align}
    p(s, a) &= (\mathbb{I}^{\text{off}}\underline{\density}(s, a) + \mathbb{I}^{\text{on}})\exp(\xi \underline{A}(s, a)),\\
        \underline{A}(s, a) &= \hat{A}(s, a) - \beta \hat{\sigma}(s, a), \\
        \underline{w}_\psi(s, a) &= \frac{w_\psi(s, a)^\zeta}{\mathbb{E}_{(s', a') \sim \mathcal{D}}\left[w_\psi(s', a')^\zeta\right]}, 
    \end{align}
\end{subequations}

with hyperparameter $\beta$ associated with the LCB advantage estimated, $\zeta$ as the density self-normalization temperature and $\xi$ as the advantage temperature. Note that the standard PER temperature can be absorbed into these hyperparameters, and thus there is no need to define a separate one.

This active sampling process in our algorithm is highlighted in \textcolor{NavyBlue}{blue} in Algorithm~\ref{alg:a3rl}, while our approach to addressing sampling bias is highlighted in \textcolor{Maroon}{red}. We defer the details of the algorithm to \appref{app:alg:a3rl_details}.

\section{Theoretical Analysis}\label{sec:theoretical-analysis}
In this section, we derive the priority term theoretically from the performance difference lemma~\citep{kakade2002approximately} and show that our active advantage-aligned sampling strategy leads to improved policy performance. Furthermore, we establish a theoretical lower bound on the performance improvement gap under our sampling scheme.
\begin{restatable}[]{thm}{perfdiff}\label{thm:perfdiff} 
Suppose the Q-function class is uniformly bounded, and for any Q-function, the corresponding optimal policy lies within the policy function class.  
Let $\epsilon^t$ denote the $\ell_2$ error of the Q-function in the critic update step.  
Let $\policy^t$ be the policy at iteration $t$ in \algname, updated using priority-weighted sampling with $w(\state,\action)\exp(\xi\cdot\AFunc(\state,\action))$. Then, the following lower bound holds:
\begin{equation*}
    J_{\alpha}^{\pi^{t+1}} - J_{\alpha}^{\pi^t} \geq J_{\alpha}^{\pi^\star} - J_{\alpha}^{\pi^t} - C \sqrt{\epsilon^t} \sup_{s, a} \left| R^t(s, a;\xi) \right|,
\end{equation*}
where $J_\alpha^\pi = \mathbb{E}_{\state\sim\rho^{\policy},\action\sim\policy} \left[\sum_{t=0}^{\infty} \gamma^t \left(\reward_t+\alpha \mathcal{H}(\policy(\action|\state))\right) \right]$ is the entropy-regularized objective,  $J_{\alpha}^{\pi^\star} - J_{\alpha}^{\pi^t}$ represents the maximum possible improvement if the true Q-function were known, and the function $R^t(s, a;\xi)$ is given by:
\begin{align*}
    R^t(s, a;\xi) &= \left(\frac{{\pi^{t+1}}(a\given s)}{d^{\text{on}}(a\given s)}\right)^{1-\xi}\\ 
    &\cdot  
    \frac{\sum_{s', a'}d^{\text{on}}(a', s') \pi^{t+1}(a'\given s')^\xi}{d^{\text{on}}(a\given s)^\xi} \cdot \frac{d^{\pi^{t+1}}(s)}{d^{\text{on}}(s)}.
\end{align*}
\end{restatable}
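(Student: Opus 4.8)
The plan is to start from the performance difference lemma of Kakade and Langford, which states that for any two policies $\pi, \pi'$ we have $J_\alpha^{\pi'} - J_\alpha^{\pi} = \frac{1}{1-\gamma}\mathbb{E}_{s\sim d^{\pi'}, a\sim\pi'}\left[A_\alpha^\pi(s,a)\right]$, where $A_\alpha^\pi$ is the entropy-regularized advantage. Applying this with $\pi' = \pi^{t+1}$ and $\pi = \pi^t$, the improvement $J_\alpha^{\pi^{t+1}} - J_\alpha^{\pi^t}$ equals (up to the $\frac{1}{1-\gamma}$ factor) an expectation of $A^{\pi^t}$ under $d^{\pi^{t+1}}$ and $\pi^{t+1}$. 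The idea is then to split this into an \emph{idealized} term — what the update would achieve if the critic were exact, which is exactly $J_\alpha^{\pi^\star} - J_\alpha^{\pi^t}$ — plus an \emph{error} term that captures the mismatch between the true advantage and the one actually used, weighted by the priority distribution from which samples are drawn.

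The key observation linking the error term to $R^t(s,a;\xi)$ is a change-of-measure argument. The actor update in \algname is driven by samples drawn from the priority distribution proportional to $w(s,a)\exp(\xi A(s,a))$ over the replay buffer whose marginal is $d^{\text{on}}$, whereas the quantity that actually controls policy improvement is an expectation under $d^{\pi^{t+1}}$ and $\pi^{t+1}$. So I would write the difference between the target improvement and the realized improvement as $\mathbb{E}_{(s,a)\sim\text{priority}}\!\left[\text{(ratio)}\cdot(\text{critic error at }(s,a))\right]$, where the ratio is precisely the Radon–Nikodym derivative $d\mu_{\pi^{t+1}}/d\mu_{\text{priority}}$. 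Unpacking the priority normalization $\sigma_i \propto w(s,a)\exp(\xi A(s,a))$ and the normalizing constant $\sum_{s',a'} d^{\text{on}}(s',a')\pi^{t+1}(a'\mid s')^\xi$ (which arises once the advantage-exponential is matched against the softmax form of the optimal regularized policy $\pi^{t+1}(a\mid s)\propto \exp(Q/\alpha)$), together with the factor $d^{\pi^{t+1}}(s)/d^{\text{on}}(s)$ for the state marginal and the $(\pi^{t+1}/d^{\text{on}})^{1-\xi}$ factor for the action conditional, reproduces exactly $R^t(s,a;\xi)$. Then Cauchy–Schwarz (or Hölder) separates the critic error from the ratio: $\mathbb{E}[\,R^t \cdot \text{error}\,] \le \sup_{s,a}|R^t(s,a;\xi)|\cdot \mathbb{E}[|\text{error}|] \le \sup_{s,a}|R^t(s,a;\xi)|\cdot C'\sqrt{\epsilon^t}$, the last step using that the $\ell_2$ critic error is $\epsilon^t$ and that the advantage error is controlled by the $Q$-error (here the uniform boundedness of the $Q$-class and the assumption that the induced optimal policy stays in the policy class are used to ensure the idealized term is genuinely $J_\alpha^{\pi^\star}-J_\alpha^{\pi^t}$ and that the translation from $Q$-error to advantage/policy error is Lipschitz). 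Folding $\frac{1}{1-\gamma}$ and $C'$ into the constant $C$ gives the stated bound.

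I would carry the steps out in this order: (i) invoke the performance difference lemma and rewrite $J_\alpha^{\pi^{t+1}}-J_\alpha^{\pi^t}$ as a $d^{\pi^{t+1}}$-weighted expectation of $A_\alpha^{\pi^t}$; (ii) identify the policy that the exact-critic update would return as $\pi^\star$ (restricted to one-step improvement, using the closed-form softmax maximizer of the SAC objective and the realizability assumption), so that the idealized improvement is $J_\alpha^{\pi^\star}-J_\alpha^{\pi^t}$; (iii) express the realized minus idealized improvement as a single expectation under the priority sampling distribution, perform the change-of-measure bookkeeping to extract the factor $R^t(s,a;\xi)$, being careful about the three separate ratio pieces (state marginal, action conditional raised to $1-\xi$, and the normalizing constant of the exponential-advantage weights); (iv) apply Cauchy–Schwarz and the relation between $Q$-error and advantage-error to bound the residual by $C\sqrt{\epsilon^t}\sup_{s,a}|R^t(s,a;\xi)|$.

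The main obstacle I anticipate is step (iii): making the change-of-measure bookkeeping rigorous and showing that the priority weights $w(s,a)\exp(\xi A(s,a))$, once normalized and combined with the buffer marginal $d^{\text{on}}$, produce \emph{exactly} the three-factor expression $R^t$ rather than merely something of the same order. This requires pinning down precisely which distribution the actor loss in Algorithm~\ref{alg:a3rl} is an (importance-weighted) estimate of, keeping the entropy term's contribution to the advantage consistent throughout, and correctly handling the $\xi$-dependent split between the part of the exponential advantage that ``cancels'' against $\pi^{t+1}$'s softmax form and the leftover $(\pi^{t+1}/d^{\text{on}})^{1-\xi}$ factor. A secondary subtlety is justifying that the one-step update with an exact critic yields improvement equal to the \emph{global} gap $J_\alpha^{\pi^\star}-J_\alpha^{\pi^t}$; this likely needs either an assumption that the update is a full policy-iteration step or a reinterpretation of $\pi^\star$ as the one-step-greedy policy, and I would state that caveat explicitly.
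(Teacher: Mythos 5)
Your proposal follows essentially the same route as the paper's proof: the decomposition $J_\alpha^{\pi^{t+1}}-J_\alpha^{\pi^t}=(J_\alpha^{\pi^{t+1}}-J_\alpha^{\pi^\star})+(J_\alpha^{\pi^\star}-J_\alpha^{\pi^t})$ with $\pi^\star$ taken to be the one-step soft-greedy (KL-projection) policy under the current $Q$ — exactly the caveat you flag — followed by the change of measure from the priority-weighted sampling distribution $\rho(s,a)\propto \mu(s,a)\sigma(s,a)=d^{\text{on}}(s,a)\,\hat\pi(a\mid s)^\xi$ to $d^{\pi^{t+1}}$, and a Cauchy--Schwarz step that separates the $\ell_2$ critic error from $\sup_{s,a}|R^t(s,a;\xi)|$. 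The three-factor bookkeeping you anticipate as the main obstacle is carried out in the paper precisely as you describe, so your plan is correct and matches the paper's argument.
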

The proof is provided in the 
Appendix~\ref{app:theory_proof}.
We note that the coefficient $R^t(s, a;\xi)$ is not a  %
tight bound, since it is based on the supremum norm and therefore can be dominated by a single $(s, a)$ pair. A sharper result could be obtained by measuring distribution shift in the $\ell_2$ norm (or some other weaker norm). We nevertheless adopt the simpler supremum‐norm bound here for clarity and to highlight the core intuition behind why advantage reweighting yields improvement, as will be detailed in the following.

\renewcommand{\thesubfigure}{\alph{subfigure}}  %
\begin{figure*}[t!]
   \centering
    \begin{subfigure}{.24\textwidth}
        \centering
        \includegraphics[width=\textwidth,  clip={0,0,0,0}]{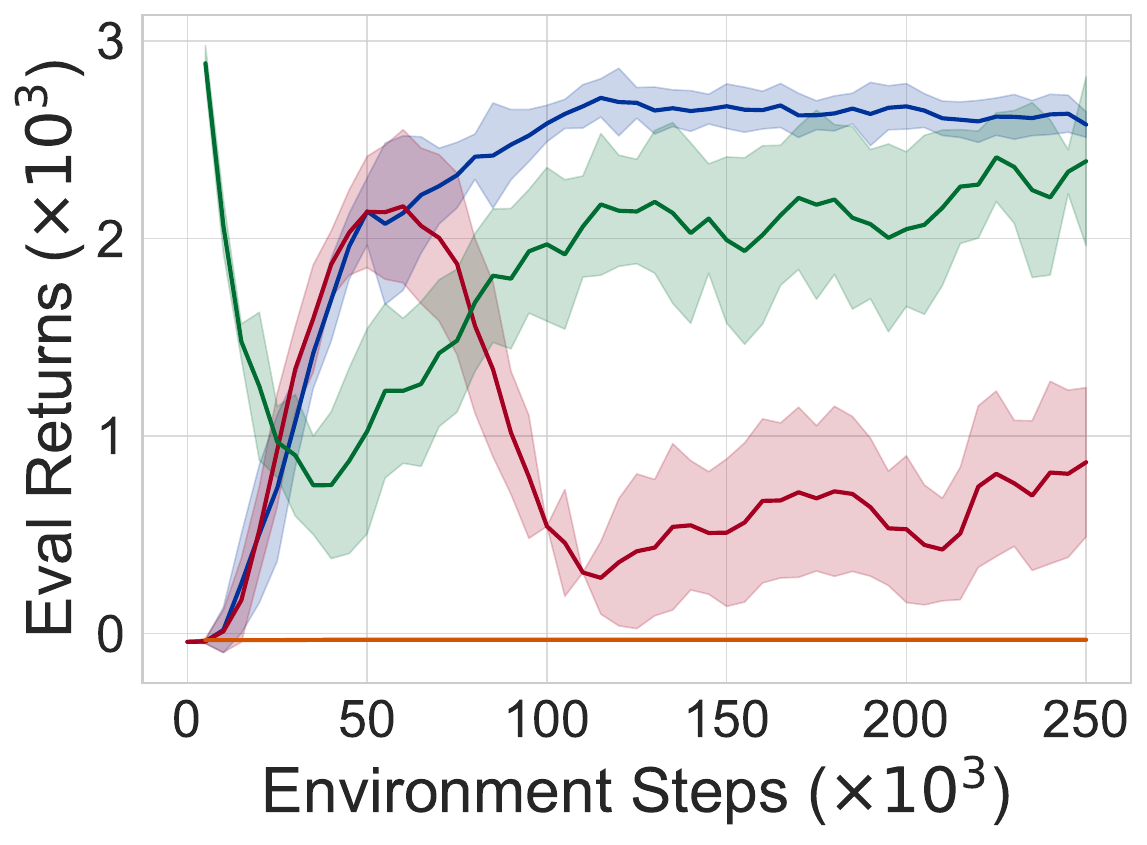}
        \caption{door-expert} \label{fig:exp:main:door-expert}
    \end{subfigure}\hfil
    \begin{subfigure}{.24\textwidth}
        \centering
        \includegraphics[width=\textwidth,  clip={0,0,0,0}]{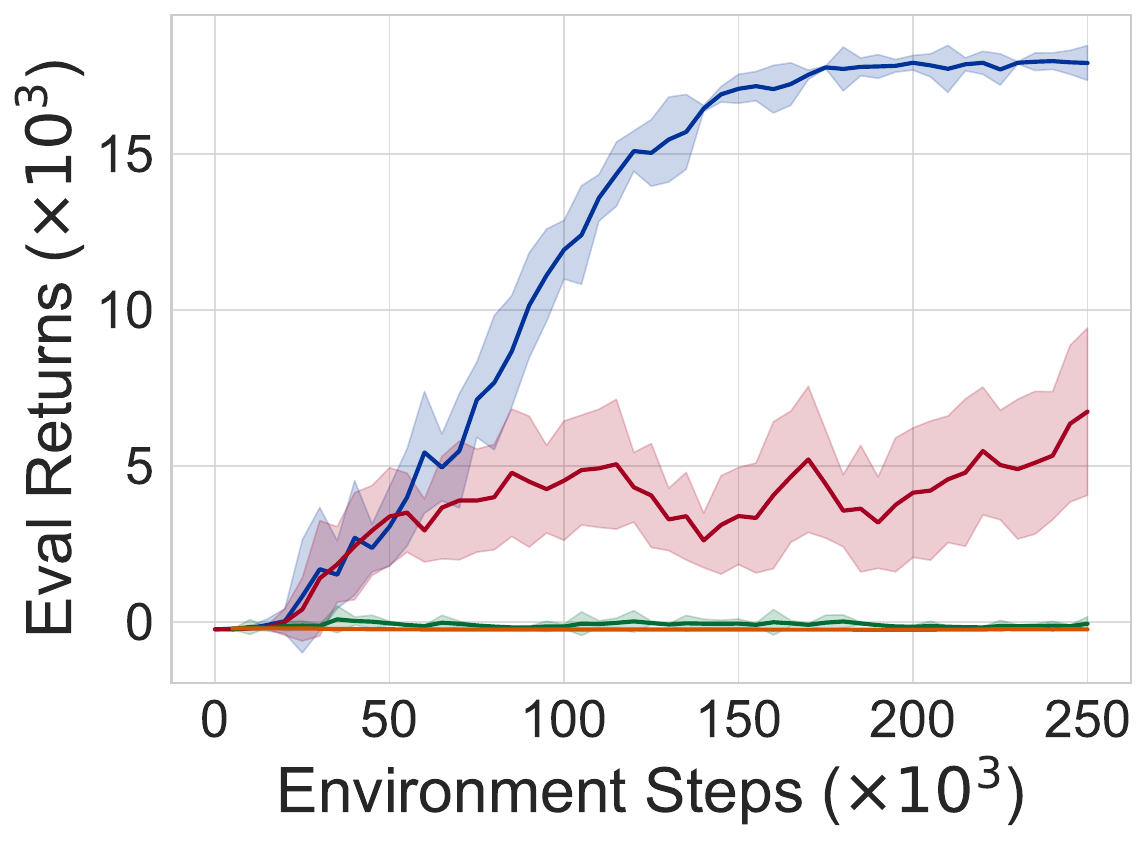}
        \caption{hammer-cloned} \label{fig:exp:main:hammer-cloned}
    \end{subfigure}\hfil
    \begin{subfigure}{.24\textwidth}
        \centering
        \includegraphics[width=\textwidth,  clip={0,0,0,0}]{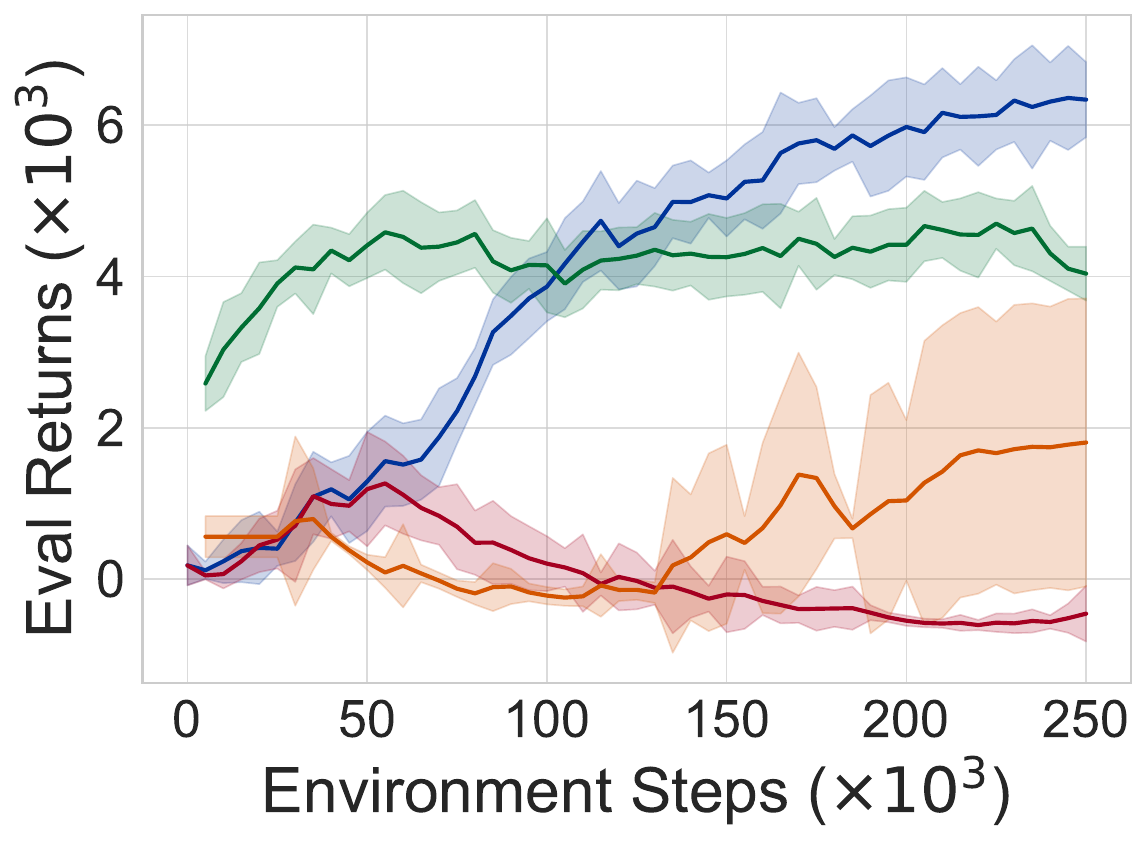}
        \caption{pen-cloned} \label{fig:exp:main:pen-cloned}
    \end{subfigure}\hfil
    \begin{subfigure}{.24\textwidth}
        \centering
        \includegraphics[width=\textwidth,  clip={0,0,0,0}]{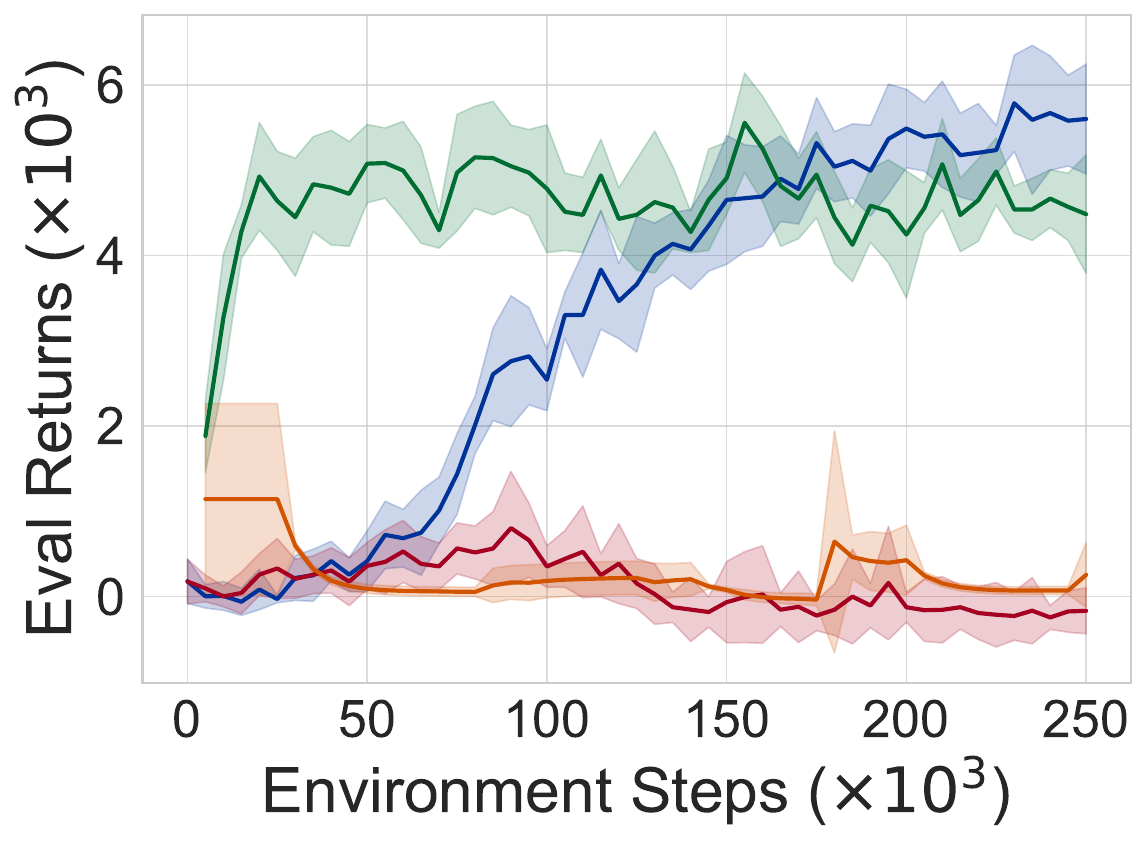}
        \caption{pen-human} \label{fig:exp:main:pen-human}
    \end{subfigure}\hfil
    \begin{subfigure}{.24\textwidth}
        \centering
        \includegraphics[width=\textwidth,  clip={0,0,0,0}]{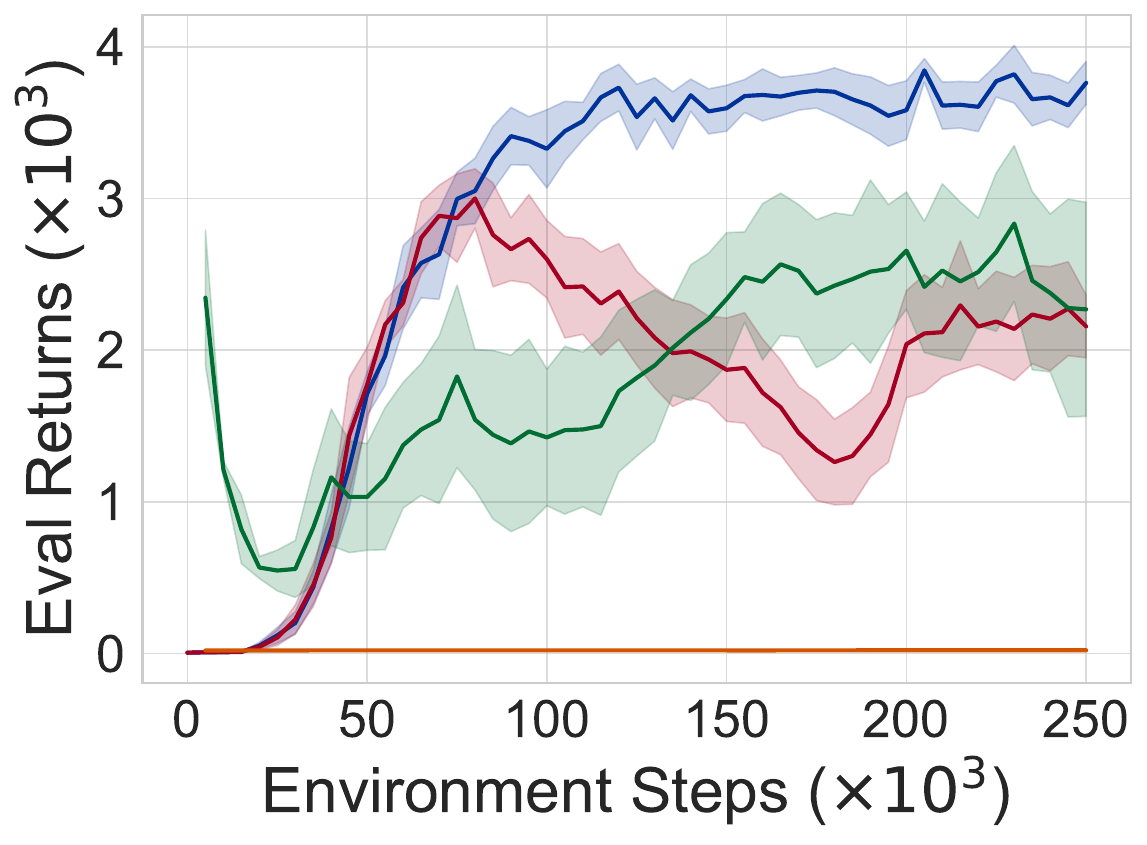}
        \caption{relocate-expert} \label{fig:exp:main:relocate-expert}
    \end{subfigure}\hfil
    \begin{subfigure}{.24\textwidth}
        \centering
        \includegraphics[width=\textwidth,  clip={0,0,0,0}]{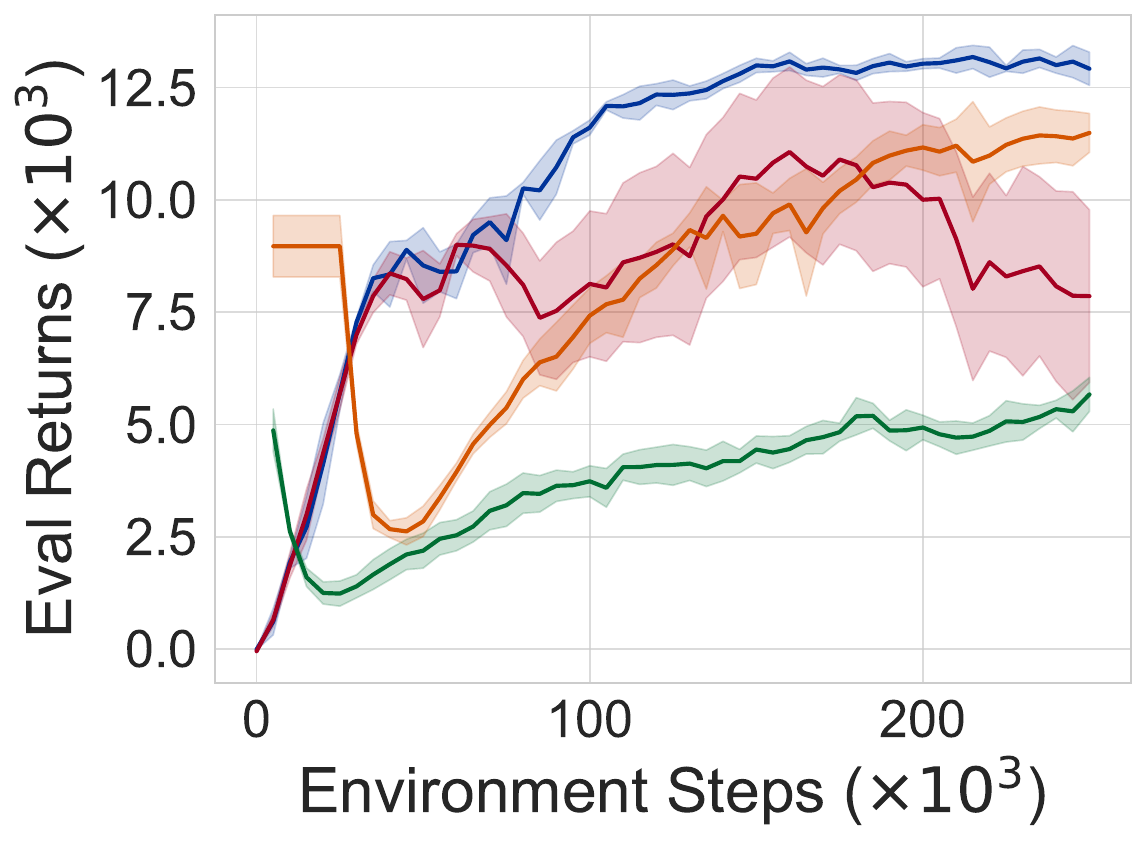}
        \caption{halfcheetah-medium} \label{fig:exp:main:halfcheetah-medium}
    \end{subfigure}\hfil
    \begin{subfigure}{.24\textwidth}
        \centering
        \includegraphics[width=\textwidth,  clip={0,0,0,0}]{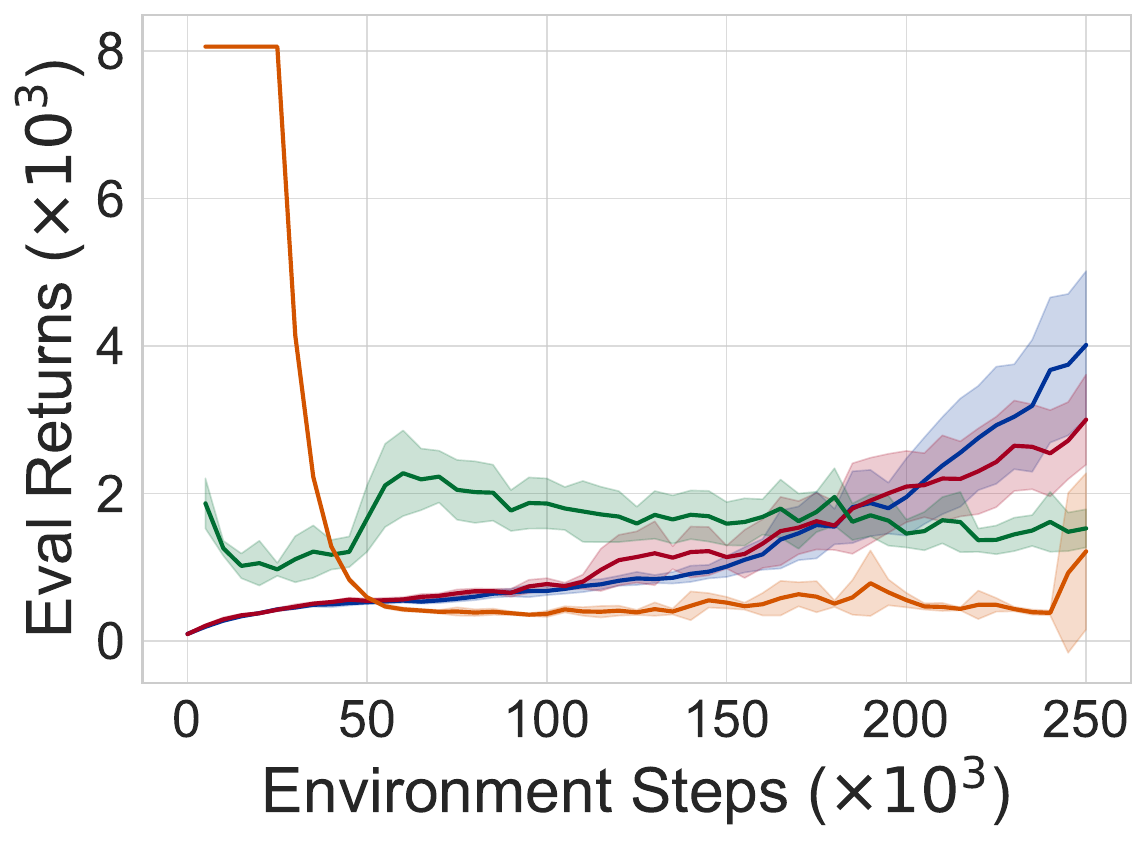}
        \caption{humanoid-medium} \label{fig:exp:main:humanoid-medium}
    \end{subfigure}\hfil
    \begin{subfigure}{.24\textwidth}
        \centering
        \includegraphics[width=\textwidth,  clip={0,0,0,0}]{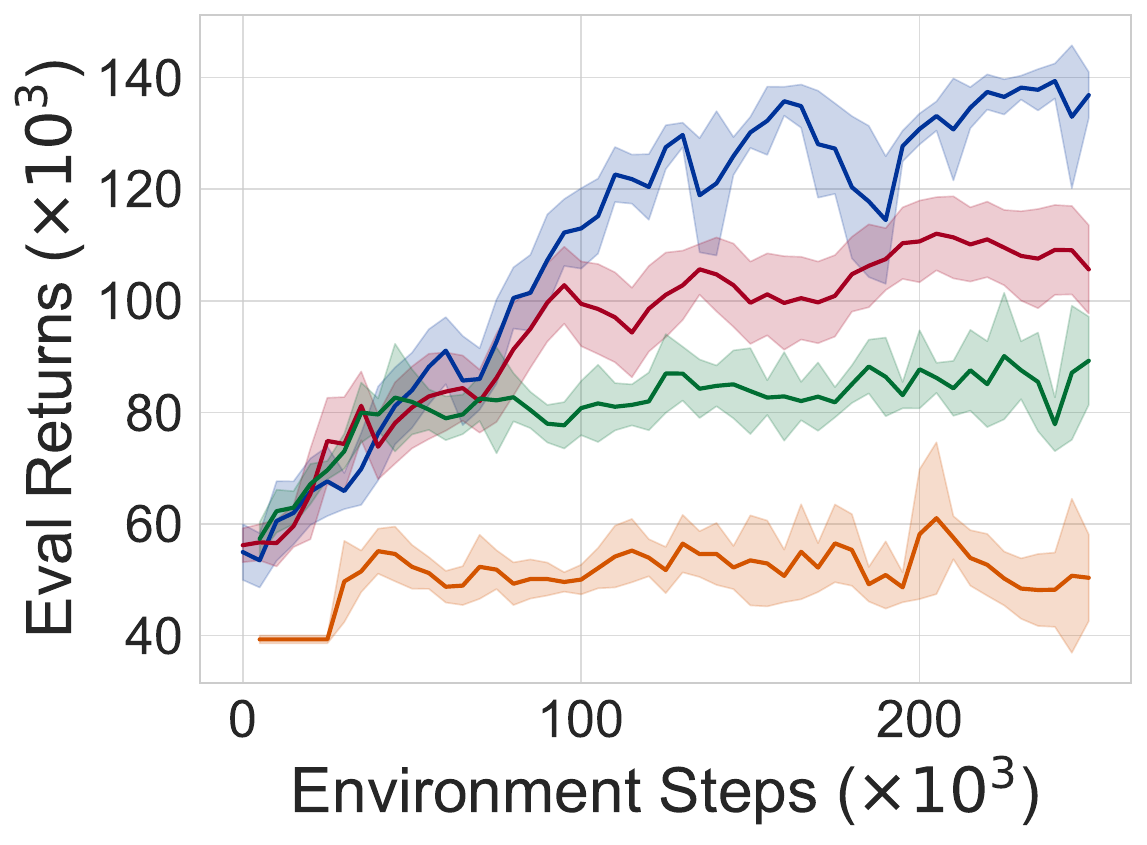}
        \caption{humanoidstandup-medium} \label{fig:exp:main:humanoidstandup-medium}
    \end{subfigure}\hfil

    \caption{\textbf{Main results.} A comparison between \textcolor{myblue}{\algname} in \textcolor{myblue}{blue}, the SOTA baseline \textcolor{myred}{RLPD} \citep{ball2023efficient} in \textcolor{myred}{red}, \textcolor{mygreen}{PEX} \citep{PEX} in \textcolor{mygreen}{green} and \textcolor{myorange}{BOORL} \citep{hu2024bayesiandesignprinciplesofflinetoonline} in \textcolor{myorange}{orange} on various D4RL benchmark tasks.
     \textcolor{myblue}{\algname} scores the best in all benchmarks, and the gap is especially large for D4RL Adroit tasks (door, hammer, pen, relocate), which are harder due to their larger action dimensionality. Both \textcolor{mygreen}{PEX} and \textcolor{myorange}{BOORL} require an offline pretraining process of 1M gradient steps each, and only the online finetuning phase with an initial pretrained jumpstart is shown here. In this view, \textcolor{myblue}{\algname} is much more computationally efficient in achieving the same level of performance.
    }\label{fig:exp:main}
        \vspace{-3mm}
\end{figure*}

\renewcommand{\thesubfigure}{\alph{subfigure}}  %

{\bf Comparison to random sampling.} The fundamental concept behind proving that our sampling technique surpasses random sampling and contributes to positive policy improvement involves initially applying the performance difference lemma. 
This approach yields the performance differential term $J\paren{\policy^{t+1}}-J\paren{\policy^t}$ between the updated policy and the current policy.
Our goal is to demonstrate that this term is non-negative under our sampling priority. To do this, we prove that by a shift of distribution, this term is no less than the gap
\begin{align}
J^{\pi^\star} - J^{\pi^t} - C \sqrt{\epsilon^t} \sup_{s, a}  |  {d^{\pi^{t+1}}(s, a)} / {\rho(s, a)} |.
\end{align}
When looking at the distribution shift
\begin{align*}
    \frac{d^{\pi^{t+1}}(s, a)}{\rho(s, a)} =&\left(\frac{{\pi^{t+1}}(a\given s)}{d^{\text{on}}(a\given s)}\right)^{1-\xi}\\ 
     &\cdot \frac{\sum_{s', a'}d^{\text{on}}(a', s') \pi^{t+1}(a'\given s')^\xi}{d^{\text{on}}(a\given s)^\xi} \cdot \frac{d^{\pi^{t+1}}(s)}{d^{\text{on}}(s)},
\end{align*} 
we notice the shift between online/offline dataset is canceled, and the remaining terms comprise a shift term $ {d^{\pi^{t+1}}(s)}/{d^{\text{on}}(s)}$ that characterizes how well the online data cover the visitation measure induced by the next policy, and another term that characterizes the shift in policy. In the sequel, we will see through an example why using some proper $\xi$ helps reduce the shift in policy.

{\bf Why does weighting by advantage help?}
We show that under certain conditions, the ratio $R^t(s, a)$ can decrease for increased value of $\xi$. Since $\xi$ does not influence the ratio between the state distribution, let us just consider the bandit case with ratio 
\begin{align*}
    R^t(a;\xi) = \left(\frac{{\pi^{t+1}}(a)}{d^{\text{on}}(a)}\right)^{1-\xi}  \cdot \frac{\sum_{a'}d^{\text{on}}(a') \pi^{t+1}(a')^\xi}{d^{\text{on}}(a)^\xi}.
\end{align*}
We illustrate the results of Theorem \ref{thm:perfdiff} on the bandit setting because its visitation measure reduces directly to the policy distribution—eliminating any dependence on a transition kernel—and note that the same argument carries over to MDPs with deterministic transitions.
Moreover, the argument will still provide sufficient insight.
Suppose the online data distribution $d^{\text{on}}(a)\propto \exp(\beta_1 r(a))$ for some parameter $\beta$ while the policy $\pi^{t+1}(a) \propto \exp(\beta_2 r(a))$ for some parameter $\beta_2 > \beta_1$. This is reasonable since the policy converges faster than the online buffer to the optimal policy. 
Then we have the following lemma.
\begin{restatable}[]{lem}{bandit}\label{lem:bandit}
For the bandit case with $d^{\text{on}}(a)\propto \exp(\beta_1 r(a))$ and $\pi^{t+1}(a) \propto \exp(\beta_2 r(a))$ for $\beta_2 > \beta_1$, the coefficient $\sup_{a}R^t(a;\xi)$ decreases as $\xi$ increases within the range $\xi\in(0, 1 - \beta_1/\beta_2)$. 
\end{restatable}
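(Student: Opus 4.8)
The plan is to reduce $R^t(a;\xi)$ to a transparent closed form by substituting the assumed Gibbs shapes of the two distributions and cancelling the normalizers. Write $d^{\text{on}}(a) = \exp(\beta_1 r(a))/Z_1$ and $\pi^{t+1}(a) = \exp(\beta_2 r(a))/Z_2$ for the appropriate normalizers $Z_1, Z_2$, and abbreviate $S(\xi) := \sum_{a'}\exp((\beta_1+\xi\beta_2)r(a'))$, noting that $Z_1$ and $Z_2$ do not depend on $\xi$. Substituting into the bandit expression from Theorem~\ref{thm:perfdiff}, the factor $(\pi^{t+1}(a)/d^{\text{on}}(a))^{1-\xi}$ yields $(Z_1/Z_2)^{1-\xi}\exp((\beta_2-\beta_1)(1-\xi)r(a))$, and the second factor $\sum_{a'}d^{\text{on}}(a')\pi^{t+1}(a')^\xi / d^{\text{on}}(a)^\xi$ yields $S(\xi)\,Z_1^{\xi-1}Z_2^{-\xi}\exp(-\xi\beta_1 r(a))$. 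Multiplying, every power of $Z_1$ cancels, the powers of $Z_2$ collapse to $Z_2^{-1}$, and the two exponents add to $((1-\xi)\beta_2-\beta_1)r(a)$, giving
\begin{align*}
R^t(a;\xi) \;=\; \frac{S(\xi)}{Z_2}\,\exp\!\bigl(\bigl((1-\xi)\beta_2-\beta_1\bigr)r(a)\bigr).
\end{align*}

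Next I would compute the supremum over $a$. The hypothesis $\xi\in(0,\,1-\beta_1/\beta_2)$ is precisely the range on which the coefficient $c(\xi):=(1-\xi)\beta_2-\beta_1$ is strictly positive, so $a\mapsto\exp(c(\xi)r(a))$ is increasing in $r(a)$ and the supremum is attained at any action with $r(a)=r_{\max}:=\max_{a}r(a)$; crucially the maximizing action is the same for all $\xi$ in the interval. Hence $\sup_a R^t(a;\xi) = g(\xi) := S(\xi)Z_2^{-1}\exp(c(\xi)r_{\max})$, and it remains to show that $g$ is nonincreasing on the interval.

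Finally I would differentiate $\log g$. Since $Z_2$ is constant in $\xi$,
\begin{align*}
\frac{d}{d\xi}\log g(\xi) \;=\; \frac{S'(\xi)}{S(\xi)} - \beta_2 r_{\max} \;=\; \beta_2\Bigl(\mathbb{E}_{a'\sim q_\xi}\!\bigl[r(a')\bigr] - r_{\max}\Bigr) \;\le\; 0,
\end{align*}
where $q_\xi(a')\propto\exp((\beta_1+\xi\beta_2)r(a'))$ is the associated Gibbs distribution, using $S'(\xi)/S(\xi)=\beta_2\,\mathbb{E}_{a'\sim q_\xi}[r(a')]$ and the elementary bound $\mathbb{E}_{q_\xi}[r]\le r_{\max}$ (with strict inequality unless $r$ is constant, the degenerate bandit). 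Therefore $g$, and hence $\sup_a R^t(a;\xi)$, strictly decreases on $(0,\,1-\beta_1/\beta_2)$. The only slightly fiddly part is the normalizer bookkeeping in the first step; the monotonicity itself is immediate once one recognizes $S'(\xi)/S(\xi)$ as a Gibbs expectation of $r$ that never exceeds $r_{\max}$. I would also flag the minor point that, because the argmax is $\xi$-independent over the stated range, it is legitimate to differentiate $g$ directly rather than an envelope function.
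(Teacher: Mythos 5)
Your proof is correct and follows essentially the same route as the paper's: both reduce $R^t(a;\xi)$ to $\exp\bigl(((1-\xi)\beta_2-\beta_1)r(a)\bigr)$ times a $\xi$-dependent normalizing sum, observe that on the stated range the positive exponent coefficient pins the supremum to the $\xi$-independent action $\tilde a=\argmax_{a'}r(a')$, and then show the log-derivative in $\xi$ equals $\beta_2\bigl(\mathbb{E}_{q_\xi}[r]-r_{\max}\bigr)\le 0$ because a Gibbs/softmax average of $r$ never exceeds its maximum. Your version is marginally more explicit about the normalizer bookkeeping (the paper works up to proportionality and with the ratio $R(a;\xi)/R(a;0)$), but the argument is the same.
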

This lemma justifies that within a proper range of $\xi$, adding more advantage weighting would benefit learning by reducing the distributional shift.

\definecolor{myblue}{RGB}{0, 51, 153}      %
\definecolor{myred}{RGB}{165, 0, 33}       %
\definecolor{mygreen}{RGB}{0, 110, 51}     %
\definecolor{myorange}{RGB}{211, 84, 0}    %
\definecolor{myplum}{RGB}{110, 0, 140}     %
\definecolor{myteal}{RGB}{0, 120, 130}     %
\definecolor{mygold}{RGB}{180, 130, 0}     %
\definecolor{myslate}{RGB}{50, 50, 120}    %
\definecolor{myrose}{RGB}{160, 0, 80}      %
\definecolor{myolive}{RGB}{90, 100, 0}     %
\definecolor{myrust}{RGB}{160, 50, 0}      %
\definecolor{mynavy}{RGB}{0, 30, 100}      %
\definecolor{myburgundy}{RGB}{130, 0, 50}  %
\definecolor{mypine}{RGB}{0, 80, 60}       %
\definecolor{myindigo}{RGB}{55, 0, 160}    %
\definecolor{mybronze}{RGB}{140, 90, 10}   %
\newcommand{\redtext}[1]{\textcolor{red}{#1}}
\section{Experiments}\label{experiments}

{\bf Environments.}
We evaluate \algname on tasks from the D4RL benchmark~\citep{fu2020d4rl} with Minari offline datasets \citep{minari}. Each environment has offline datasets composed of trajectories of a wide range of competence, ranging from simple to to expert on some, and cloned to expert on others. We defer additional details on the environments to the Appendix~\ref{app:experiment}.

{\bf Setup.}
We employ the basic setup of the SAC networks as recommended by \citep{ball2023efficient}. In particular, for critic networks and target critic networks, we have an ensemble of size $E = 10$ each, as well as entropy regularization. We use simpler multi-layer perceptrons of 2 layers of size 256 each to see if the agent is able to learn with less model capacity. Runs are compared over 250K online environment steps, with checkpoint evaluation at every 5K steps. Shaded areas represent one standard error of the mean based on ten seeds.

{\bf Baseline methods.}
For our main results, we compare our \textcolor{myblue}{\algname} algorithm against three baselines:
(1) \textcolor{myred}{RLPD}~\citep{ball2023efficient}, regarded as the SOTA baseline for addressing online RL with offline datasets, specifically having attained SOTA performance in these tasks;
(2) \textcolor{mygreen}{PEX} \citep{PEX}; and (3) \textcolor{myorange}{BOORL} \citep{hu2024bayesiandesignprinciplesofflinetoonline}. Both \textcolor{mygreen}{PEX} and \textcolor{myorange}{BOORL} require an offline pretraining process, and thus only results from their online finetuning phase with an initial jumpstart are shown in our plots. For the training of \textcolor{myblue}{\algname}, we found it beneficial to simply run \textcolor{myred}{RLPD} for an initial phase of $1/4$ the total number of environment steps and the full \textcolor{myblue}{\algname} for the remaining $3/4$. This enables the $Q$ and density networks to be stabilized for more accurate density and advantage estimations in the absence of an explicit offline pretraining phase.

We then conduct several ablation studies to investigate the sensitivity and effect of each part of our algorithm: (4) ablation on the density term $(\zeta = 0)$ yields a variant of \algname that only uses advantage estimations; (5) ablation on the advantage term $(\xi = 0)$ yields a density-only-based implementation that is similar to Off2On~\citep{lee2022offline}, an offline-to-online RL method, which also does density-based prioritized sampling but requires an offline pretraining phase; (6) ablation on the use of the LCB over the ensemble mean $(\beta = 0)$ yields a non-confidence-aware variant of \algname. We also experiment with (7) removing the offline dataset altogether to yield an online version of \algname, which naturally no longer involves the density term; and (8) comparing our prioritization scheme against the commonly used PER scheme. Please refer to Appendix~\ref{app:experiment} for our full list of runs on more diverse environments and datasets.

\vspace{-0.3cm}
\subsection{Main Results}
\vspace{-0.2cm}

\begin{figure*}[!t]
    \centering
    \begin{subfigure}{.24\textwidth}
        \centering
        \includegraphics[width=\textwidth,  clip={0,0,0,0}]{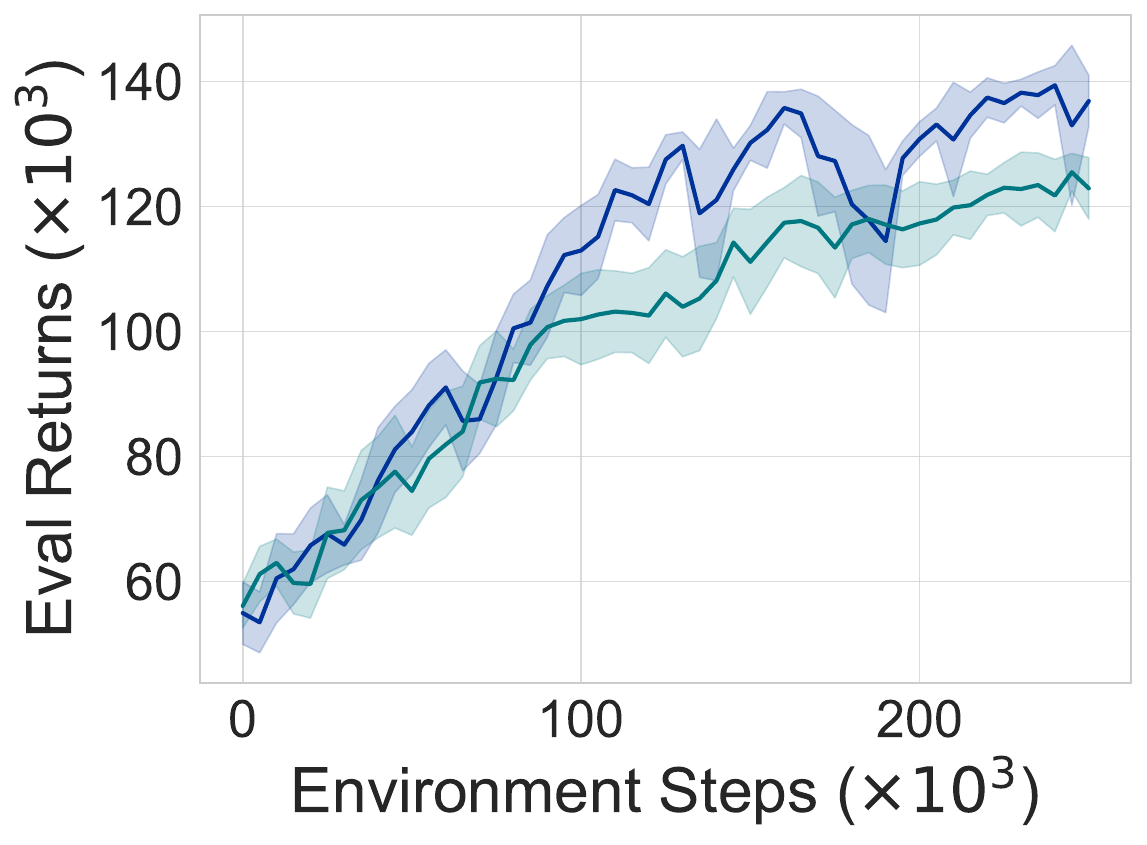}
        \caption{Density, $\zeta = 0$}\label{fig:ablation:density}
    \end{subfigure}\hfil
    \begin{subfigure}{.24\textwidth}
        \centering
        \includegraphics[width=\textwidth,  clip={0,0,0,0}]{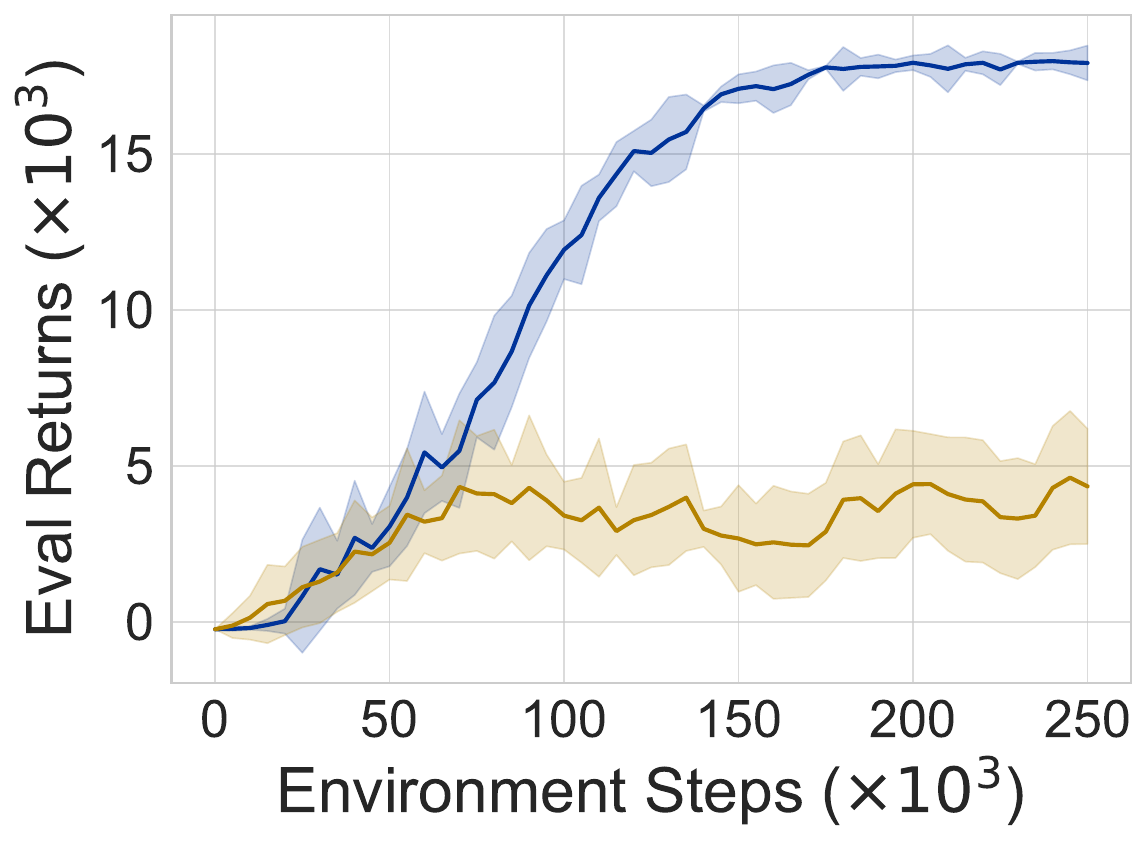}
        \caption{Advantage, $\xi = 0$}\label{fig:ablation:advantage}
    \end{subfigure}\hfil
    \begin{subfigure}{.24\textwidth}
        \centering
        \includegraphics[width=\textwidth,  clip={0,0,0,0}]{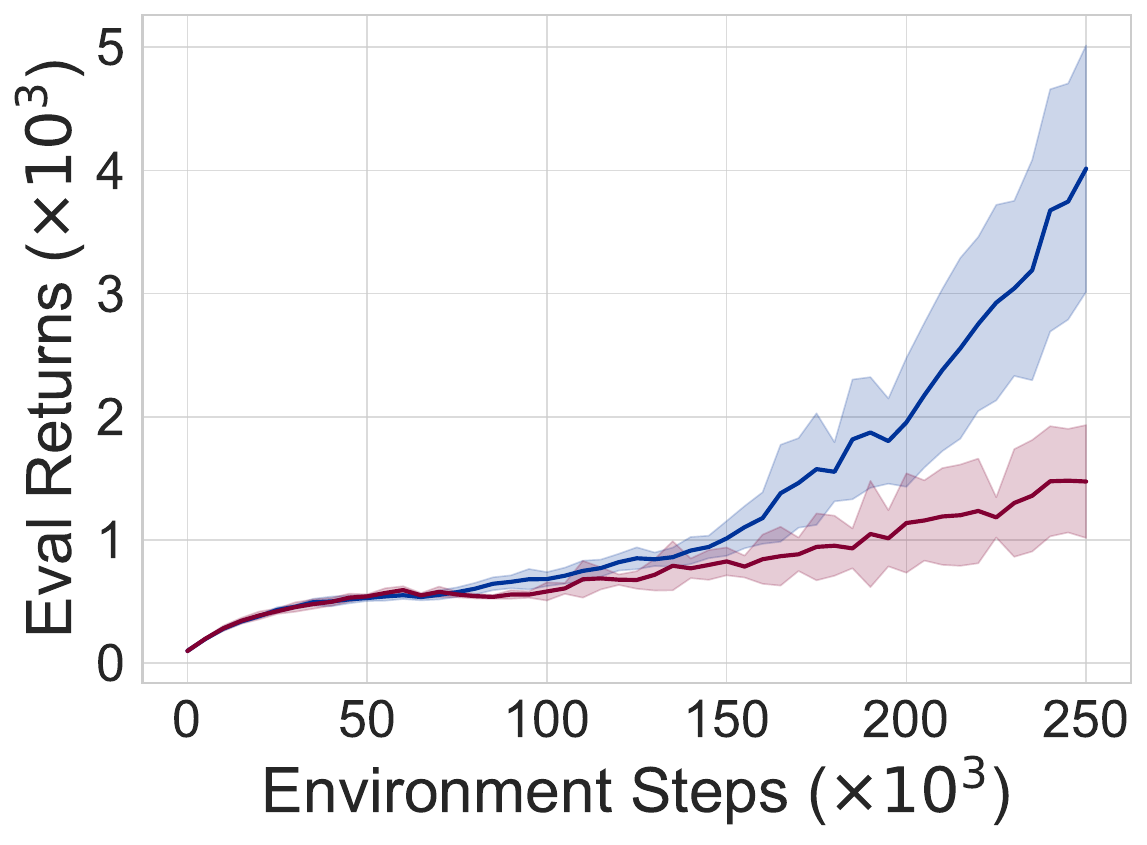}
        \caption{LCB, $\beta = 0$}\label{fig:ablation:lcb}
    \end{subfigure}\hfil
    \begin{subfigure}{.24\textwidth}
        \centering
        \includegraphics[width=\textwidth,  clip={0,0,0,0}]{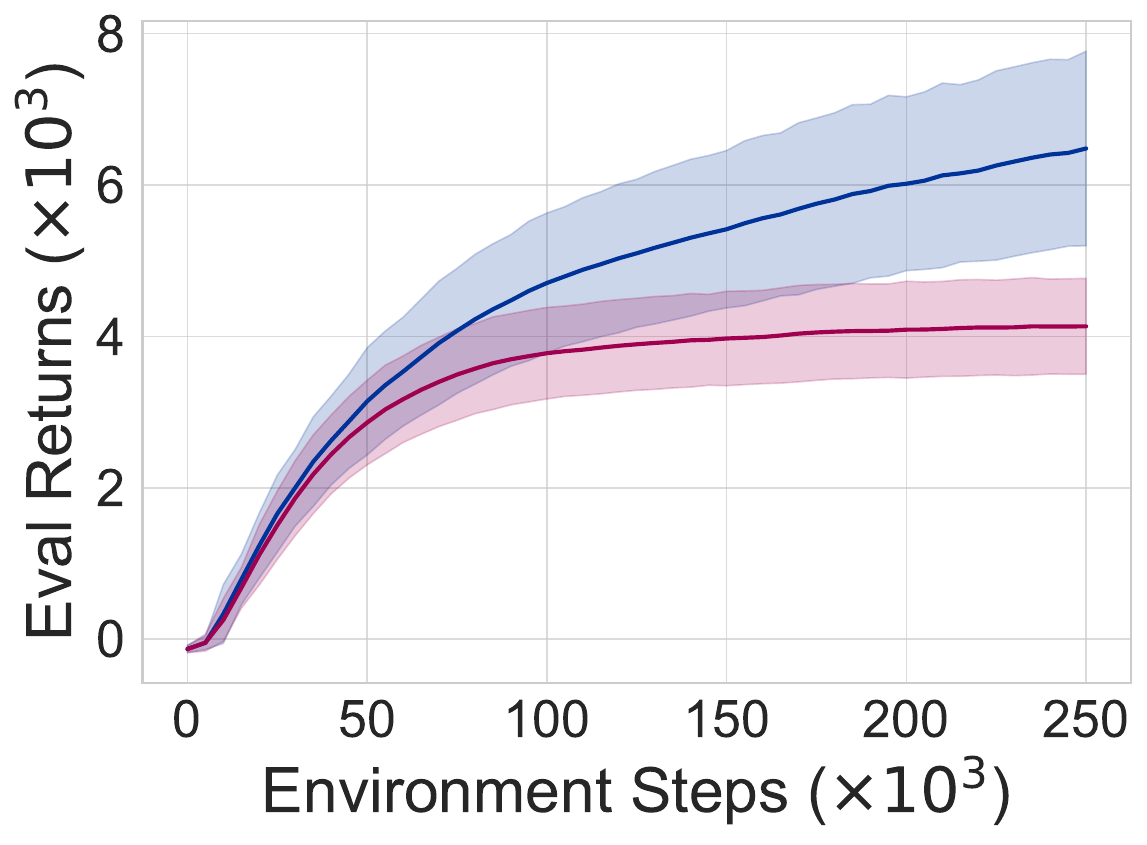}
        \caption{Online}\label{fig:ablation:online}
    \end{subfigure}\hfil
    \begin{subfigure}{.24\textwidth}
        \centering
        \includegraphics[width=\textwidth,  clip={0,0,0,0}]{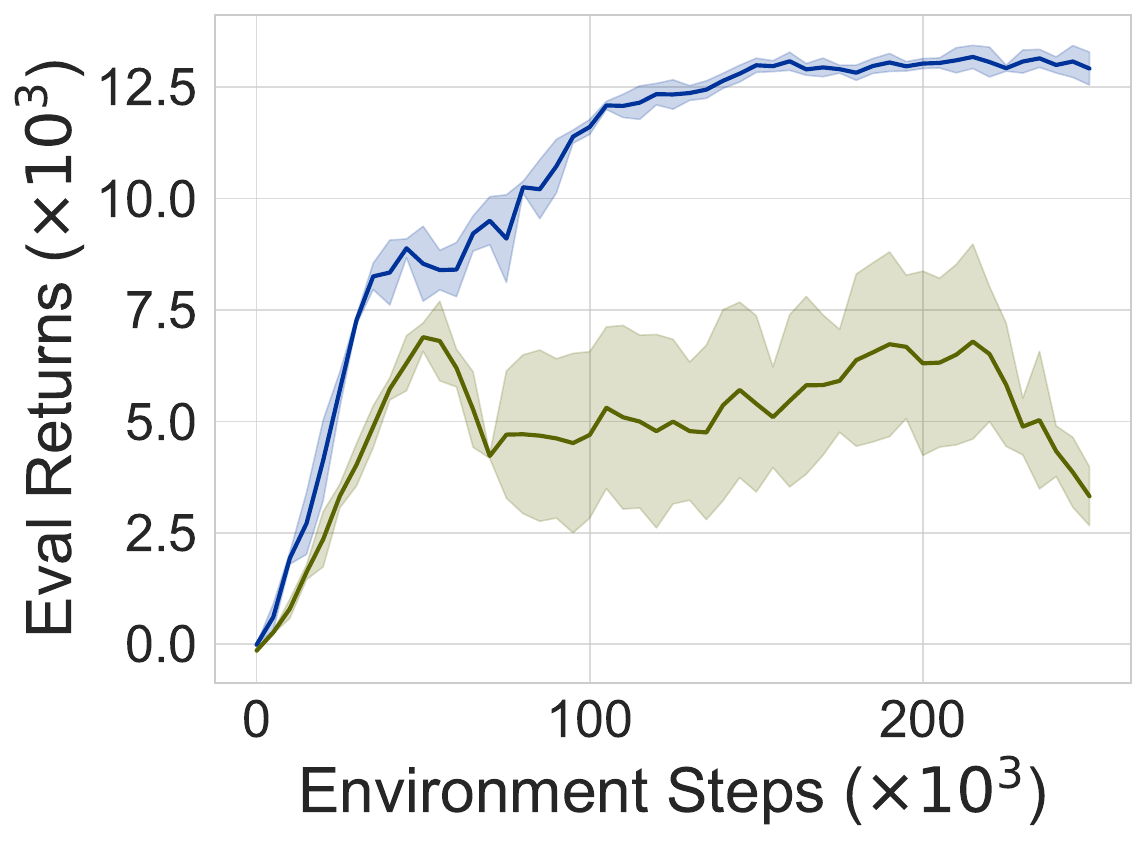}
        \caption{TD}\label{fig:ablation:td}
    \end{subfigure}\hfil
    \begin{subfigure}{.24\textwidth}
        \centering
        \includegraphics[width=\textwidth,  clip={0,0,0,0}]{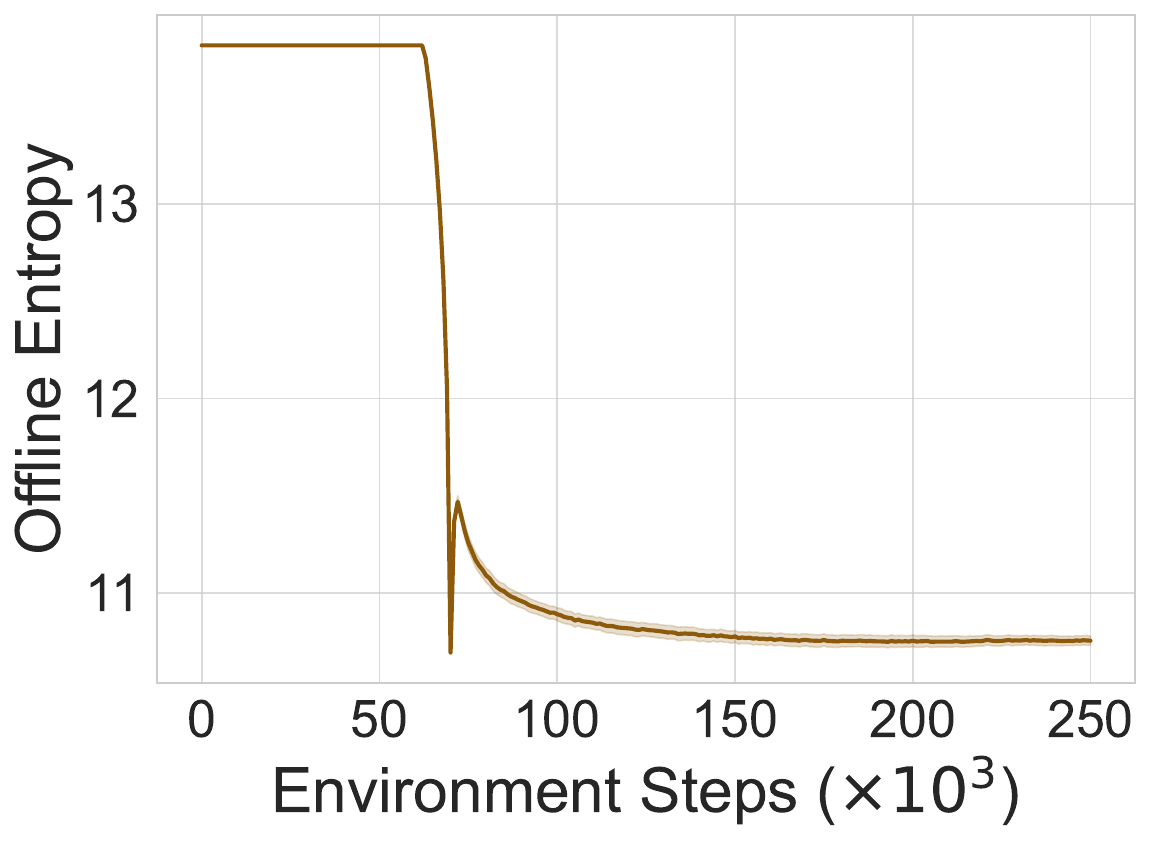}
        \caption{Offline entropy}\label{fig:ablation:entropy}
    \end{subfigure}\hfil
    \begin{subfigure}{.24\textwidth}
        \centering
        \includegraphics[width=\textwidth,  clip={0,0,0,0}]{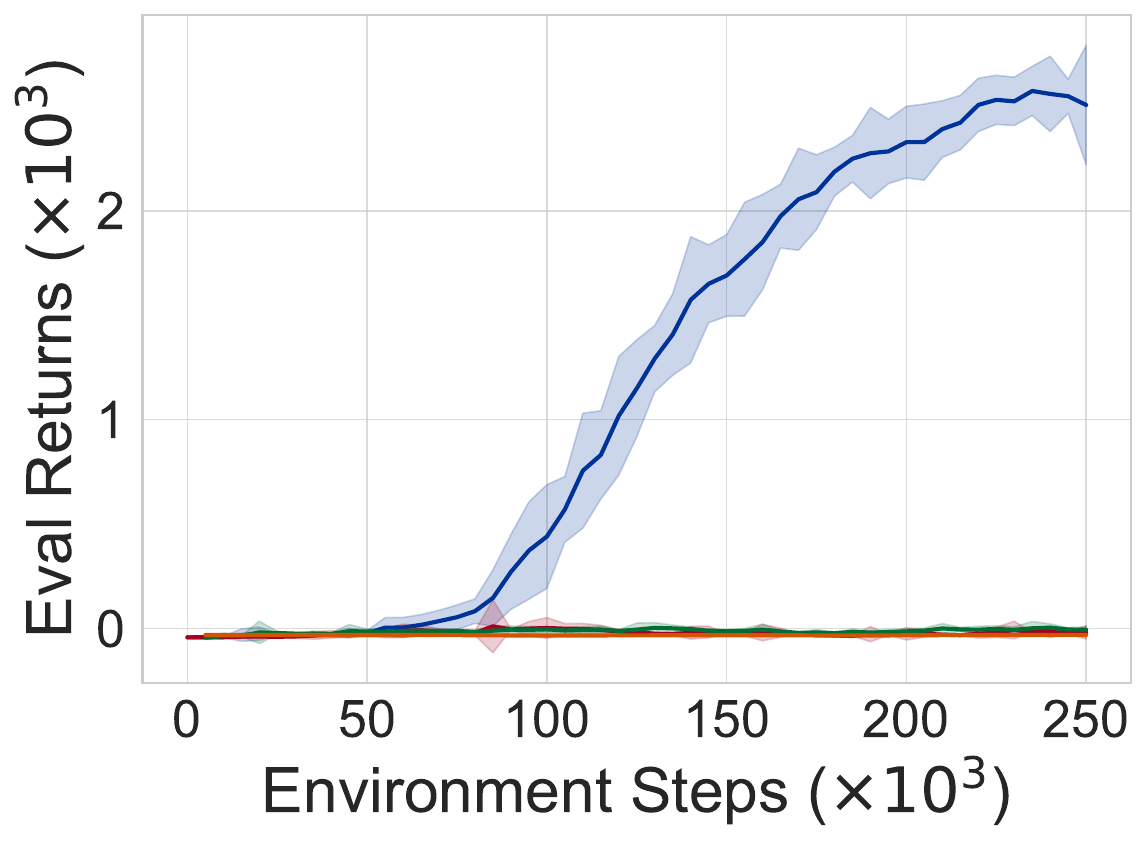}
        \caption{door-cloned}\label{fig:ablation:doorcloned}
    \end{subfigure}\hfil
    \begin{subfigure}{.24\textwidth}
        \centering
        \includegraphics[width=\textwidth,  clip={0,0,0,0}]{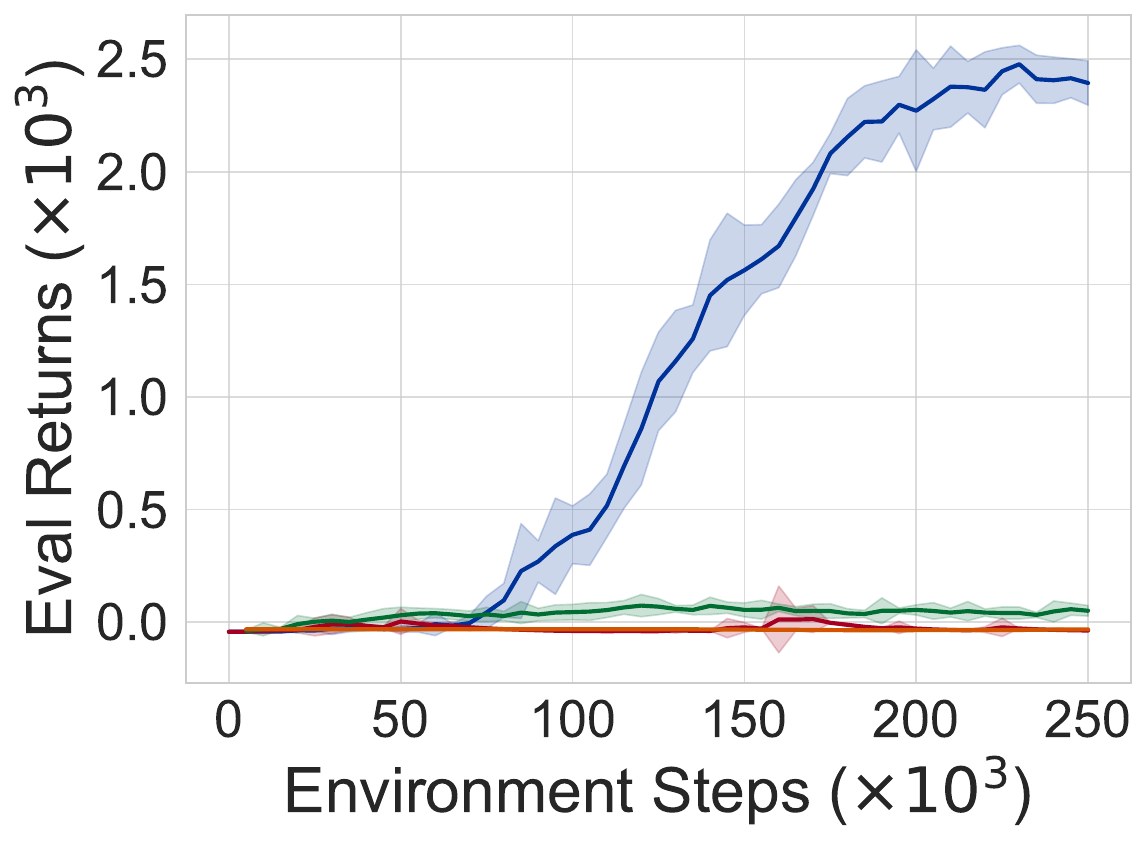}
        \caption{door-human}\label{fig:ablation:doorhuman}
    \end{subfigure}\hfil

    \vspace{-3pt}
    \caption{ \textbf{Ablation studies and effects of \algname}. The unablated version of \textcolor{myblue}{\algname} is in \textcolor{myblue}{blue} throughout. (a)(b)(c) Representative ablation studies of the density term, the advantage term and the LCB estimation. (d) Comparison of the online variant of \textcolor{myblue}{\algname} against purely using SAC online. (e) Comparison of \textcolor{myblue}{\algname} against using TD as the priority term. (f) The typical evolution of the entropy of the prioritized offline buffer. (g)(h) Comparison of \textcolor{myblue}{\algname} against \textcolor{myred}{RLPD},  \textcolor{mygreen}{PEX}, \textcolor{myorange}{BOORL} in the lower-quality/data-starved regime.}
  \label{fig:ablation:without_advantage}
  \vspace{-10pt}
\end{figure*}

\figref{fig:exp:main} presents a comparative evaluation of \textcolor{myblue}{\algname} against the baselines \textcolor{mygreen}{PEX}, \textcolor{myorange}{BOORL}, and the current SOTA method \textcolor{myred}{RLPD}. The results illustrate that \textcolor{myblue}{\algname} consistently outperforms these methods across all evaluated domains, achieving superior performance, robustness, and training stability.

As consistent in all subplots in \figref{fig:exp:main}, the performance of \textcolor{myblue}{\algname} is essentially identical to that of \textcolor{myred}{RLPD} in the first $1/4$ of steps. Of course, this is expected because we train \textcolor{myblue}{\algname} with an initial phase of \textcolor{myred}{RLPD}. However, after this point, their performances quickly diverge. The active sampling scheme of \textcolor{myblue}{\algname} enables it to stay robust in harder D4RL Adroit tasks, such as door-expert and relocate-expert, while \textcolor{myred}{RLPD} is less sample-efficient and effective. This performance advantage can be attributed to the fundamental difference in the two sampling strategies: while \textcolor{myred}{RLPD} relies on uniform random sampling, \textcolor{myblue}{\algname} employs an active sampling approach.

Meanwhile, although the other two baselines, \textcolor{mygreen}{PEX} and \textcolor{myorange}{BOORL}, promised to address the issues related to catastrophic forgetting and offline-online distribution shifts that come hand-in-hand with an explicit offline pretraining phase, these guarantees are not fulfilled. Oftentimes, the offline pretrained models do not offer a competitive initial jumpstart in these harder tasks (e.g., \figref{fig:exp:main:door-expert}, \figref{fig:exp:main:hammer-cloned}), which defeat the goals of an explicit, computationally expensive offline pretraining phase. Furthermore, even when they do (e.g., \figref{fig:exp:main:relocate-expert}, \figref{fig:exp:main:halfcheetah-medium}, \figref{fig:exp:main:humanoid-medium}), the effects of both catastrophic forgetting and distributional shift are ever present. It is in this light that the benefits of an ``end-to-end'' approach like \textcolor{myblue}{\algname} and \textcolor{myred}{RLPD} are clear over computationally expensive and non-robust methods with an offline pretraining phase, at least when we are concerned with sufficiently many online environment steps.

\vspace{-1em}
\subsection{Ablation Studies and Effects of \algname}
\vspace{-1.5em}
\textbf{Ablation on density term.} \figref{fig:ablation:density} compares the performance of \algname to \algname with only the advantage term in the sampling priority ($p=\exp\paren{\advTemperature\cdot\underline{A}}$), without the density term, on humanoidstandup-medium. This amounts to setting $\zeta = 0$. Our ablation results consistently show that \algname, which incorporates onlineness through the density term $w = d^{\text{on}}/d^{\text{off}}$, outperforms the version that does not. Removing the density term leaves the agent to potentially focus on transitions that are unlikely to occur during online interactions, which can yield sample inefficiencies.

{\bf Ablation on advantage term.} \figref{fig:ablation:advantage} compares the performance of \algname to \algname with only the advantage term in the sampling priority ($p=\mathbb{I}^{\text{off}}\underline{\density}\paren{\state,\action}+\mathbb{I}^{\text{on}}$), without the density term, on hammer-cloned. This amounts to setting $\xi = 0$. The results show that \algname with the advantage term surpasses its counterpart that only considers onlineness in prioritizing samples. This superiority is attributed to the advantage term, which effectively screens out transitions that are either non-informative or harmful. For example, even if a transition is near on-policy, it may not necessarily be advantageous for the policy to learn.

{\bf Ablation on LCB.} \figref{fig:ablation:lcb} compares the performance of \algname to \algname without the LCB calibration for the advantage term on humanoid-medium. In other words, $\underline{A} = \hat{A}$, the ensemble mean. This amounts to setting $\beta = 0$. This demonstrates the importance of staying pessimistic in advantage estimations, as over-optimism can be compounded by the exponentiation, resulting in a miscalibrated, highly biased sampling scheme.

{\bf Pure online setting and TD.} \figref{fig:ablation:online} presents an ablation study comparing an online variant of \algname against a pure SAC agent, with neither having access to offline data. The effective prioritization scheme is then $p = \exp(\xi \cdot \underline{A})$, and we still observe the perks of this method, which prioritizes more advantageous transitions. This heuristic is also superior to using the PER TD-error heuristic, as shown in \figref{fig:ablation:td}.

{\bf Evolution of offline dataset entropy.} The easiest way to quantify the effect of the \algname sampling scheme is to measure the entropy corresponding to the sampling probability on the offline dataset, where the priority has both density and advantage contributions. As shown in \figref{fig:ablation:entropy}, after the initial phase of RLPD training, this entropy quickly drops off from 13.8 nats, which corresponds to $\approx$ 1M effective transitions, to 10.8 nats, which corresponds to $\approx$ 50K effective transitions. It is via this selection mechanism that \algname is able to achieve better sample efficiency.

{\bf Effect of dataset quality and quantity.} Even though baseline methods are somewhat comparable to \algname in the high-quality, high-quantity data regime, e.g., \figref{fig:exp:main:door-expert}, their performances drastically suffer when either quality or quantity is affected. \figref{fig:ablation:doorcloned} demonstrates the robustness of \algname when the dataset is not of expert quality, while \figref{fig:ablation:doorhuman} shows that it is still able to learn in a sample-efficient manner with a much smaller offline dataset. In both of these scenarios, all baseline methods are unable to learn, despite both PEX and BOORL having an extensive offline pretraining phase.

\section{Conclusion}\label{sec:con}
We present \algname, a novel algorithm for online RL with offline datasets through a {confidence-aware} active advantage-aligned sampling strategy. This algorithm is theoretically motivated by the objective of shifting the sampling distribution toward more beneficial transitions to maximize policy improvement, and its enhancement gap is quantified. Moreover, we conduct comprehensive experiments with various qualities of offline data, demonstrating that \algname outperforms existing baselines with significance. We also conduct multiple ablation studies and confirm the importance of each component within the active advantage-aligned formula,  as well as its superiority in the online setting and over existing prioritization schemes. While our approach primarily aims to enhance performance, it is in essence trading an offline pretraining phase for an extra density network, which could cause higher computational costs with large architectures. Though it is empirically seen that our approach is relatively computationally efficient, further reducing computational demands will be a focus of our future work.

\subsubsection*{Acknowledgements}
We thank Yicheng Luo for the initial discussion and helpful suggestions. 
This work is supported  by the RadBio-AI project (DE-AC02-06CH11357), U.S. Department of Energy Office of Science, Office of Biological and Environment Research, the Improve project under contract (75N91019F00134, 75N91019D00024, 89233218CNA000001, DE-AC02-06-CH11357, DE-AC52-07NA27344, DE-AC05-00OR22725), Exascale Computing Project (17-SC-20-SC), a collaborative effort of the U.S. Department of Energy Office of Science and the National Nuclear Security Administration, the AI-Assisted Hybrid Renewable Energy, Nutrient, and Water Recovery project (DOE DE-EE0009505), and the National Science Foundation under Grant No. IIS 2313131, IIS
2332475, DMS 2413243 and HDR TRIPODS (2216899).

\bibliographystyle{plainnat}
\bibliography{reference}
\flushcolsend %

\clearpage
\onecolumn

\appendix
\clearpage

\section{Theoretical Motivation}\label{app:theory_proof}
In this section, we show that the active advantage-aligned sampling strategy
helps mitigate the gap between offline data distribution, online data distribution and the current on-policy distribution, which serves as a main theoretical motivation for designing \algname.

\perfdiff*

\begin{proof} \textit{(Proof of Theorem \ref{thm:perfdiff}).}\quad 
Define visitation measures
\[  
d_h^\pi(s, a) = \EE_{a\sim \pi(\cdot| s)} \left[ \ind(s_h = s, a_h = a)\right], \quad d^\pi(s, a) = \frac{1}{1-\gamma}\sum_{h=1}^\infty \gamma^h d_h^\pi(s,a).
\]
Consider a sufficiently small one-step update in the policy network with step-size $\eta$.
Define $J_\alpha^\pi = \expctover{\state\sim\rho^{\policy},{\action\sim\policy}}{\sum_{t=0}^{\infty} \gamma^t\paren{\reward_t+\alpha \mathcal{H}\paren{\policy\paren{\action|\state}}}}$. 
Let $\tilde\pi$ be the policy from the last iteration.
In the following, we abbreviate $\EE_\pi[\cdot]$ as $\EE[\cdot]$. 
\begin{align*}
V^{\policy} - V^{\tilde\policy}&=\mathbb{E} \bracket{\tuple{\policy,\QFunc^{\policy}-\alpha\log\policy}-\langle{\tilde\policy,\QFunc^{\tilde\policy}-\alpha\log\tilde\policy} \rangle_{\cA}}\\
&=\mathbb{E}\bracket{\langle{\policy,\QFunc^{\policy}-\QFunc^{\tilde\policy}}\rangle_{\cA}+\langle{\policy-\tilde\policy,\QFunc^{\tilde\policy}}\rangle_{\cA}-\alpha \tuple{\policy,\log{\policy}}+\alpha\tuple{\tilde\policy,\log\tilde\policy}}\\
&=\mathbb{E}\bracket{\tuple{\policy,\reward+\gamma \mathbb{P}\VFunc^{\policy}-\reward+\gamma \mathbb{P}\VFunc^{\policy}}+\langle \policy-\tilde\policy,\QFunc^{\tilde\policy}\rangle_{\cA} -
\alpha\tuple{\policy,\log\policy} + \alpha\tuple{\tilde\policy,\log \tilde\policy}}\\
&=\mathbb{E}\bracket{ \gamma\bigl\langle\policy, \PP\bigl(\VFunc^{\policy}-\VFunc^{\tilde\policy}\bigr)\bigr\rangle_{\cA}+\bigl\langle\policy-\tilde\policy,\QFunc^{\tilde\policy}\bigr\rangle_{\cA}-\alpha\tuple{\policy,\log\policy}+\alpha\tuple{\tilde\policy,\log\tilde\policy}} , 
\end{align*}
Using this iterative form, we conclude that 
\begin{align*}
    J_\alpha^\pi - J_\alpha^{\tilde\pi} 
    &= \EE\left[
        \sum_{h=1}^\infty \gamma^i \left( \bigl\langle\policy_i-\tilde\policy_i,\QFunc_i^{\tilde\policy}\bigr\rangle_{\cA}-\alpha\tuple{\policy_i,\log\policy_i}+\alpha\tuple{\tilde\policy_i,\log\tilde\policy_i}  \right)
    \right] \\
    &= \EE_{d^\pi} \left[ \bigl\langle\policy-\tilde\policy,\QFunc^{\tilde\policy}\bigr\rangle_{\cA}-\alpha\tuple{\policy,\log\policy}+\alpha\tuple{\tilde\policy,\log\tilde\policy} \right].
\end{align*}
Recall our definition of $\sigma(s,a)$ that 
\begin{align}
    \sigma(s, a) = \exp( \xi \hat A^{\tilde \pi}(s,a))\cdot \frac{d^{\text{on}}(s, a)}{\mu(s,a)}, 
    \label{eq:sigma_theory}
\end{align}
where $\mu(\cdot, \cdot)$ is the distribution in the sampled batch and $d^{\text{on}}(\cdot, \cdot)$ is the online distribution. 
Note that the advantage function $\hat A^{\tilde\pi} (s, a) = \hat Q^{\tilde\pi} (s, a) - \alpha \log \sum_{a'}\exp(\alpha^{-1}\hat Q(s, a'))$ is calculated using policy $\tilde\pi$ and Q function $\hat Q^{\tilde\pi}$ obtained from the last iteration in the above formula.
Let us define $\pi_{\phi^\star}$ as the optimal policy under the current Q function $\tilde Q$:
\begin{align*}
    \pi^\star(\cdot\given s) &= \argmin_{\pi} \kl \left( \pi(\cdot \given s) \,\bigg\|\, \frac{\exp(\alpha^{-1} \QFunc^{\tilde\policy}(s, \cdot))}{\tilde Z_\alpha(s)} \right) \\
    &=\argmax_{\pi}  \bigl\langle{\policy\paren{\cdot|\state},\QFunc^{\tilde\policy}\paren{\state,\cdot}-\alpha\log\policy\paren{\cdot|\state}}\bigr\rangle_{\cA} \propto \exp(\alpha^{-1}A^{\tilde\policy}(s, \cdot)).
\end{align*}
where $\tilde Z_\alpha(s)$ is the normalization factor at state $s$ for the exponential of the $Q$ function, and $A^{\tilde\pi}(s, \cdot)$ is the advantage function under policy $\tilde\pi$.
Recall by policy optimization:
\begin{align*}
    \hat\pi = \argmax_\pi\expctover{\mu}{\sigma\paren{\state,\action} \bigl\langle{\policy\paren{\cdot|\state},\hat\QFunc^{\tilde\policy}\paren{\state,\cdot}-\alpha\log\policy\paren{\cdot|\state}}\bigr\rangle_{\cA}}, 
\end{align*}
where $\hat\QFunc^{\tilde\pi}$ is the estimated Q function at the current iteration.
In the above formula, $\mu$ is the sampled data distribution and $\sigma$ is the quantity calculated in \eqref{eq:sigma_theory}. 
Suppose we take some function class $\pi_\phi$ which contains the optimal one-step policy improvement $\pi^\star$ and also the optimization target $\hat\pi$.
Using a shift of distribution, we have 
\begin{align*}
    \mu(s, a) \sigma(s, a) 
        &= \mu(s, a) \cdot \frac{d^{\text{on}}(s, a)}{\mu(s, a)} \cdot \exp(\xi \hat A^{\tilde\pi}(s, a)) ={d^{\text{on}}(s, a)} \cdot \hat\pi(a\given s)^\xi \\
        &= d^{\hat\pi}(s, a) \cdot \frac{d^{\text{on}}(s)}{d^{\hat\pi}(s)} \cdot \frac{d^{\text{on}}(a\given s)}{\hat\pi(a\given s)^{1-\xi}} \propto \rho(s, a),
\end{align*}
where we define $\rho(s, a)$ as the probability density induced by the above distribution.
Here, the first ratio $d^{\text{on}}(s)/d^{\pi^\star}(s)$ is the state-drift between the online data and the next-step optimal policy.
Since the online batches are refreshing as the algorithm proceeds, the ratio will be close to $1$. The second ratio term characterizes the drift caused by a mismatch in the policy. Intuitively, as we know the policy $\tilde\pi$ from the last iteration, we can use this information to further boost the alignment between the online policy and the next-step policy.
Suppose the Q function is learned up to $\epsilon$ error, that is 
\begin{align*}
    \expctover{\rho}{(\QFunc^{\tilde\pi}(s, a) - \hat\QFunc^{\tilde\pi}(s, a))^2} \leq \epsilon.
\end{align*}
Then, we have performance difference lemma that 
\begin{align*}
    J_{\alpha}^{\hat\pi} -J_{\alpha}^{\pi^\star}
       & = \EE_{d^{\hat\pi}} \left[ \bigl\langle\hat\pi, \QFunc^{\tilde\pi}\bigr\rangle_{\cA}-\alpha\tuple{\hat\pi,\log\hat\pi} - \bigl(\bigl\langle\pi^\star, \QFunc^{\tilde\pi}\bigr\rangle_{\cA} - \alpha\tuple{\pi^\star,\log\pi^\star}\bigr) \right]\\
       &= %
       \EE_{d^{\hat\pi}} \left[ \bigl\langle\hat\pi, \QFunc^{\tilde\pi}\bigr\rangle_{\cA}-\alpha\tuple{\hat\pi,\log\hat\pi} - \bigl(\bigl\langle\hat\pi, \hat\QFunc^{\tilde\pi}\bigr\rangle_{\cA}-\alpha\tuple{\hat\pi,\log\hat\pi}\bigr)\right]\\
       &\qquad + \EE_{d^{\hat\pi}} \left[ \bigl\langle\hat\pi, \hat\QFunc^{\tilde\pi}\bigr\rangle_{\cA}-\alpha\tuple{\hat\pi,\log\hat\pi} - \bigl(\bigl\langle\pi^\star, \hat\QFunc^{\tilde\pi}\bigr\rangle_{\cA}-\alpha\tuple{\pi^\star,\log\pi^\star}\bigr)\right]\\
       &\qquad + \EE_{d^{\hat\pi}} \left[ \bigl\langle\pi^\star, \hat\QFunc^{\tilde\pi}\bigr\rangle_{\cA}-\alpha\tuple{\pi^\star,\log\pi^\star} - \bigl(\bigl\langle\pi^\star, \QFunc^{\tilde\pi}\bigr\rangle_{\cA}-\alpha\tuple{\pi^\star,\log\pi^\star}\bigr)\right]\\
       &\ge %
       \EE_{d^{\hat\pi}} \left[ \bigl\langle\hat\pi - \pi^\star, \QFunc^{\tilde\pi} - \hat\QFunc^{\tilde\pi}\bigr\rangle_{\cA} \right] \\
       &\ge - \sup_{s, a}\left|\frac{\pi^\star(a\given s)}{\hat\pi(a\given s)} - 1\right| \cdot \EE_{d^{\hat\pi}}[|\QFunc^{\tilde\pi} - \hat\QFunc^{\tilde\pi}|] \ge - C \cdot \EE_{d^{\hat\pi}}[|\QFunc^{\tilde\pi} - \hat\QFunc^{\tilde\pi}|]
\end{align*}
where $C$ is an absolute constant given that both $Q^{\tilde\pi}$ and $\hat Q^{\tilde\pi}$ are uniformly bounded.
Here, the first inequality holds by the policy optimization step where we upper bound the second term by zero, and the last inequality holds by the assumption that the $Q$ function class is uniformly bounded.
Now, by a shift of distribution 
\begin{align*}
    \EE_{d^{\hat\pi}}[|\QFunc^{\tilde\pi} - \hat\QFunc^{\tilde\pi}|] 
        &= \EE_{\rho}\left[|\QFunc^{\tilde\pi} - \hat\QFunc^{\tilde\pi}|  \cdot \frac{d^{\hat\pi}(s, a)}{\rho(s, a)}\right]  %
        \le \sqrt{\EE_{\rho}[(\QFunc^{\tilde\pi} - \hat\QFunc^{\tilde\pi})^2]} \cdot \sup_{s, a} \left| \frac{d^{\hat\pi}(s, a)}{\rho(s, a)} \right|.
\end{align*}
Let's look at the distribution ratio 
\begin{align*}
    \frac{d^{\hat\pi}(s, a)}{\rho(s, a)} &= \frac{{\hat\pi}(a\given s) }{\hat\pi(a\given s)^\xi \cdot d^{\text{on}}(a\given s)^{1-\xi} }  \cdot \frac{\sum_{s', a'}d^{\text{on}}(a', s') \hat\pi(a'\given s')^\xi}{d^{\text{on}}(a\given s)^\xi} \cdot \frac{d^{\hat\pi}(s)}{d^{\text{on}}(s)} \\ 
    &= \left(\frac{{\hat\pi}(a\given s)}{d^{\text{on}}(a\given s)}\right)^{1-\xi}  \cdot \frac{\sum_{s', a'}d^{\text{on}}(a', s') \hat\pi(a'\given s')^\xi}{d^{\text{on}}(a\given s)^\xi} \cdot \frac{d^{\hat\pi}(s)}{d^{\text{on}}(s)}.
\end{align*}
Therefore, the policy improvement is guaranteed by 
\begin{align*}
    J_{\alpha}^{\hat\pi} - J_{\alpha}^{\tilde\pi} = J_{\alpha}^{\hat\pi} - J_{\alpha}^{\pi^\star} + J_{\alpha}^{\pi^\star} - J_{\alpha}^{\tilde\pi} \ge J_{\alpha}^{\pi^\star} - J_{\alpha}^{\tilde\pi} - C \cdot \sqrt \epsilon \cdot \sup_{s, a} \left| \frac{d^{\hat\pi}(s, a)}{\rho(s, a)} \right|.
\end{align*}
This completes the proof.

\end{proof}

Now we give a formal proof for Lemma \ref{lem:bandit}.

\bandit*
\begin{proof}\textit{(Proof of Lemma \ref{lem:bandit})}
Under the reparameterization $d^{\text{on}}(a)\propto \exp(\beta_1 r(a))$ and $\pi^{t+1}(a) \propto \exp(\beta_2 r(a))$, we have for the coefficient $R^t(a;\xi)$ that 
\begin{align*}
    R^t(a;\xi) &\propto \exp\left(\bigl((1-\xi)(\beta_2 - \beta_1) - \xi \beta_1\bigr) \cdot r(a) \right) \\
    &= \exp\left(\bigl((1-\xi)\beta_2 - \beta_1\bigr)\cdot r(a)\right).
\end{align*}
Within the range $\xi\in(0, 1-\beta_1/\beta_2)$, we always have $(1-\xi)\beta_2 - \beta_1>0$. Hence, the largest coefficient always occurs on action $\tilde a = \argmax_{a'} r(a')$.
In addition, we consider the following ratio
\begin{align*}
    \log \left(\frac{R(a;\xi)}{R(a;0)}\right) 
    &= -\xi \log(\pi^{t+1}(a)) + \log \left(\sum_{a'}d^{\text{on}}(a') \pi^{t+1}(a')^\xi \right) \\
    &= -\xi \beta_2 r(a) + \log \left(\sum_{a'}\exp((\beta_1 + \beta_2 \xi) r(a')) \right).
\end{align*}
Consider the gradient of $\log \left(\sum_{a'}\exp((\beta_1 + \beta_2 \xi) r(a')) \right)$ with respect to $\xi$:
\begin{align*}
    &\frac{\partial}{\partial \xi} \log \left(\sum_{a'}\exp((\beta_1 + \beta_2 \xi) r(a')) \right) 
     = \frac{\sum_{a'} \beta_2 r(a') \exp((\beta_1 + \beta_2 \xi) r(a'))}{\sum_{a'}\exp((\beta_1 + \beta_2 \xi) r(a'))} - \beta_2 r(a).
\end{align*}
Note that the largest probability ratio happens for $\tilde a = \argmax_{a'} r(a')$. Since the softmax is strictly less than the argmax when $r$ has different values in each action, the above derivative for action $\tilde a$ is negative, meaning that by increasing $\xi$, the value of $R(\tilde a;\xi)$ will decrease. 
As $\sup_{a}R(a;\xi) = R(\tilde a;\xi)$ by our previous discussion, we complete the proof. 
\end{proof}

\section{Additional Preliminaries}\label{app:preliminaries}

\emph{Layer Normalization:} %
Off-policy RL algorithms often query the learned \QFunc--function with out-of-distribution actions, leading to overestimation errors due to function approximation. This can cause training instabilities and even divergence, particularly when the critic struggles to keep up with growing value estimates. To address this, prior research has employed Layer Normalization to ensure that the acquired functions do not extrapolate in an unconstrained manner. Layer Normalization acts to confine \QFunc-values within the boundaries set by the norm of the weight layer, even for actions beyond the dataset. As a result, the impact of inaccurately extrapolated actions is substantially reduced, as their associated \QFunc-values are unlikely to significantly exceed those already observed in the existing data. Consequently, Layer Normalization serves to alleviate issues such as critic divergence and the occurrence of catastrophic overestimation.

\emph{Update-to-Data:} Enhancing sample efficiency in Bellman backups can be accomplished by elevating the frequency of updates conducted per environment step. This approach, often referred to as the update-to-data (UTD) ratio, expedites the process of backing up offline data.

\emph{Maximum Entropy RL:} %
Incorporating entropy into the learning objective (as defined in \eqref{eq:maxentRL})  helps mitigate overconfidence in value estimates, particularly when training with offline datasets. In offline RL, policies may become overly conservative due to limited dataset coverage, leading to suboptimal exploration during fine-tuning. By preserving policy stochasticity, entropy regularization ensures that the agent remains adaptable when transitioning from offline training to online interactions. This controlled exploration has been shown to improve training stability and prevent premature convergence~\citep{ball2023efficient, chen2021randomized, haarnoja2018soft, hiraoka2021dropout}.

\section{Additional Related Work}\label{app:relatedWork}

\paragraph{Offline to online RL.}%

In an effort to mitigate the sample complexity of online RL~\citep{liu2023active},
offline RL utilizes fixed datasets to train policies without online interaction, however it can be prone to extrapolation errors that lead to overestimation of state-action values.
Recent off-policy actor-critic methods~\citep{fujimoto2019off, kostrikov2021offline, kumar2020conservative, wang2020critic} seek to mitigate these issues by limiting policy learning to the scope of the dataset, thereby minimizing extrapolation error. 
Strategies for reducing extrapolation error include value-constrained approaches~\citep{kumar2020conservative} that aim for conservative value estimates and policy-constrained techniques~\citep{nair2020awac} that ensure the policy remains close to the observed behavior in the data.
\fix{There are several works that leverage advantage estimation to guide policy improvement in purely offline RL, such as LAPO~\citep{chen2022lapo}, A2PR~\citep{liu2024adaptive}, and A2PO~\citep{qing2024a2po}. However, they are not well-suited for online settings because they fail to consider the importance of “onlineness,” measured by the density ratio, to align with the needs of online RL exploration and exploitation. Additionally, they do not account for uncertainty in advantage estimation.
}
\XL{Several works leverage advantage estimation to guide policy improvement in offline RL, such as LAPO, A2PR, and A2PO. Could you incorporate a discussion on these methods in your paper to provide a more comprehensive comparison?}

While offline RL methods can outperform the dataset’s behavior policy, they rely entirely on static data~\citep{levine2020offline}. When the dataset has comprehensive coverage, methods like FQI~\citep{antos2007fitted} or certainty-equivalence model learning~\citep{rathnam2023unintended} can efficiently find near-optimal policies. %
However, in practical scenarios with limited data coverage, policies tend to be suboptimal. 
One approach to addressing this suboptimality is to follow offline RL with online fine-tuning, however as discussed above, existing methods are prone to catastrophic forgetting and performance drops during fine-tuning~\citep{luo2023finetuning}. 
\fix{In contrast, \algname begins with online RL while incorporating offline data to enhance the policy, selectively leveraging offline data to facilitate online policy improvement.}

\section{Implementation Details of \algname}
We provide more extensive details of \algname in Algorithm \ref{alg:a3rl} as Algorithm \ref{app:alg:a3rl_details}.

\begin{algorithm*}[!t]
\caption{\algname}
\label{app:alg:a3rl_details}
\begin{algorithmic}[1]
\State Select LayerNorm, large ensemble Size $\EnsembleNum$, gradient steps $\GNum$, discount $\gamma$,  temperature $\alpha$.
\State Randomly initialize Critic $\theta_i$ (set targets $\theta'_i = \theta_i$) for $i=1,2,\ldots, \EnsembleNum$, Actor $\phi$ parameters. 
\State Select critic EMA weight $\rho$, batch size $\batch$, determine number of Critic targets to subset $Z\in\curlybracket{1,2}$
\State Initialize offline priority buffer $\mathcal{B}^{\text{off}}$ with offline dataset $\mathcal{D}$, {online priority buffer $\mathcal{B}^{\text{on}} = \ReplayBuffer \leftarrow \emptyset$ }.
\While {True}
\State Receive initial observation state $\state_0$
     \For{$t=0, \ldots, \horizon$}
       \State Take action $\action_t \sim \policy_{\phi}\paren{\cdot|\state_t}$, update online buffer $\mathcal{B}^{\text{on}} \leftarrow \mathcal{B}^{\text{on}} \cup \curlybracket{\paren{\state_t,\action_t,\reward_t,\state_{t+1}}}$.

       \For{$g=1, \ldots, G$}
                   \State Uniformly sample $\frac{N}{2}$ transitions from $\mathcal{B}^{\text{off}}$ and similarly $\frac{N}{2}$ from $\mathcal{B}^{\text{on}}$. \textcolor{NavyBlue}{Update $w_\psi$ to maximize \eqref{eq:lower_bound}}.
            \State \textcolor{NavyBlue}{Sample with priority $\frac{N}{2}$ transitions from $\mathcal{B}^{\text{off}}$ and similarly $\frac{N}{2}$ from $\mathcal{B}^{on}$ to form batch $b$ of size $N$.}
            \State Sample set $\mathcal{Z}$ of $Z$ indices from $\curlybracket{1,\ldots, \EnsembleNum}$ 
            \State With $b$, set $y=\reward+\discFactor\Big(\min_{i\in \mathcal{Z}} \QFunc_{\theta'_i}\paren{\state',\action'}+\Big.$ 
            $\Big.\alpha \log \policy_{\phi}\paren{\action'|\state'}\Big), \action'\sim \policy_{\phi}\paren{\cdot|\state'}$

            \For{$i=1,\ldots, \EnsembleNum$}
            \State \textcolor{Maroon}{Calculate importance weights $u_i$ via \eqref{eq:weights}.}
            \State \textcolor{Maroon}{Update $\theta_i$ minimizing loss:}
             \textcolor{Maroon}{$\loss= \sum_{i} u_i\cdot
            \paren{y-\QFunc_{\theta_i}\paren{\state,\action}}^2$}
            \EndFor
            \State Update target networks:
             $\theta'_{i}\leftarrow \rho \theta'_{i}+\paren{1-\rho}\theta_i$
       \EndFor
       \State With $b$, update $\phi$ maximizing objective:
       \State \quad\quad$\frac{1}{\EnsembleNum}\sum_{i=1}^{\EnsembleNum} \QFunc_{\theta_i}\paren{\state,\action} - \alpha \log \policy_{\phi} \paren{\action|\state},$
 where $\action\sim\policy_{\phi}\paren{\cdot|\state}$, {$\paren{\state,\action}\sim b$}.
    \State  \textcolor{NavyBlue}{Update priorities for $b$ according to \eqref{eq:adv_priority}.}
     \EndFor
\EndWhile

\end{algorithmic}
\end{algorithm*}

\section{Limitations of the Prior State-of-The-Art.} 
A drawback of RLPD, as discussed by~\citet{ball2023efficient}, lies in its symmetric random sampling method applied to both online and offline data, disregarding the significance of individual transitions \fix{for evolving quality of policy}. This predefined approach to sampling can potentially lead to less than optimal policy improvements due to the omission of vital data and inefficiencies arising from the use of redundant data. Such inefficiencies fail to offer any positive contribution towards enhancing policy. To address the limitation, our research presents an innovative active data sampling technique, specifically designed to optimize the use of both online and offline data in the process of policy improvement.

\section{Experimental Details}\label{app:experiment}
In order to ensure fair evaluation, all baselines and ablation studies are assessed using an equal number of environment interaction steps. We average results over 10 seeds to obtain the final result. One standard error of the mean is shaded for each graph.

\subsection{Additional experiments}

\begin{figure*}[h]
    \centering
    \begin{subfigure}{.24\textwidth}
        \centering
        \includegraphics[width=\textwidth,  clip={0,0,0,0}]{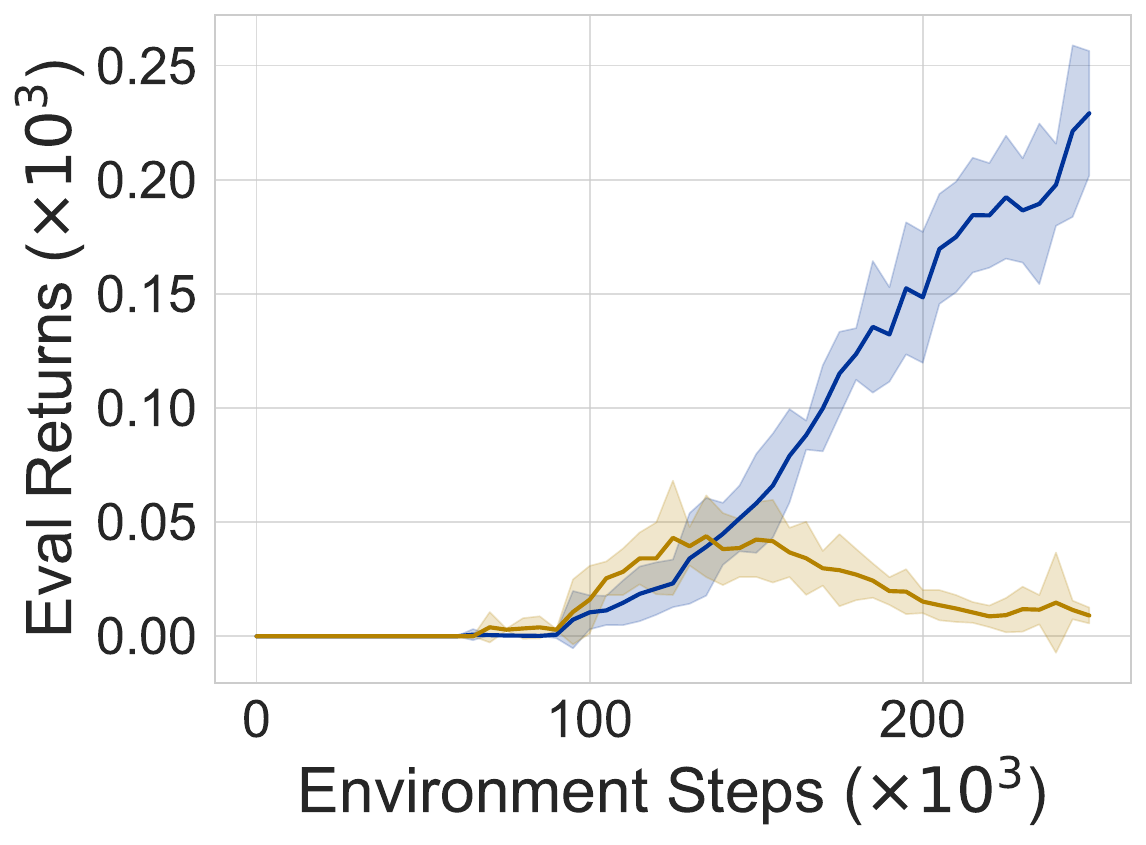}
        \caption{antmaze-umaze}
    \end{subfigure}\hfil
    \begin{subfigure}{.24\textwidth}
        \centering
        \includegraphics[width=\textwidth,  clip={0,0,0,0}]{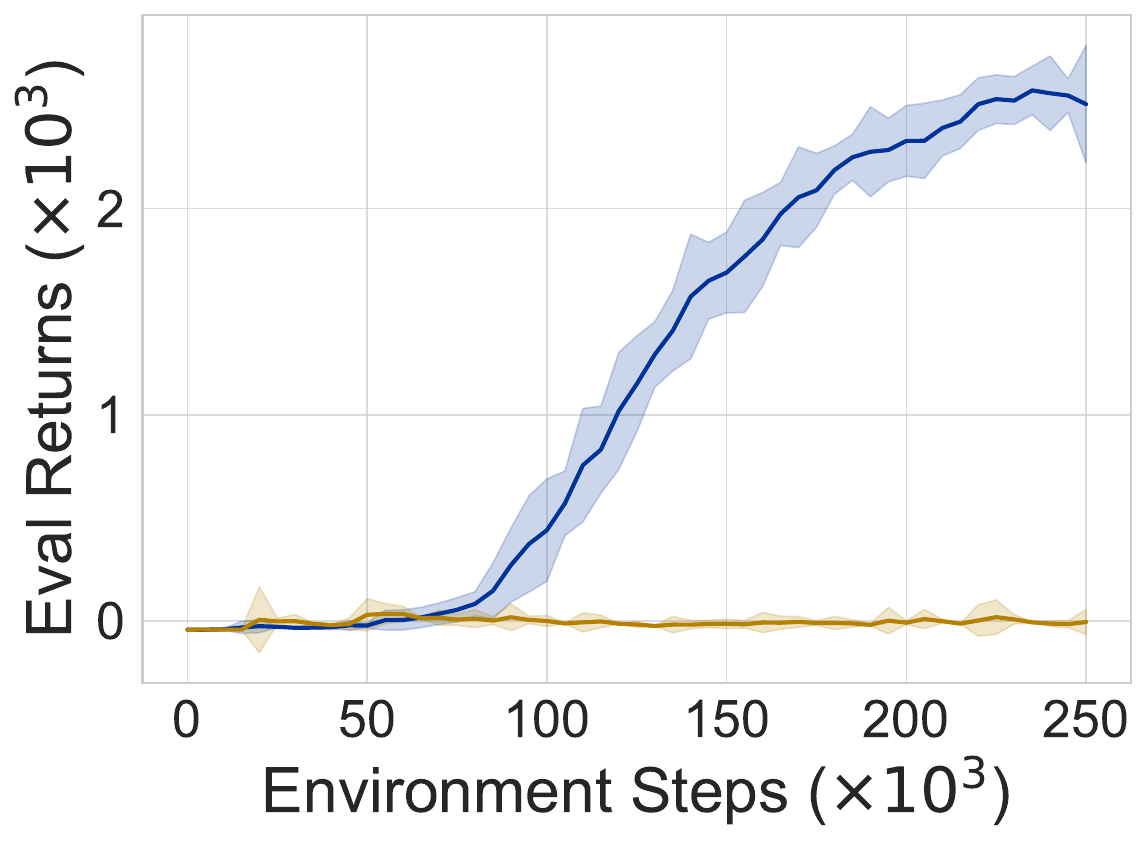}
        \caption{door-cloned}
    \end{subfigure}\hfil
    \begin{subfigure}{.24\textwidth}
        \centering
        \includegraphics[width=\textwidth,  clip={0,0,0,0}]{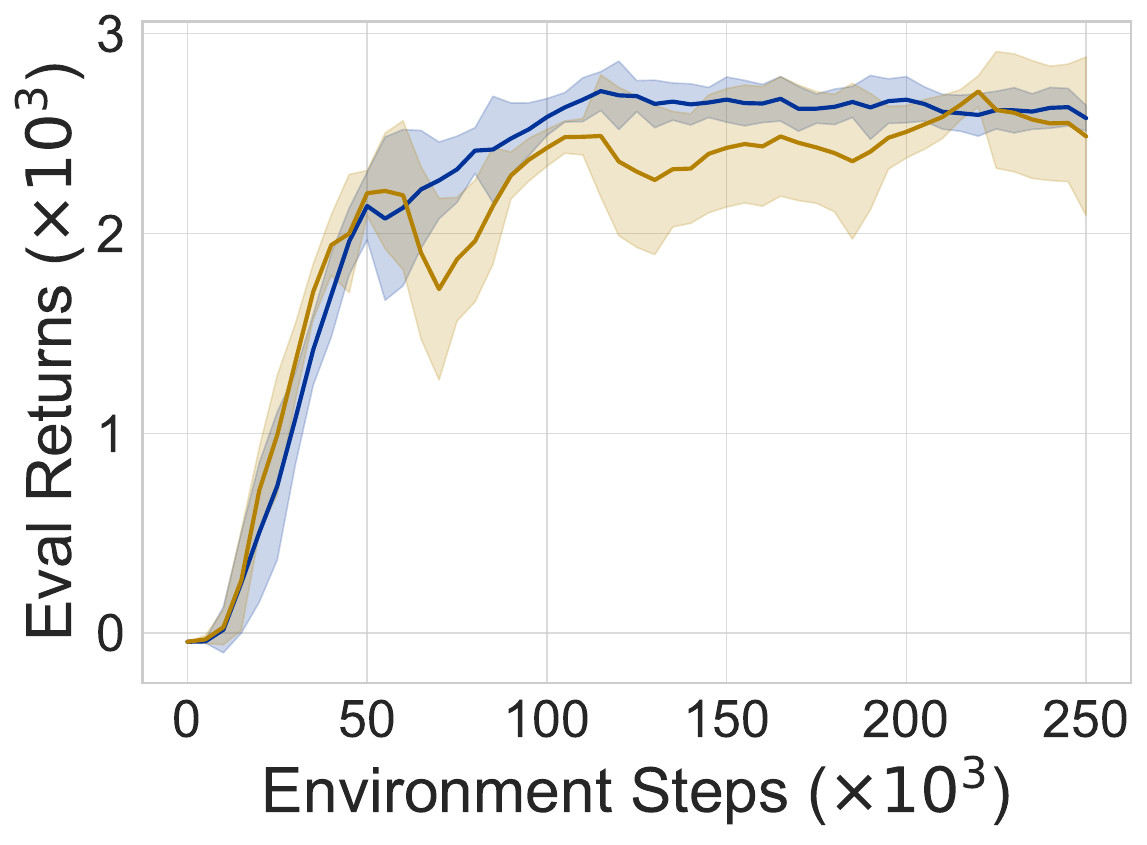}
        \caption{door-expert}
    \end{subfigure}\hfil
    \begin{subfigure}{.24\textwidth}
        \centering
        \includegraphics[width=\textwidth,  clip={0,0,0,0}]{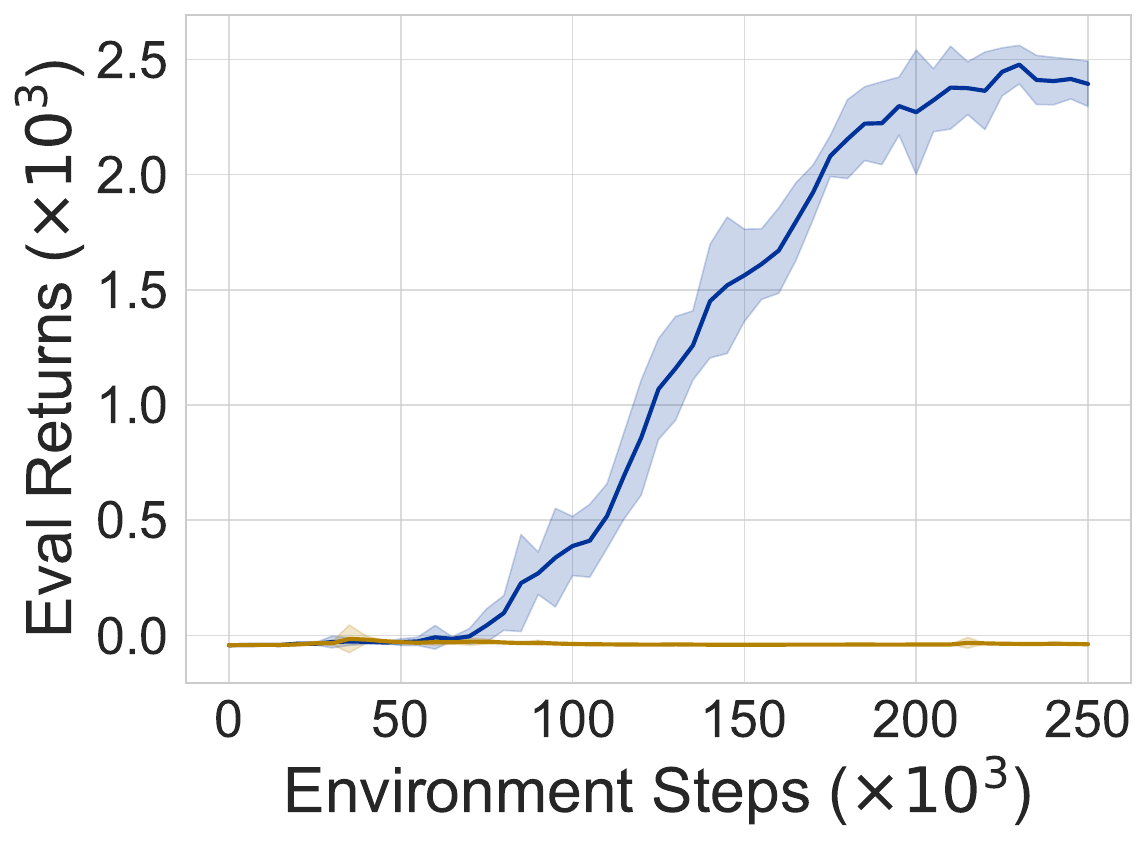}
        \caption{door-human}
    \end{subfigure}\hfil
    \begin{subfigure}{.24\textwidth}
        \centering
        \includegraphics[width=\textwidth,  clip={0,0,0,0}]{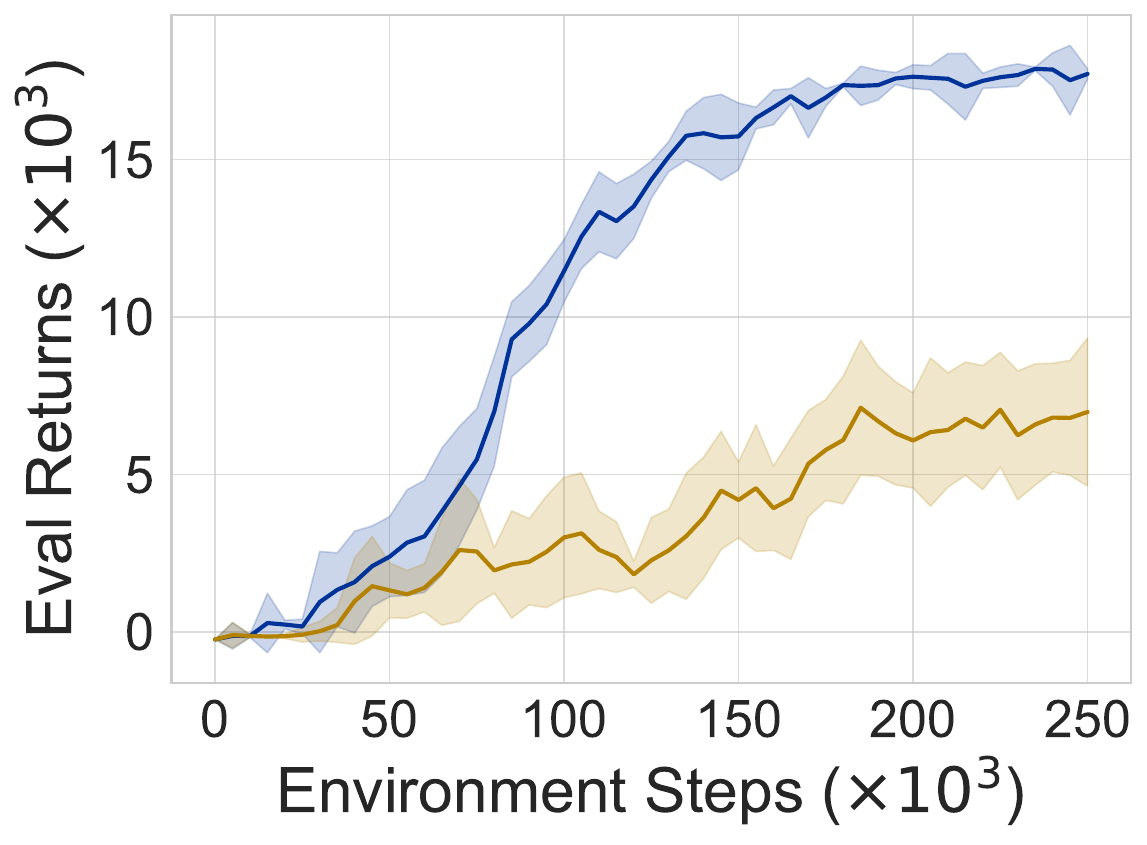}
        \caption{hammer-human}
    \end{subfigure}\hfil
    \begin{subfigure}{.24\textwidth}
        \centering
        \includegraphics[width=\textwidth,  clip={0,0,0,0}]{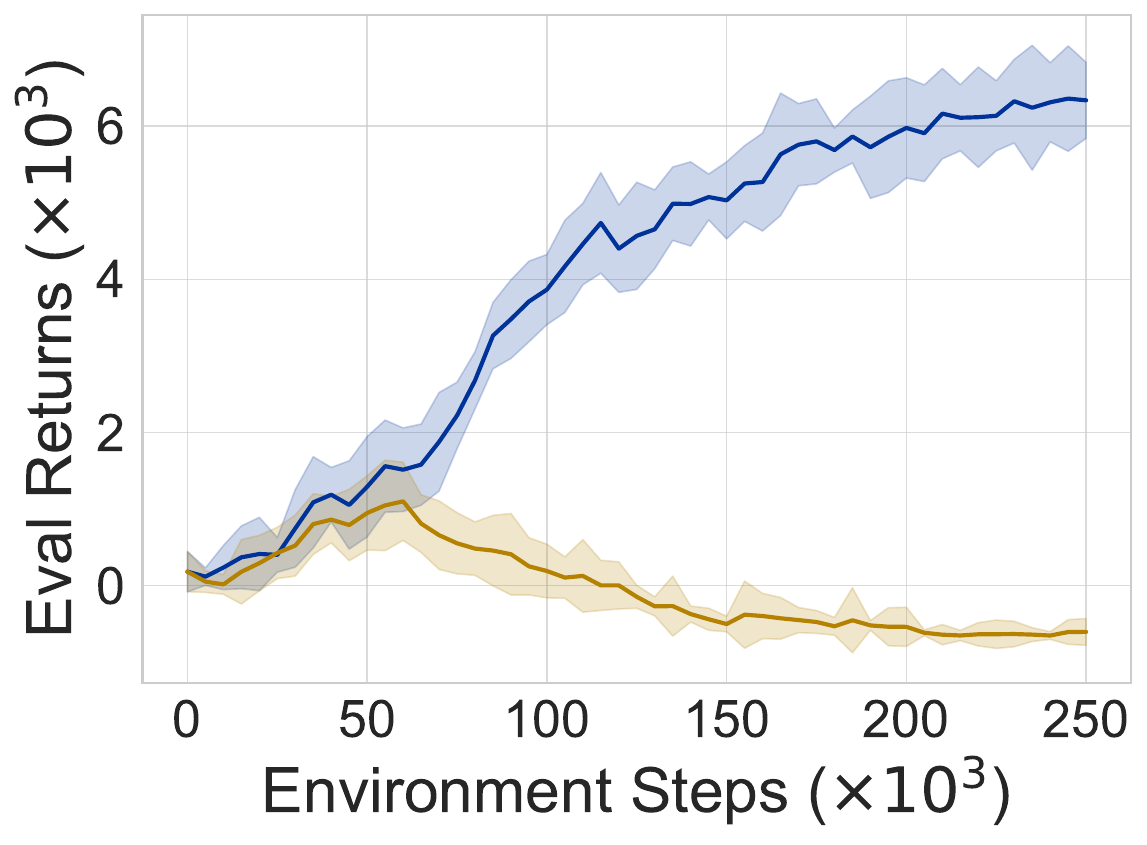}
        \caption{pen-cloned}
    \end{subfigure}\hfil
    \begin{subfigure}{.24\textwidth}
        \centering
        \includegraphics[width=\textwidth,  clip={0,0,0,0}]{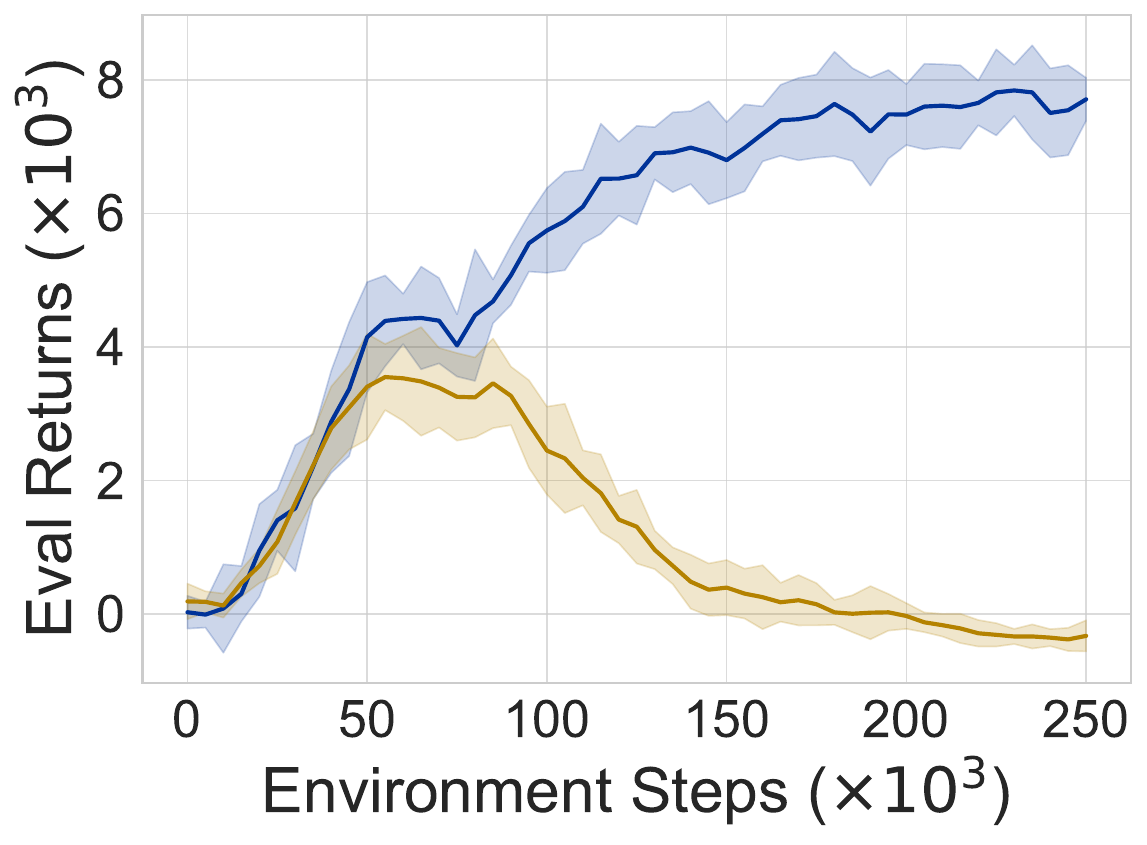}
        \caption{pen-expert}
    \end{subfigure}\hfil
    \begin{subfigure}{.24\textwidth}
        \centering
        \includegraphics[width=\textwidth,  clip={0,0,0,0}]{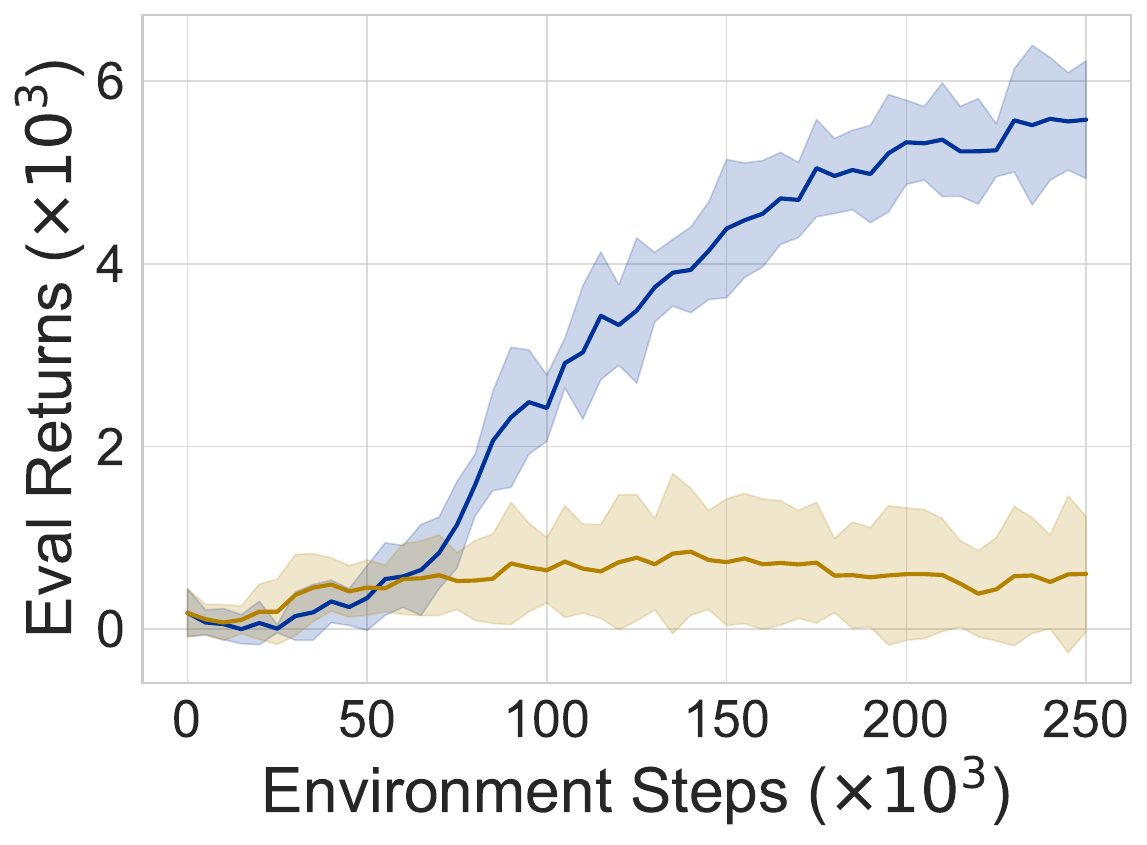}
        \caption{pen-human}
    \end{subfigure}\hfil
    \begin{subfigure}{.24\textwidth}
        \centering
        \includegraphics[width=\textwidth,  clip={0,0,0,0}]{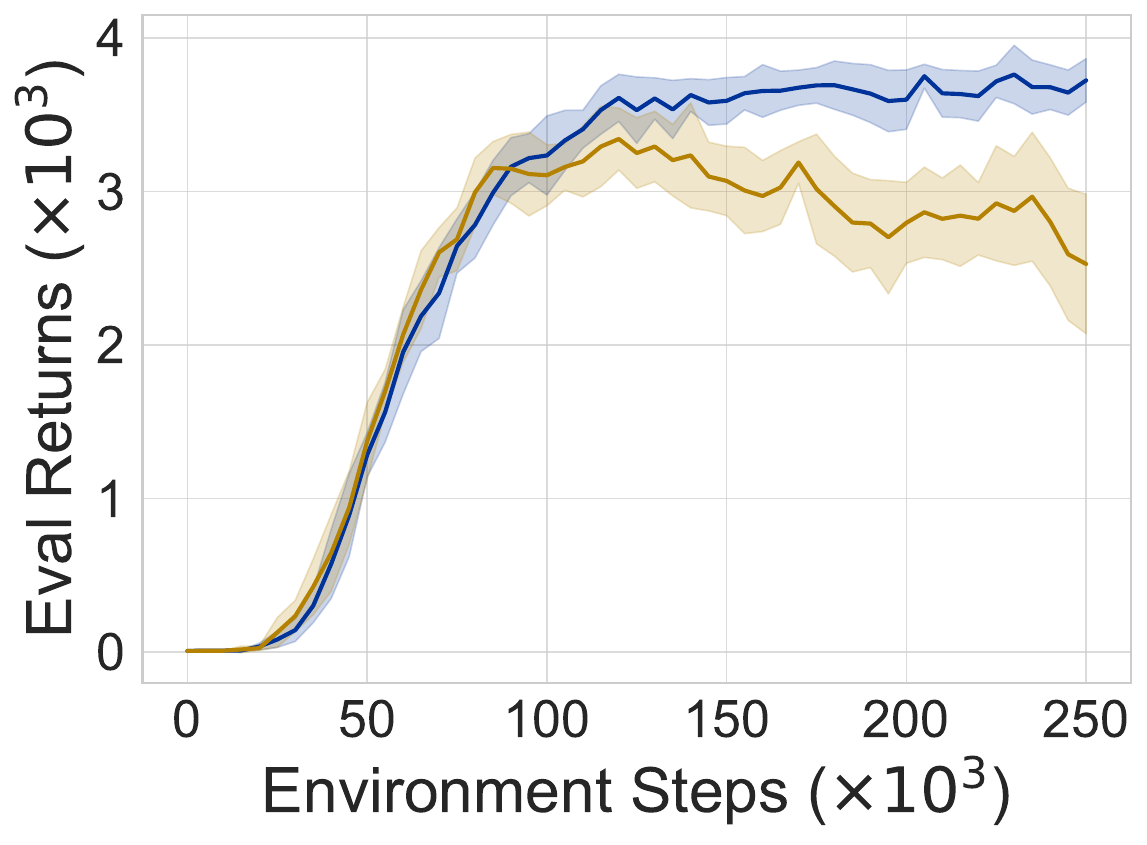}
        \caption{relocate-expert}
    \end{subfigure}\hfil
    \begin{subfigure}{.24\textwidth}
        \centering
        \includegraphics[width=\textwidth,  clip={0,0,0,0}]{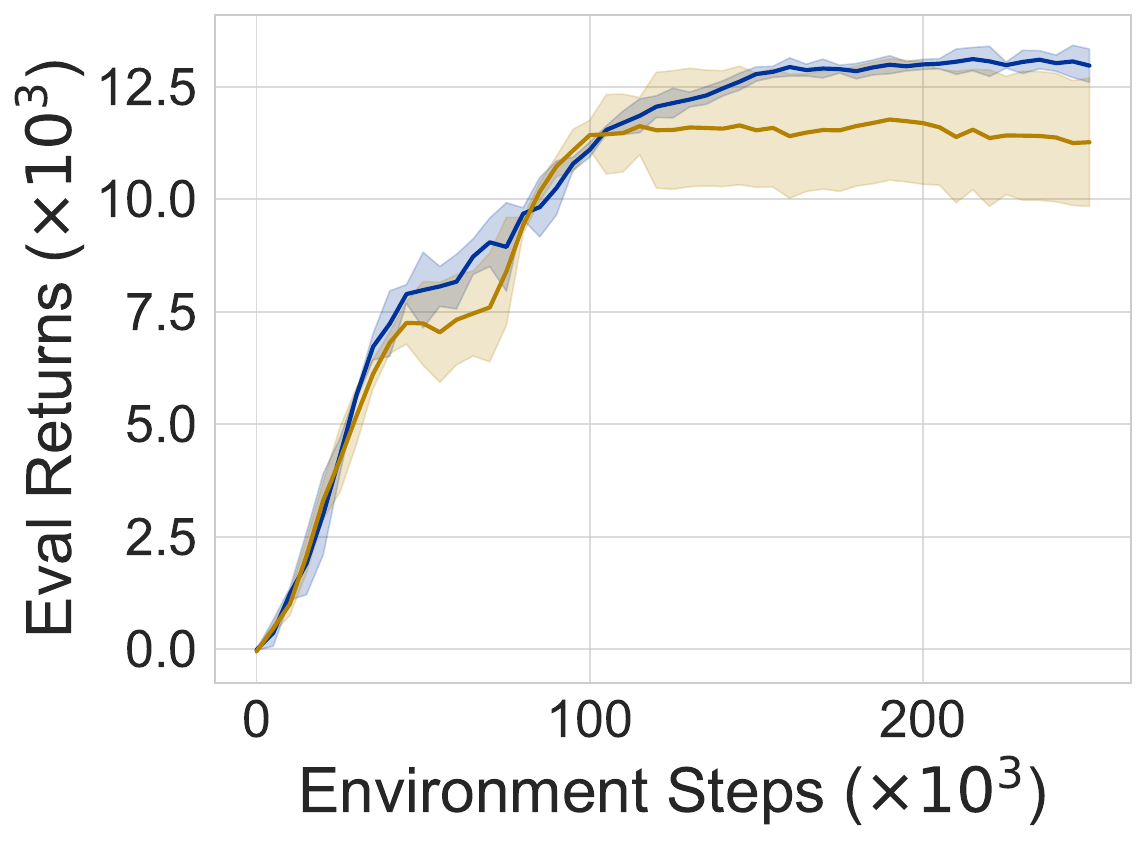}
        \caption{halfcheetah-medium}
    \end{subfigure}\hfil
    \begin{subfigure}{.24\textwidth}
        \centering
        \includegraphics[width=\textwidth,  clip={0,0,0,0}]{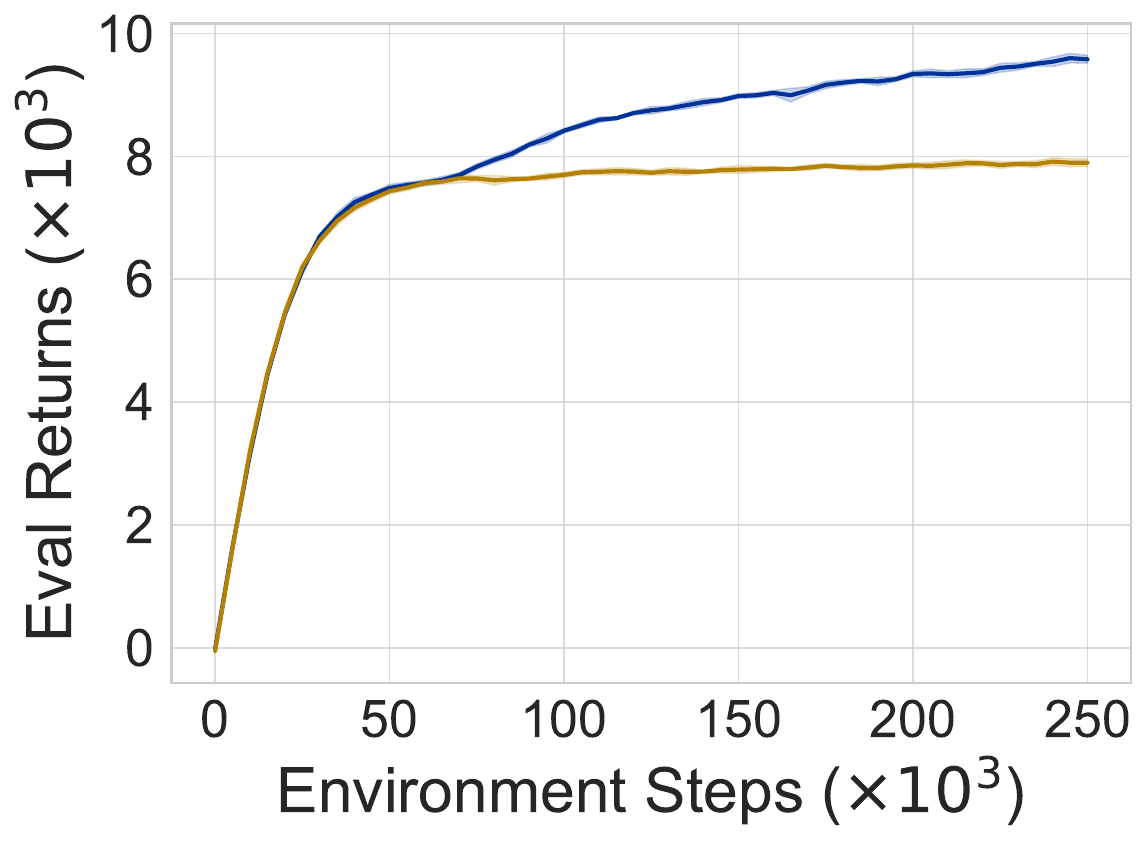}
        \caption{halfcheetah-simple}
    \end{subfigure}\hfil
    \begin{subfigure}{.24\textwidth}
        \centering
        \includegraphics[width=\textwidth,  clip={0,0,0,0}]{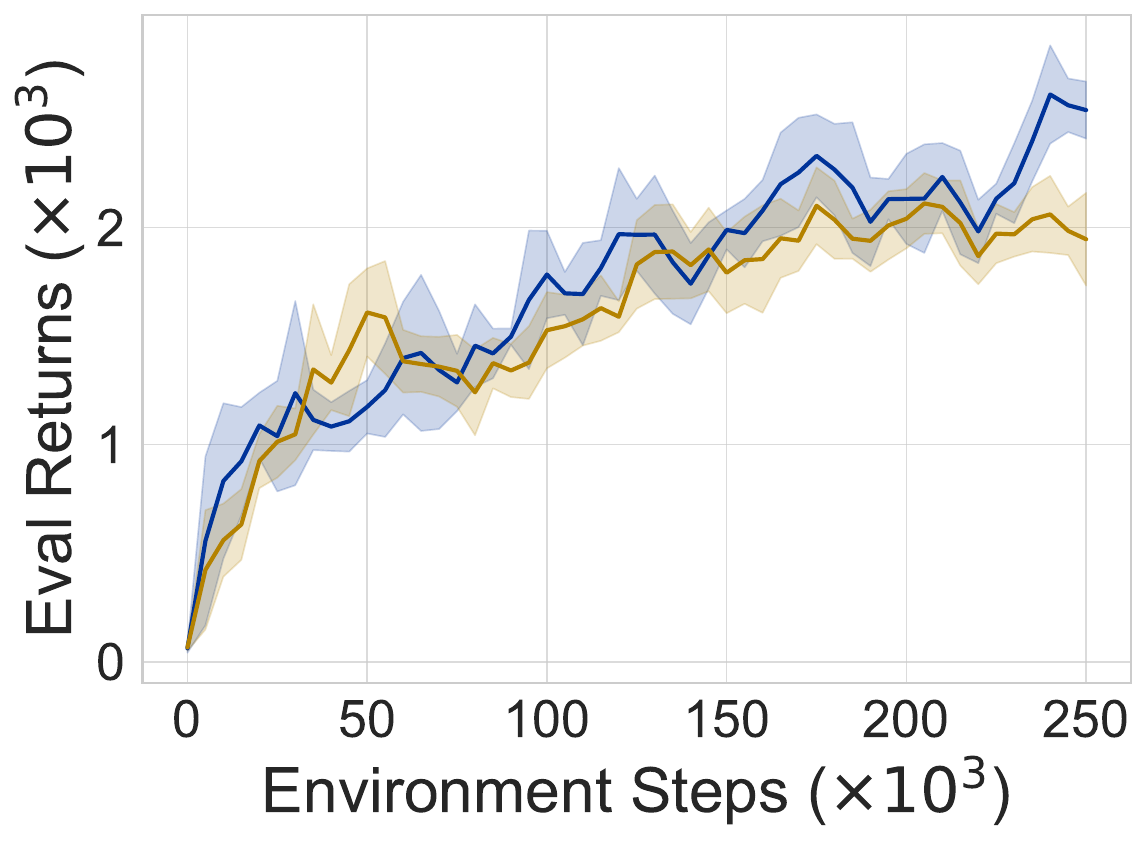}
        \caption{hopper-expert}
    \end{subfigure}\hfil
    \begin{subfigure}{.24\textwidth}
        \centering
        \includegraphics[width=\textwidth,  clip={0,0,0,0}]{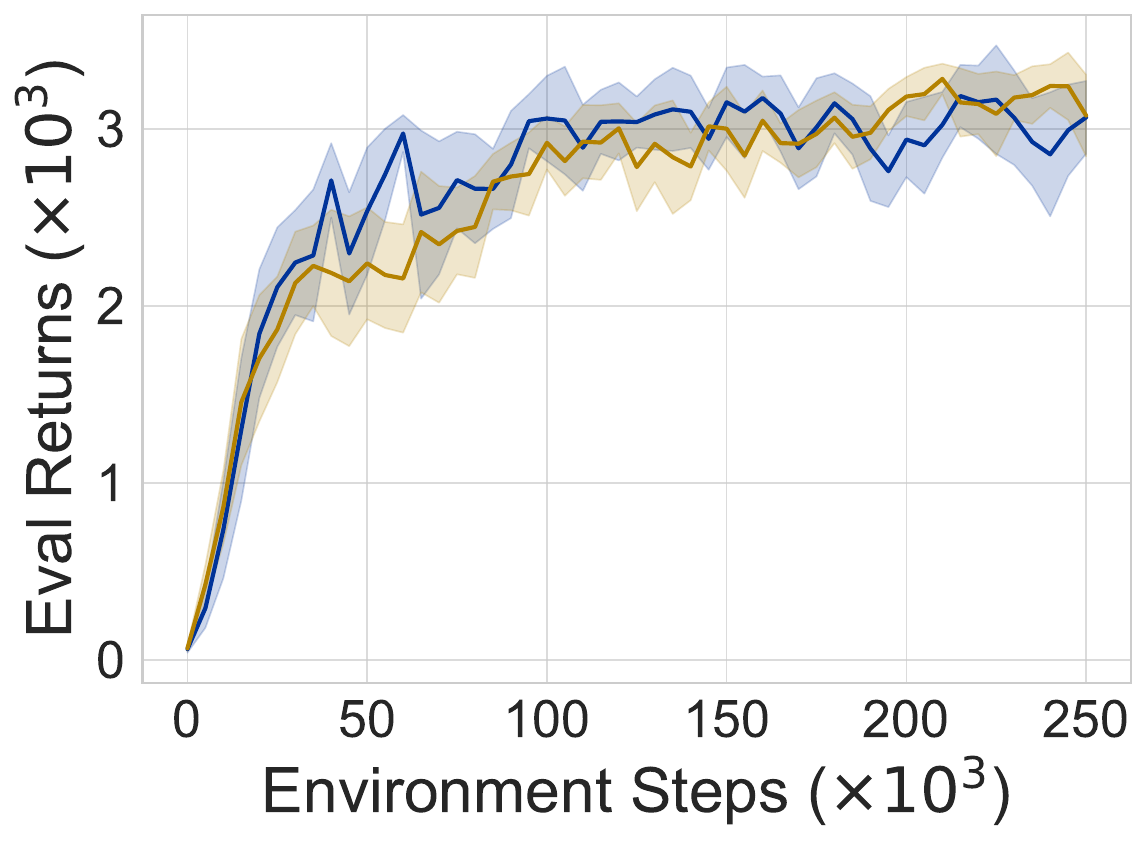}
        \caption{hopper-medium}
    \end{subfigure}\hfil
    \begin{subfigure}{.24\textwidth}
        \centering
        \includegraphics[width=\textwidth,  clip={0,0,0,0}]{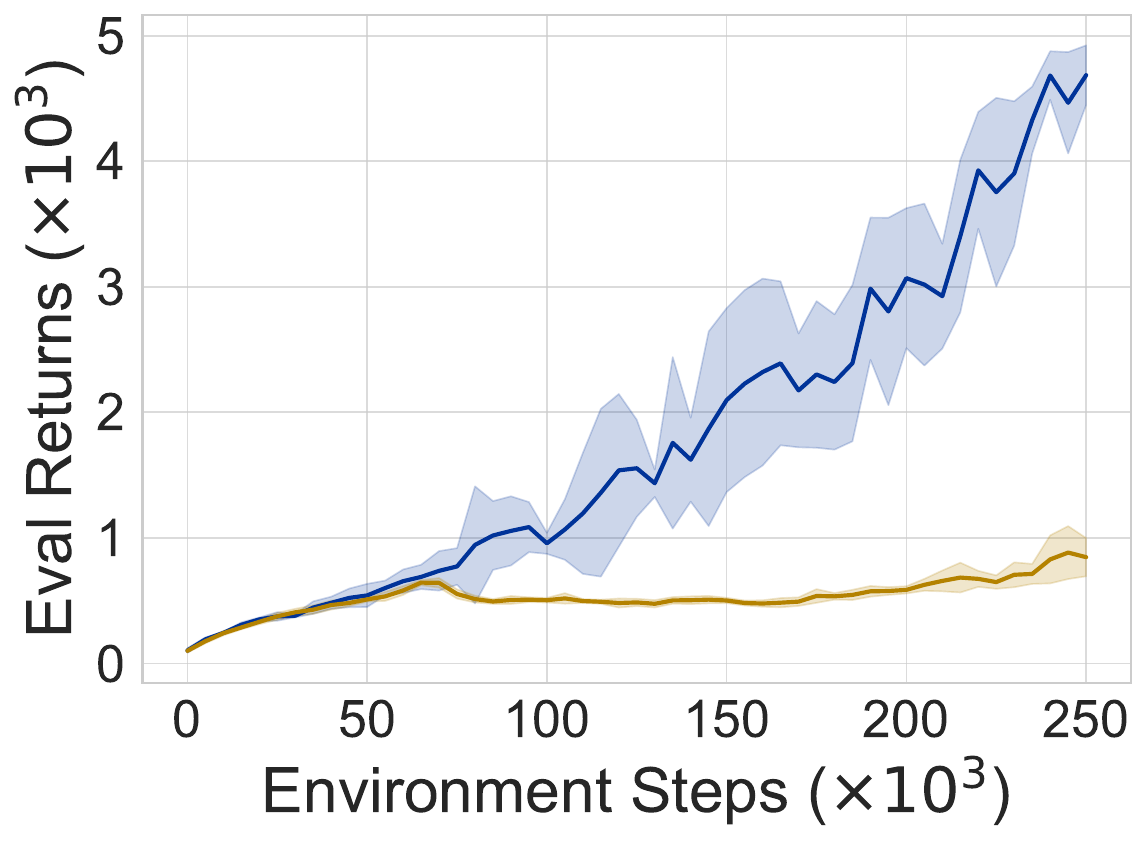}
        \caption{humanoid-expert}
    \end{subfigure}\hfil
    \begin{subfigure}{.24\textwidth}
        \centering
        \includegraphics[width=\textwidth,  clip={0,0,0,0}]{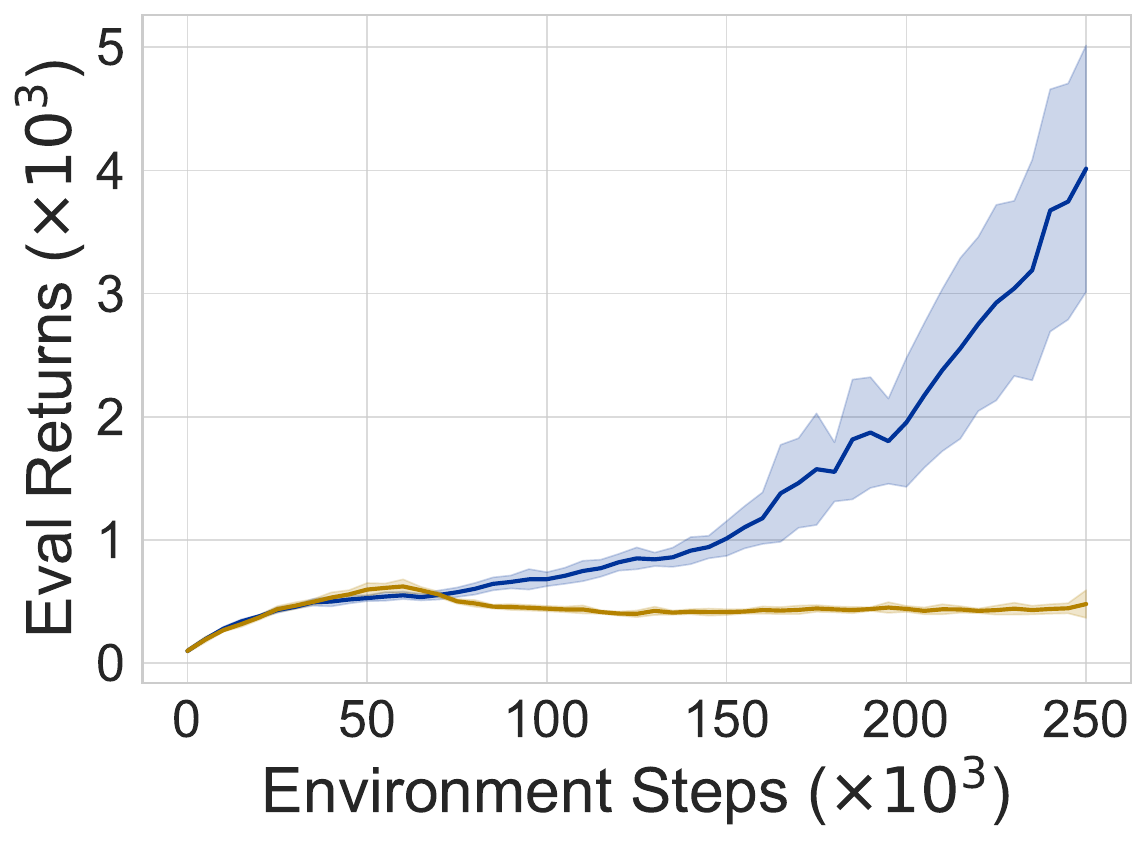}
        \caption{humanoid-medium}
    \end{subfigure}\hfil
    \begin{subfigure}{.24\textwidth}
        \centering
        \includegraphics[width=\textwidth,  clip={0,0,0,0}]{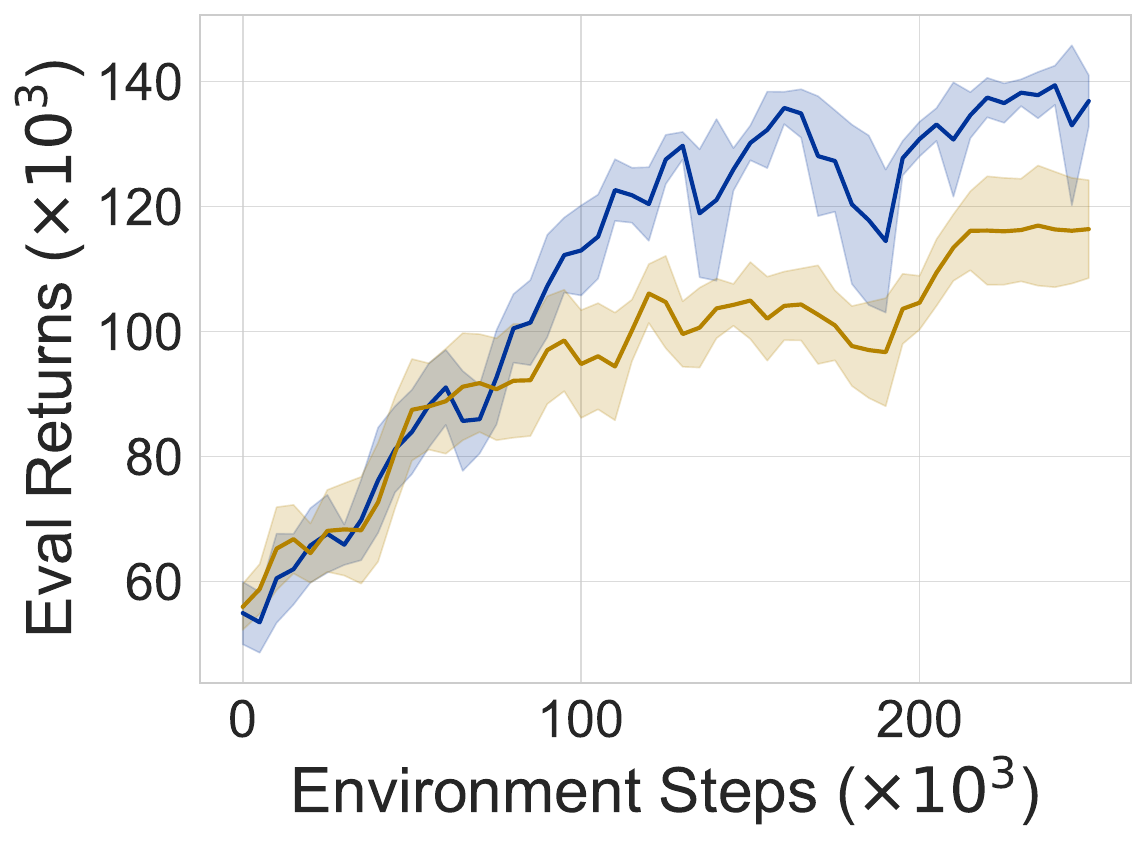}
        \caption{humanoidstandup-medium}
    \end{subfigure}\hfil

    \caption{Ablation on advantage, $\xi = 0$.}
\end{figure*}

\begin{figure*}[h]
    \centering
    \begin{subfigure}{.24\textwidth}
        \centering
        \includegraphics[width=\textwidth,  clip={0,0,0,0}]{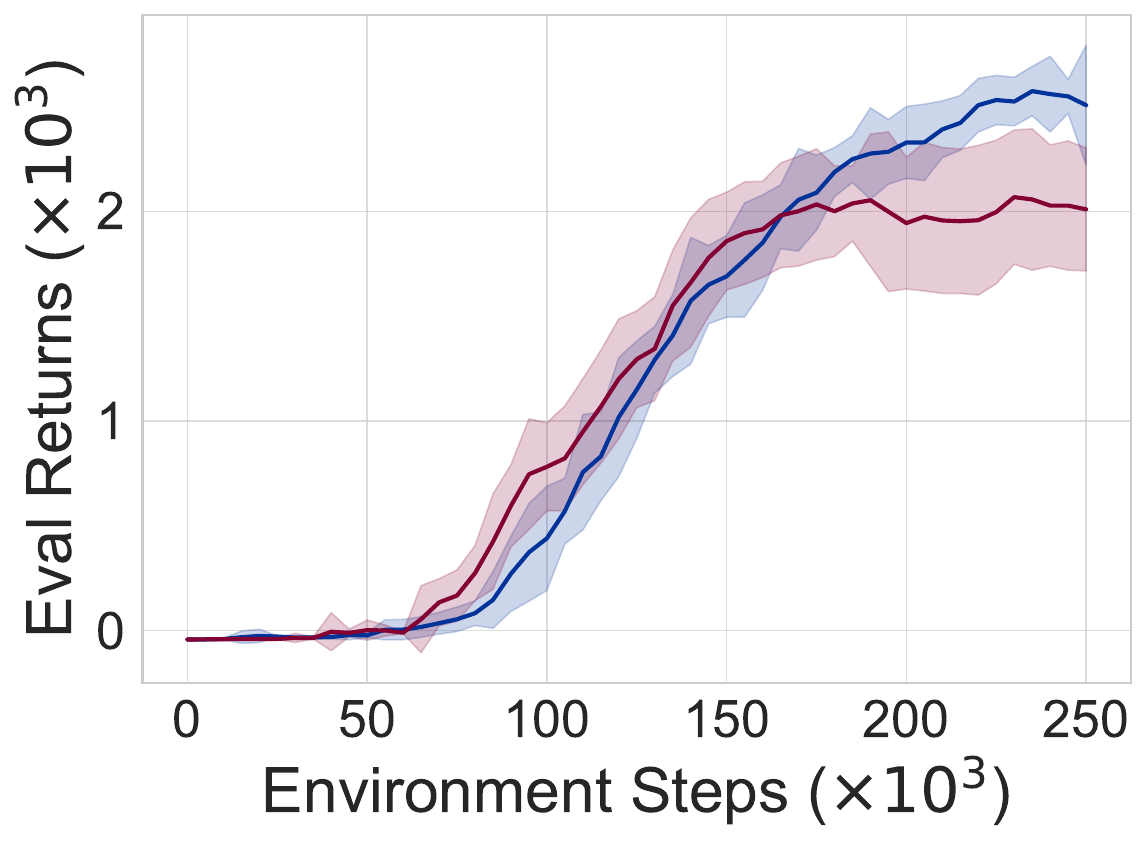}
        \caption{door-cloned}
    \end{subfigure}\hfil
    \begin{subfigure}{.24\textwidth}
        \centering
        \includegraphics[width=\textwidth,  clip={0,0,0,0}]{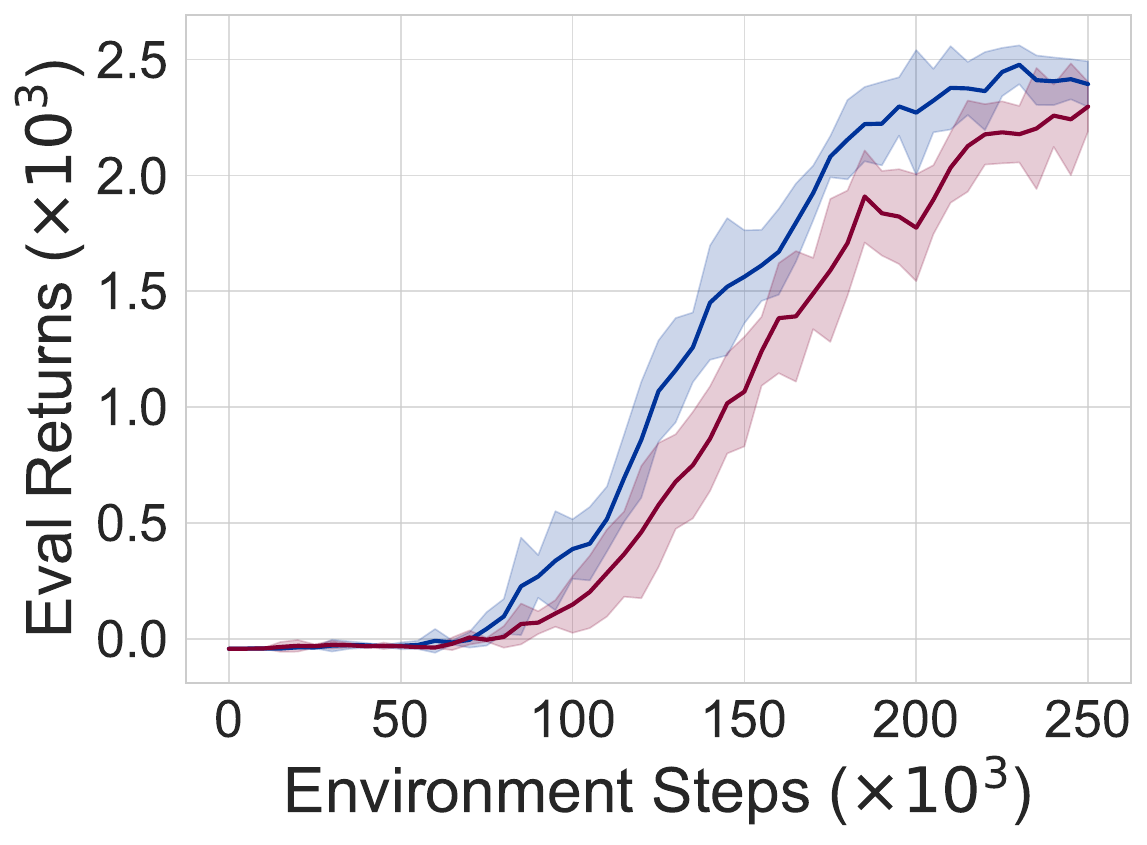}
        \caption{door-human}
    \end{subfigure}\hfil
    \begin{subfigure}{.24\textwidth}
        \centering
        \includegraphics[width=\textwidth,  clip={0,0,0,0}]{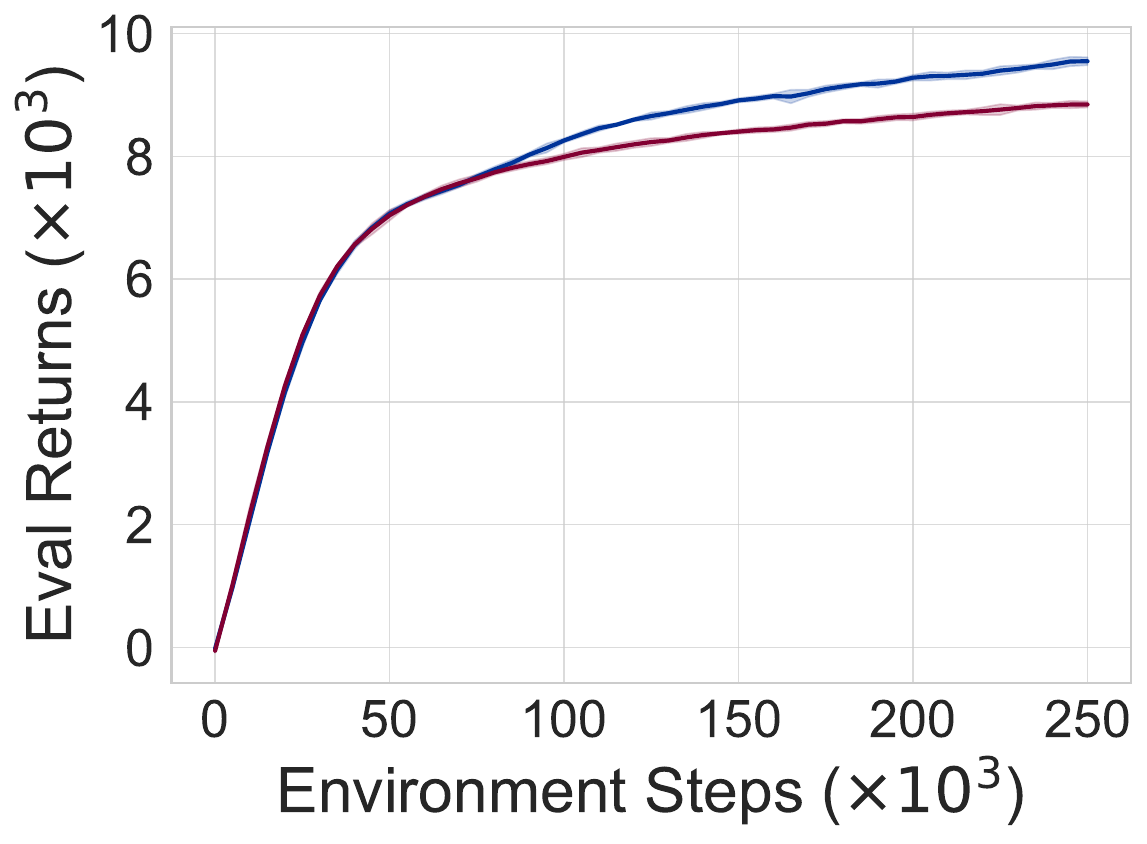}
        \caption{halfcheetah-simple}
    \end{subfigure}\hfil
    \begin{subfigure}{.24\textwidth}
        \centering
        \includegraphics[width=\textwidth,  clip={0,0,0,0}]{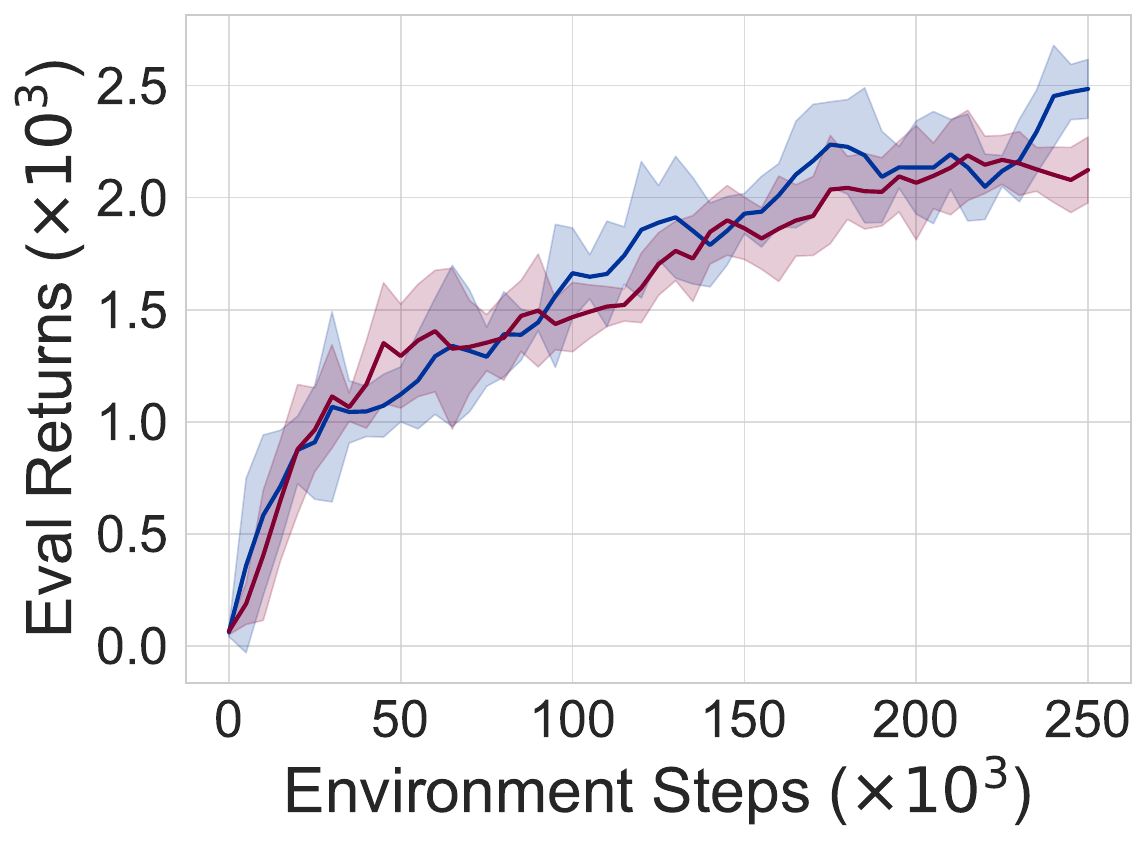}
        \caption{hopper-expert}
    \end{subfigure}\hfil

    \caption{Ablation on lower confidence bound, $\beta = 0$.}
\end{figure*}

\begin{figure*}[h]
    \centering
    \begin{subfigure}{.24\textwidth}
        \centering
        \includegraphics[width=\textwidth,  clip={0,0,0,0}]{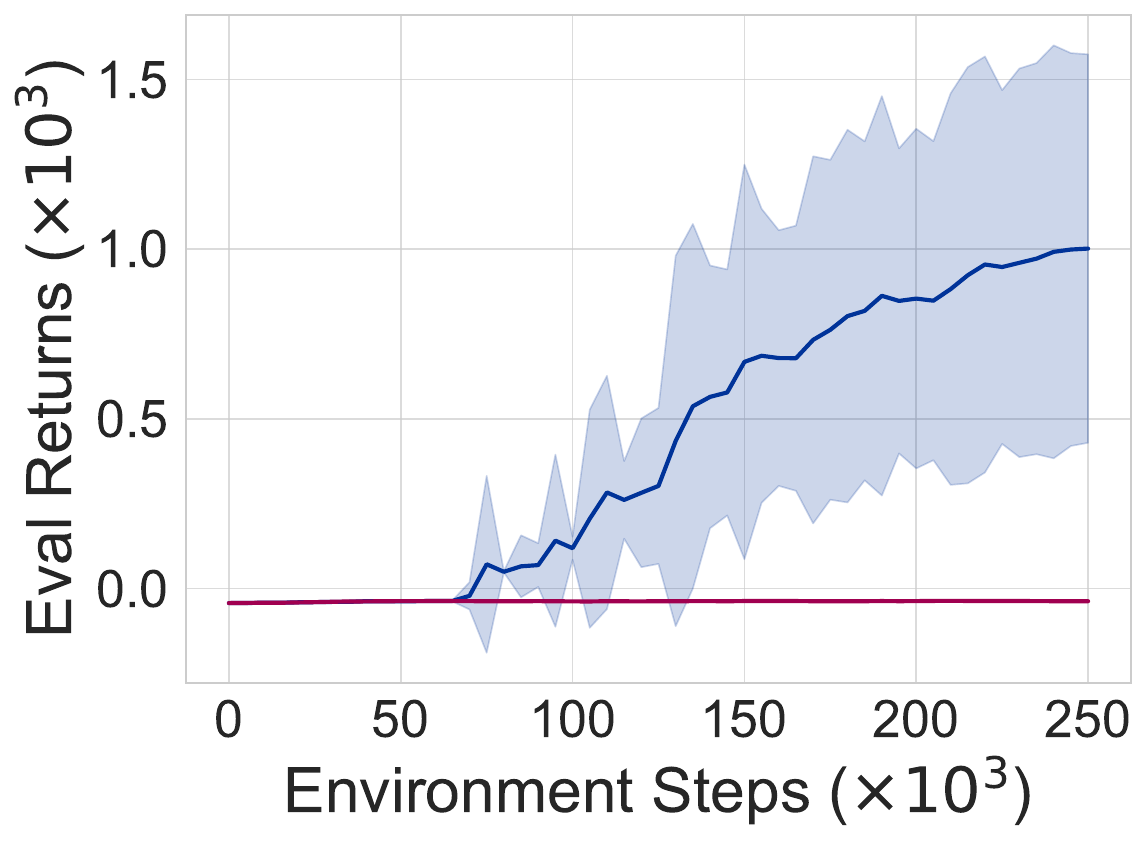}
        \caption{door-human}
    \end{subfigure}\hfil
    \begin{subfigure}{.24\textwidth}
        \centering
        \includegraphics[width=\textwidth,  clip={0,0,0,0}]{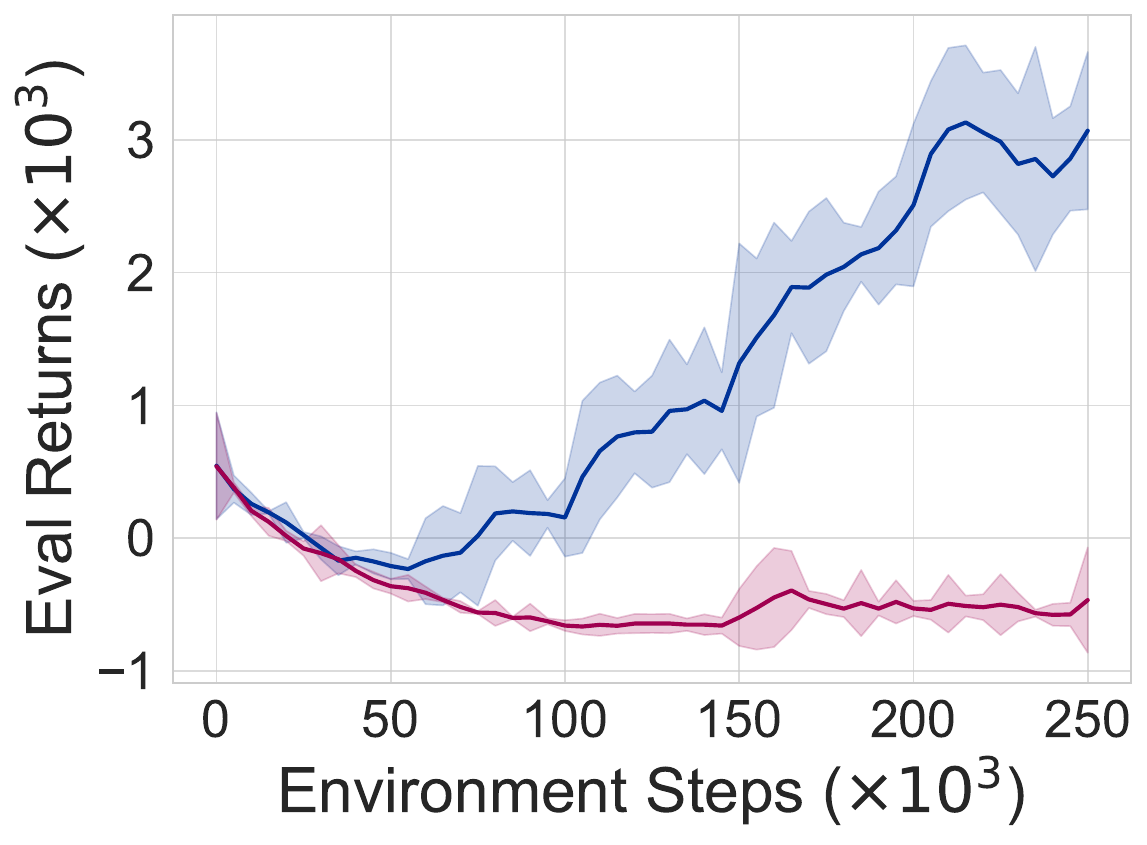}
        \caption{pen-cloned}
    \end{subfigure}\hfil
    \begin{subfigure}{.24\textwidth}
        \centering
        \includegraphics[width=\textwidth,  clip={0,0,0,0}]{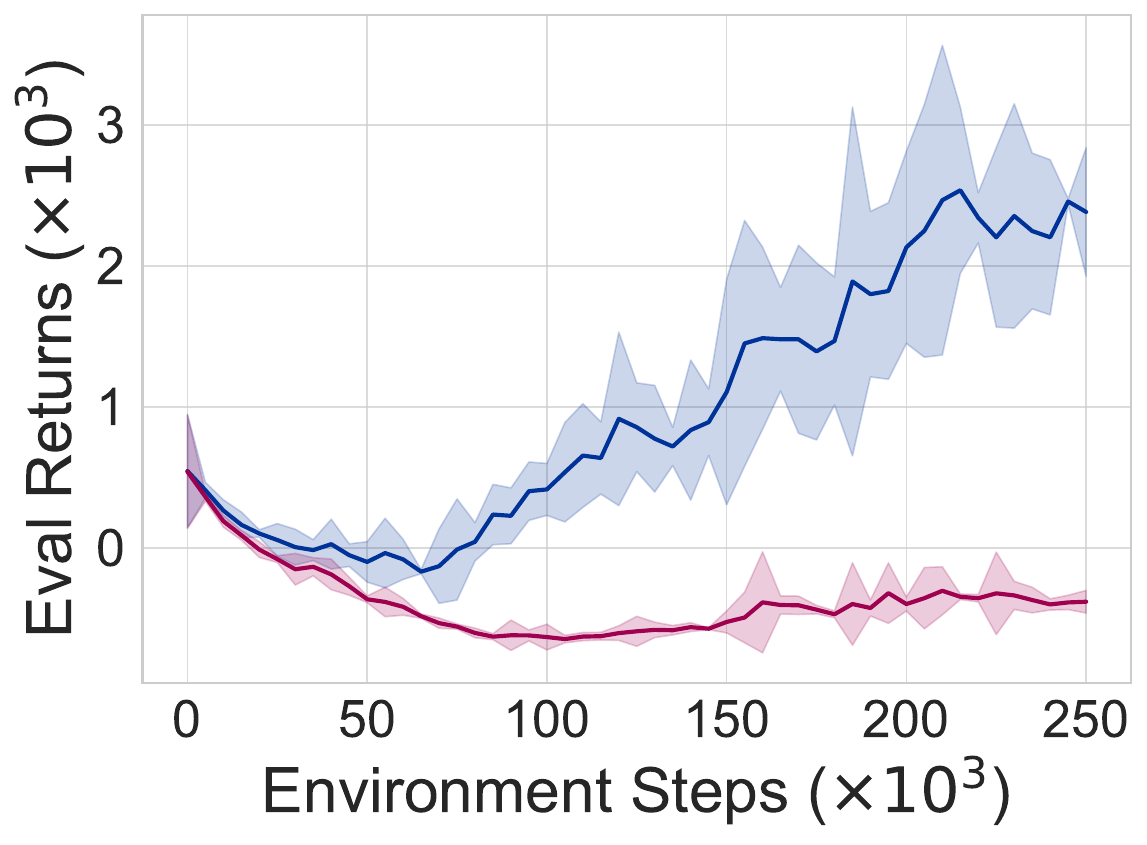}
        \caption{pen-expert}
    \end{subfigure}\hfil
    \begin{subfigure}{.24\textwidth}
        \centering
        \includegraphics[width=\textwidth,  clip={0,0,0,0}]{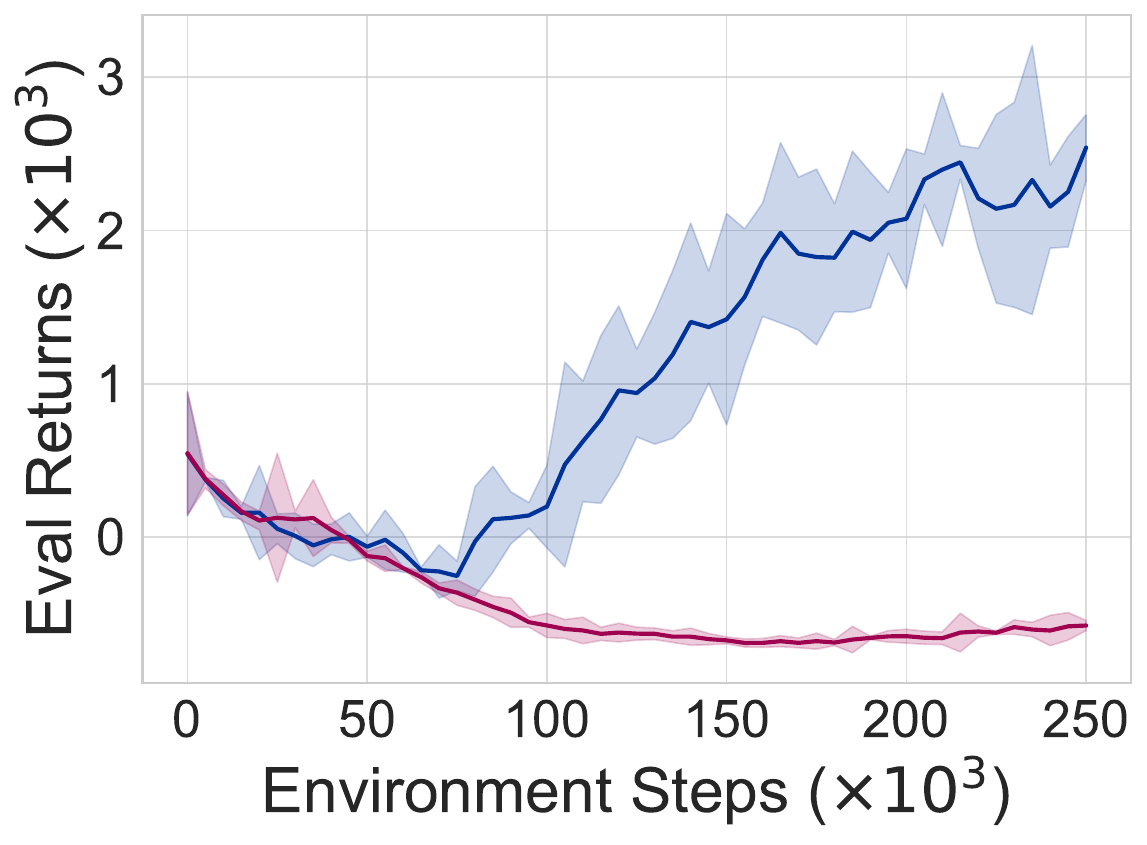}
        \caption{pen-human}
    \end{subfigure}\hfil
    \begin{subfigure}{.24\textwidth}
        \centering
        \includegraphics[width=\textwidth,  clip={0,0,0,0}]{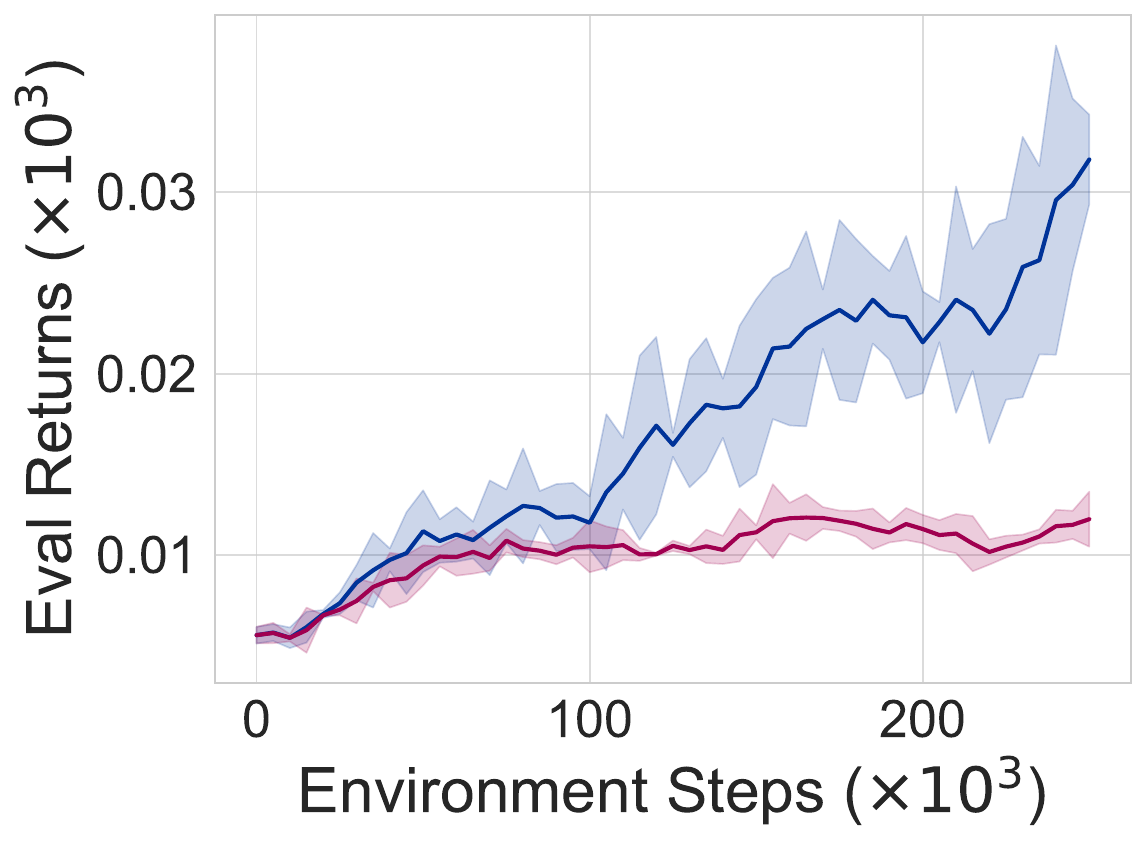}
        \caption{relocate-expert}
    \end{subfigure}\hfil
    \begin{subfigure}{.24\textwidth}
        \centering
        \includegraphics[width=\textwidth,  clip={0,0,0,0}]{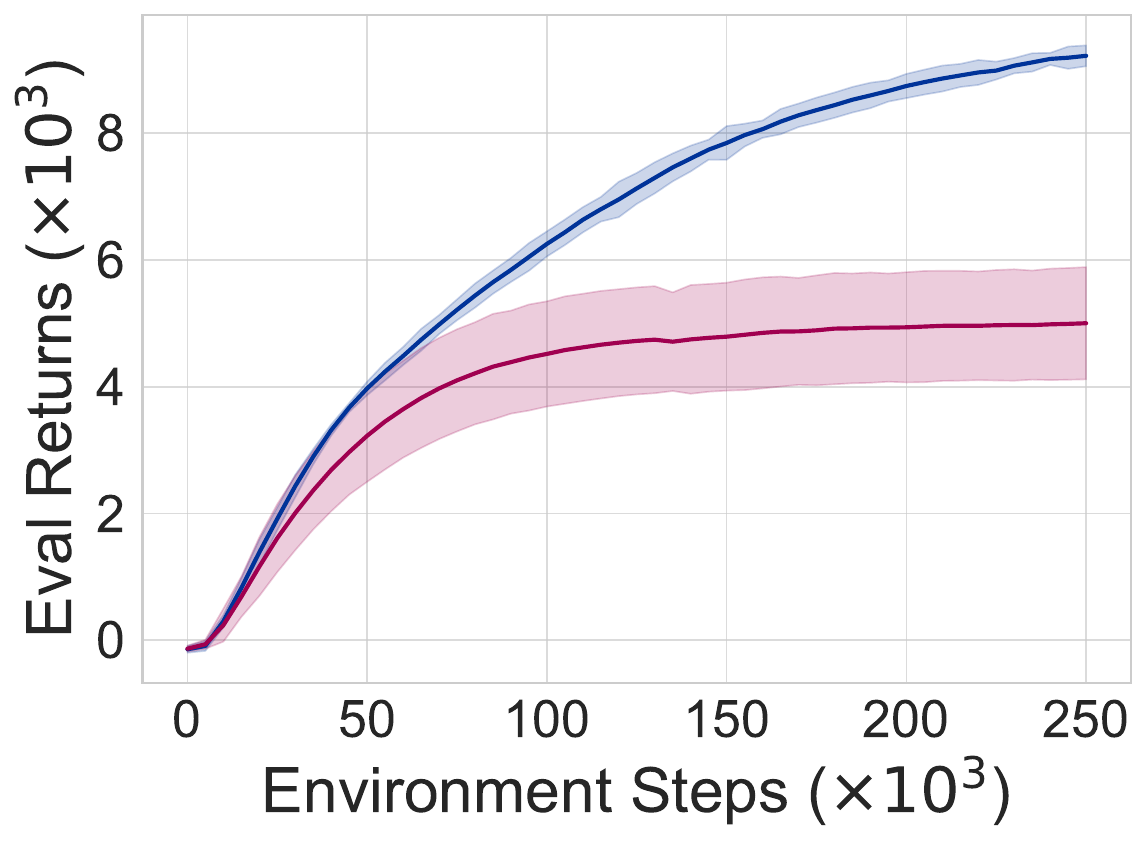}
        \caption{halfcheetah-simple}
    \end{subfigure}\hfil
    \begin{subfigure}{.24\textwidth}
        \centering
        \includegraphics[width=\textwidth,  clip={0,0,0,0}]{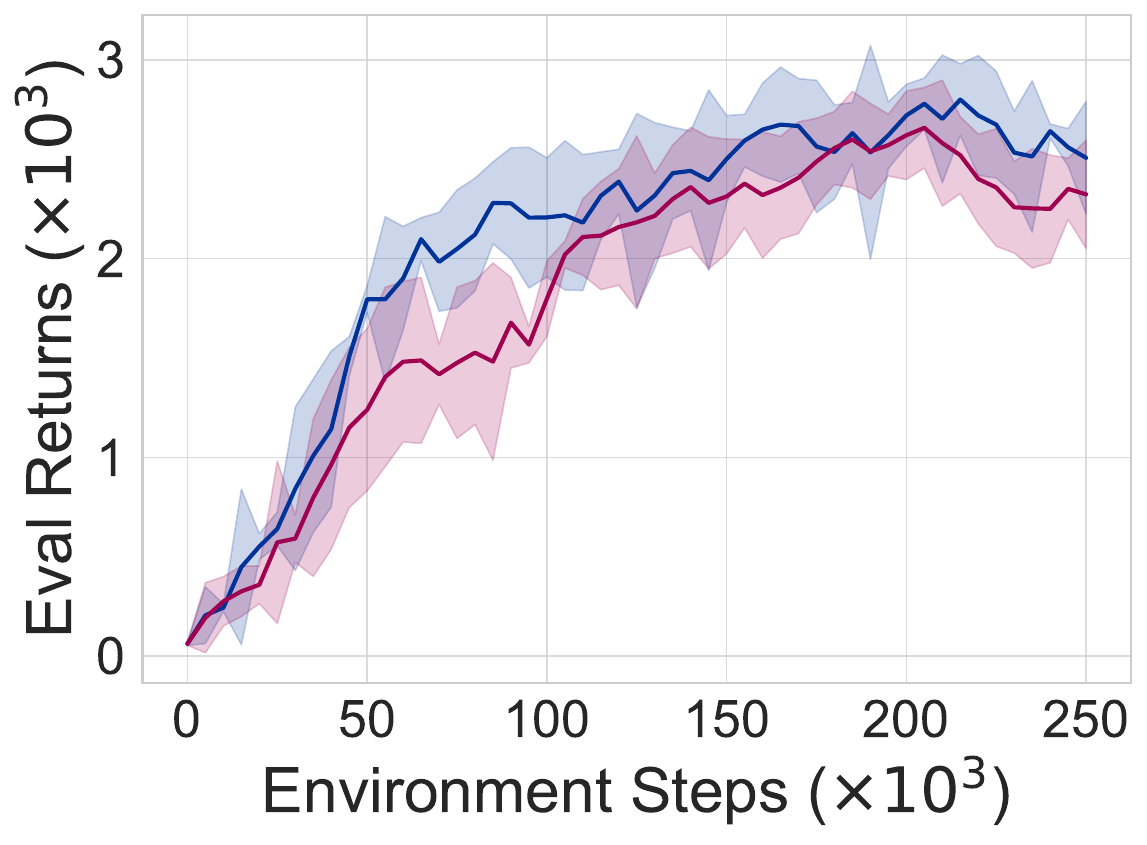}
        \caption{hopper-medium}
    \end{subfigure}\hfil
    \begin{subfigure}{.24\textwidth}
        \centering
        \includegraphics[width=\textwidth,  clip={0,0,0,0}]{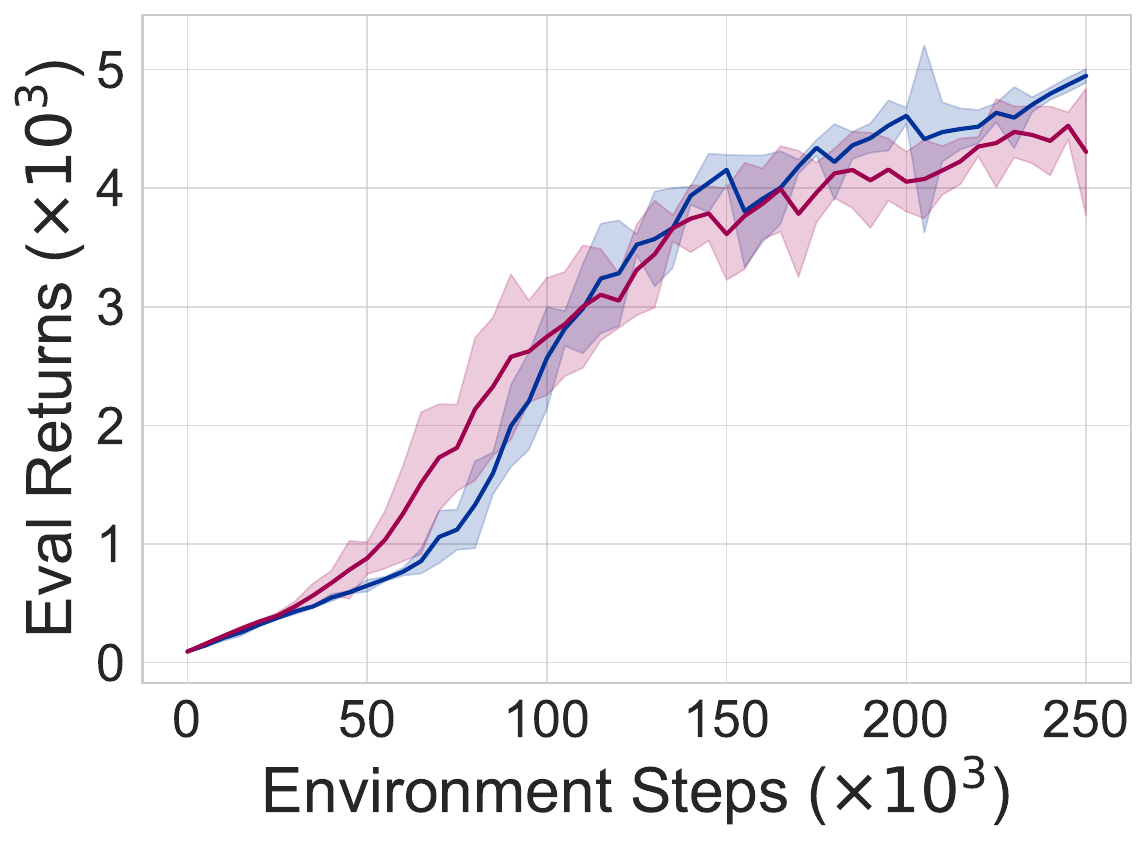}
        \caption{humanoid-medium}
    \end{subfigure}\hfil

    \caption{Ablation on online setting.}
\end{figure*}

\begin{figure*}[h]
    \centering
    \begin{subfigure}{.24\textwidth}
        \centering
        \includegraphics[width=\textwidth,  clip={0,0,0,0}]{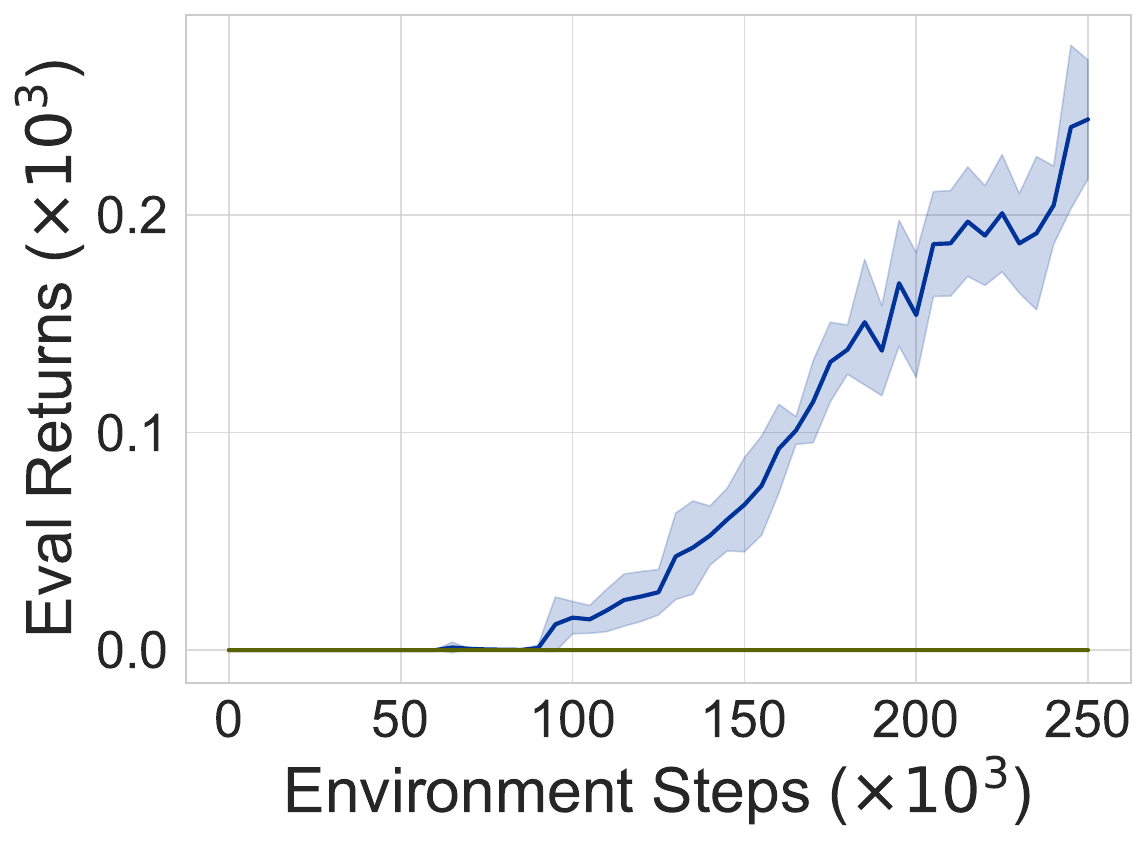}
        \caption{antmaze-umaze}
    \end{subfigure}\hfil
    \begin{subfigure}{.24\textwidth}
        \centering
        \includegraphics[width=\textwidth,  clip={0,0,0,0}]{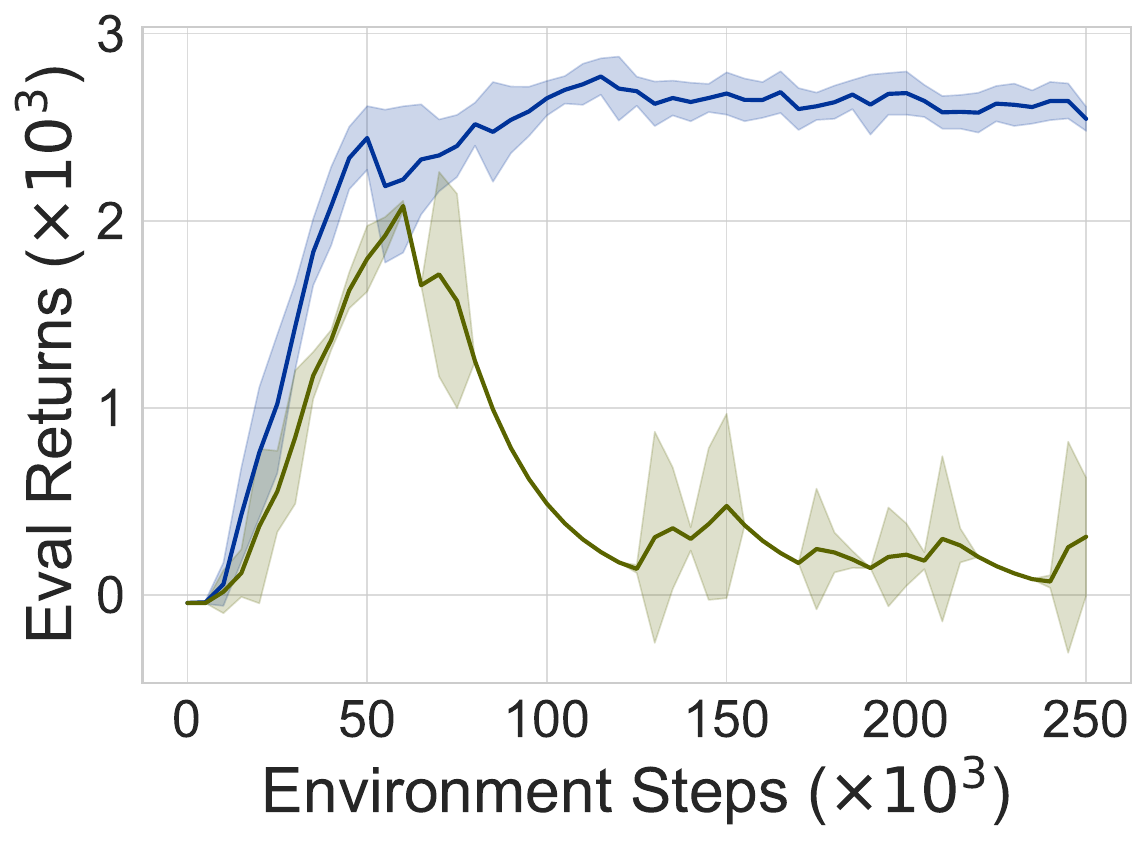}
        \caption{door-expert}
    \end{subfigure}\hfil
    \begin{subfigure}{.24\textwidth}
        \centering
        \includegraphics[width=\textwidth,  clip={0,0,0,0}]{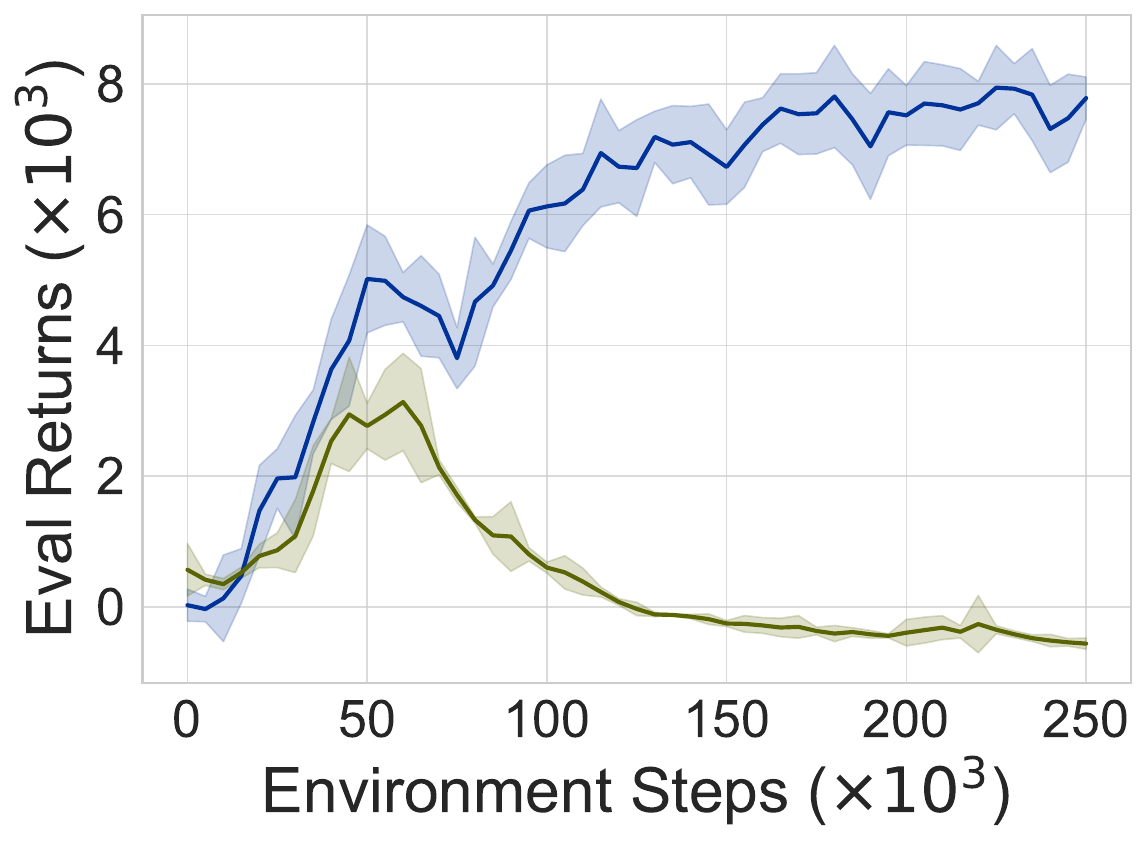}
        \caption{pen-expert}
    \end{subfigure}\hfil
    \begin{subfigure}{.24\textwidth}
        \centering
        \includegraphics[width=\textwidth,  clip={0,0,0,0}]{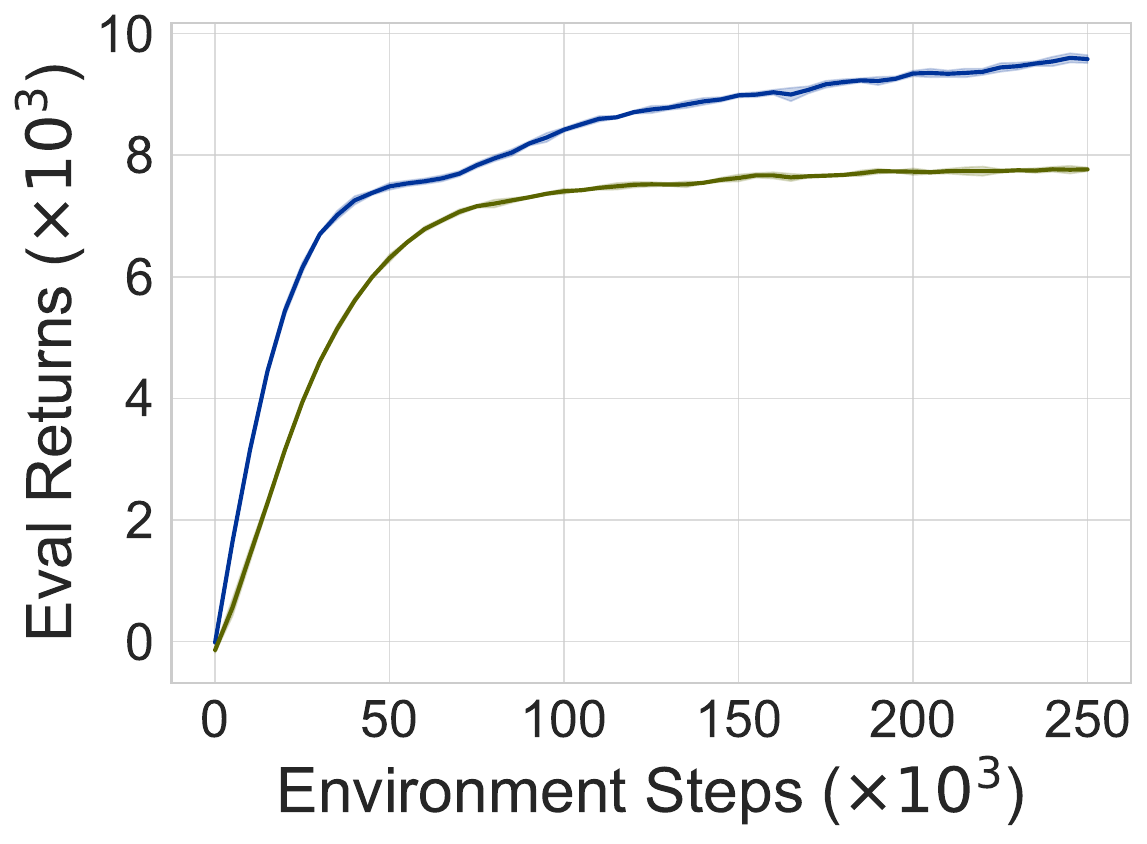}
        \caption{halfcheetah-simple}
    \end{subfigure}\hfil
    \begin{subfigure}{.24\textwidth}
        \centering
        \includegraphics[width=\textwidth,  clip={0,0,0,0}]{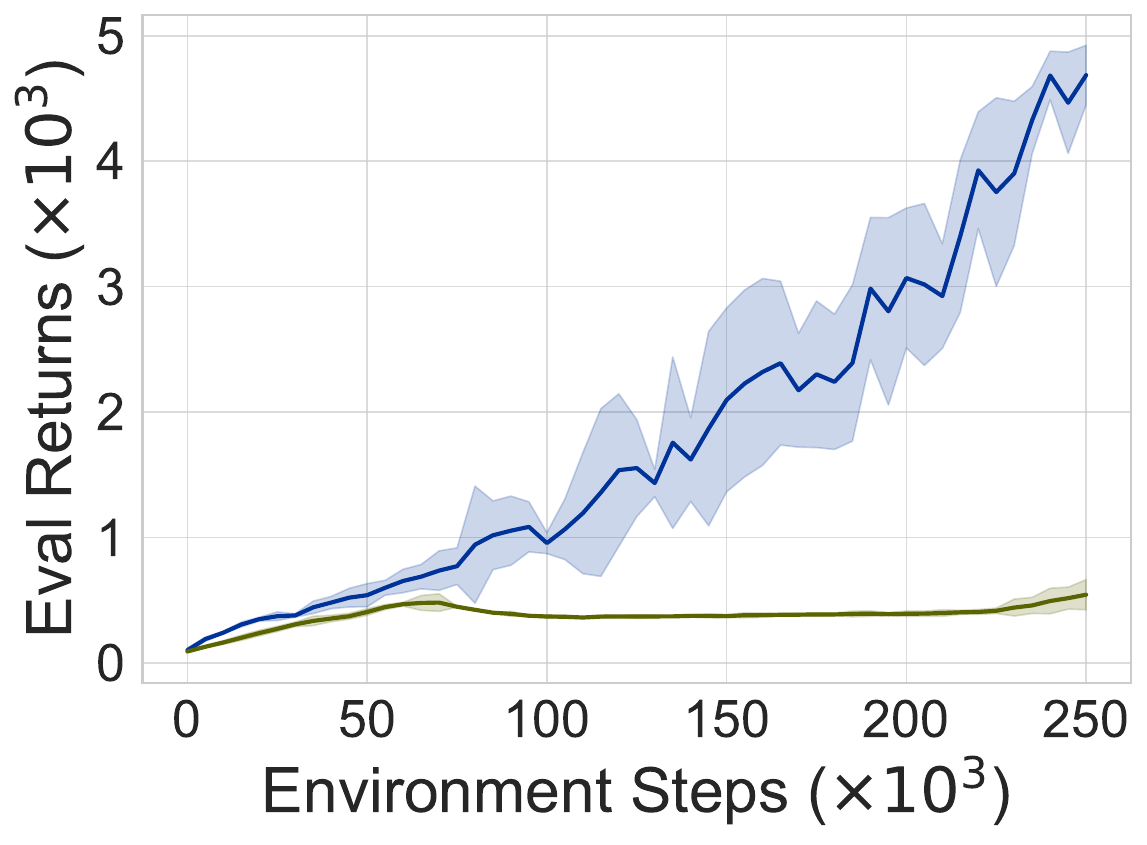}
        \caption{humanoid-expert}
    \end{subfigure}\hfil
    \begin{subfigure}{.24\textwidth}
        \centering
        \includegraphics[width=\textwidth,  clip={0,0,0,0}]{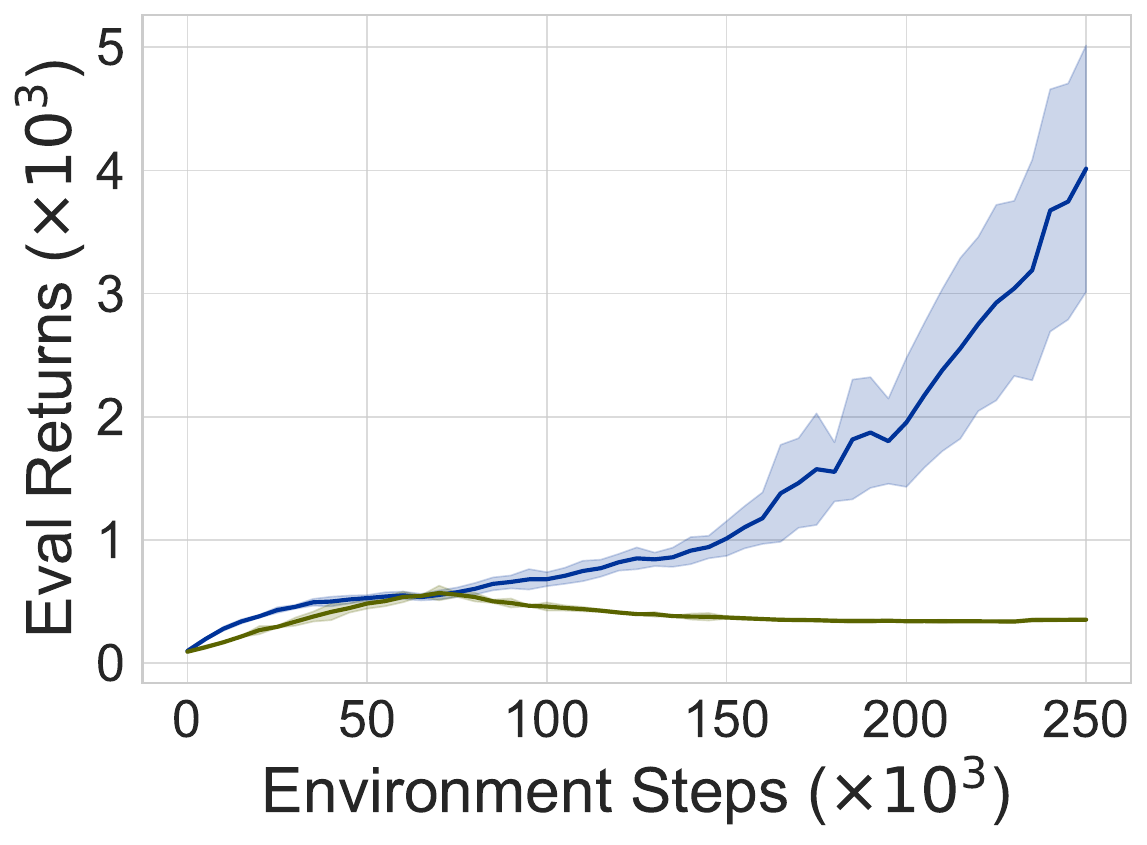}
        \caption{humanoid-medium}
    \end{subfigure}\hfil
    \caption{Ablation against TD error as priority.}
\end{figure*}

\subsection{Environments and datasets.}

\paragraph{Environments.} \figref{fig:exp:env:1} and \figref{fig:exp:env:2} present the tested D4RL tasks: the locomotion ones include halfcheetah, walker2d, ant, antmaze, hopper, humanoid; while the Adroit hand ones include pen, relocate, hammer and door. antmaze has sparse rewards, while all other environments have dense reward. All environments are equipped with continuous state and action spaces.

In the halfcheetah environment, the 2D agent resembles a simplified cheetah model with a torso and lined legs, with the objective of forward locomotion and maintaining balance while maximizing speed. In the walker2d environment, the 2D humanoid agent has 2 legs and multiple joints, with the objective of stable bipedal walking without falling. In the ant environment, the agent is a 3D quadrupedal agent with multiple joints and degrees of freedom, with the objective of moving forward efficiently while maintaining balance. humanoid and hopper are similar. For all of these environments, rewards are given for velocity to encourage the agent to move forward efficiently while maintaining balance, and several offline datasets, per \citep{fu2020d4rl}, with varying characteristics, as detailed below, were tested.

\figref{fig:exp:env:1} (right most) presents a snapshot of relocate. This environment involve a simulated 28-DoF robotic arm interacting with objects in a 3D space and are characterized by sparse rewards and continuous state and action spaces. door, hammer, pen are similar high-dimensional robotic arm manipulation tasks.

In the relocate environment, the arm must pick up a ball and move it to a target position, requiring coordinated grasping and relocation of an object in 3D space. 
For all of these environments, rewards are sparse and typically only given upon task completion, increasing the exploration difficulty.

In the antmaze environment, the aforementioned ant agent is placed in a maze environment and must navigate from a defined start point to a goal. Rewards are binary: $1$ for reaching the goal and $0$ otherwise.

\paragraph{Datasets.} We use offline datasets as provided by Minari \citep{minari}. For each of the environments above, depending on the type of environment (whether it is under D4RL (Adroit) or mujoco (locomotion)), there are several types of offline datasets. These are detailed in \autoref{tab:env_description:1} and \autoref{tab:env_description:2}. 
\begin{table}[H]
    \centering
    \begin{tabular}{l|c}
    \hline
        {\bf Offline dataset type} & {\bf Description} \\
        \hline
        simple & 1M steps of agent trained for 1M steps \\
        medium & 1M steps of agent trained for 4M steps \\
        expert & 1M steps of agent trained for 25M steps \\
        \hline
    \end{tabular}
    \caption{Offline dataset descriptions for hopper.}
    \label{tab:env_description:1}
\end{table}

\begin{table}[H]
    \centering
    \begin{tabular}{l|c}
    \hline
        {\bf Offline dataset type} & {\bf Description} \\
        \hline
        human & 5K steps of human demonstrations \\
        expert & 500K steps of fine-tuned RL policy \\
        cloned & 500K steps of imitation learning agent using human and expert\\
        \hline
    \end{tabular}
    \caption{Offline dataset descriptions for pen.}
    \label{tab:env_description:2}
\end{table}

The exact number of steps may vary per environment, but hopper and pen are representative of the locomotion and Adroit tasks respectively. We refer to \citet{minari} for more granular details for each environment.

\begin{figure}[ht]
    \centering
    \includegraphics[height=0.8in]{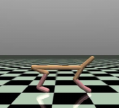} \hfil
    \includegraphics[height=0.8in]{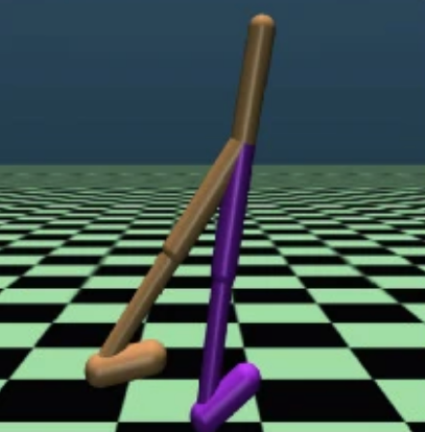}\hfil
    \includegraphics[height=0.8in]{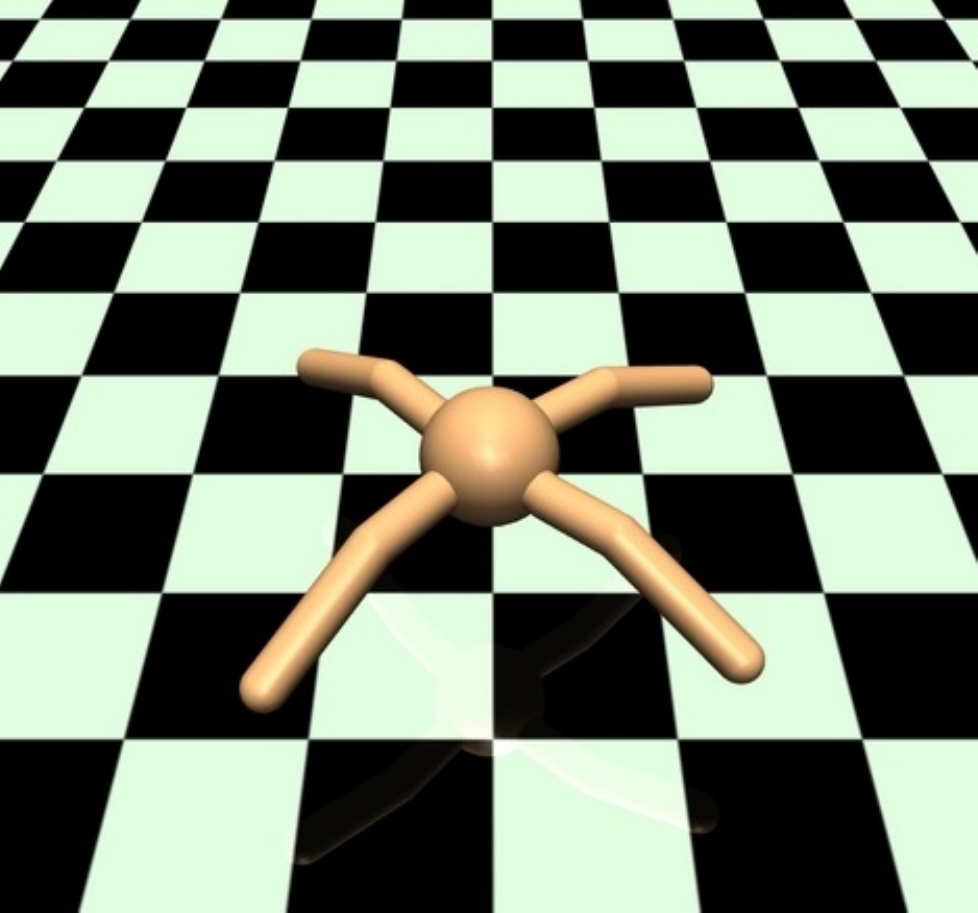}\hfil
    \includegraphics[height=0.8in]{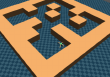}\hfil
     \includegraphics[height=0.8in]{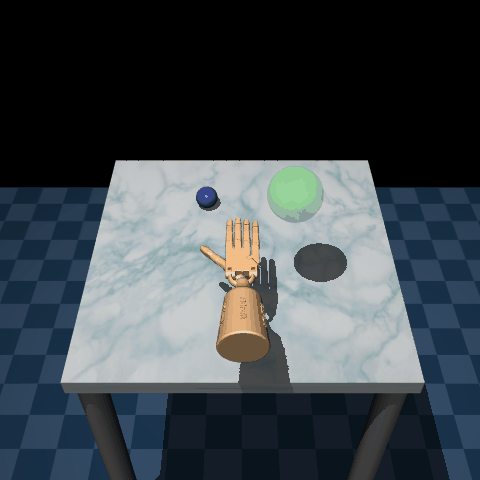} \hfil
    \caption{
    \textbf{Environments}: halfcheetah, walker2d,  ant, antmaze and relocate. 
}\label{fig:exp:env:1}
\end{figure}

\begin{figure}[ht]
    \centering
    \includegraphics[height=0.8in]{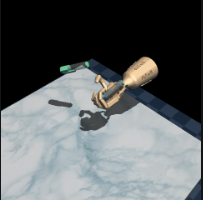} \hfil
    \includegraphics[height=0.8in]{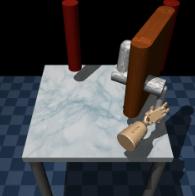}\hfil
    \includegraphics[height=0.8in]{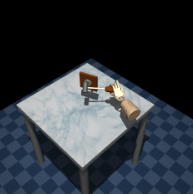}\hfil
    \includegraphics[height=0.8in]{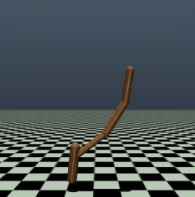}\hfil
     \includegraphics[height=0.8in]{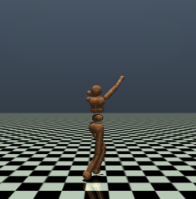} \hfil
    \caption{
    \textbf{Environments}: pen, door, hammer, hopper and humanoid. 
}\label{fig:exp:env:2}
\end{figure}

\subsection{Computing Infrastructure and Wall-time Comparison.}\label{app:computing_infrastructure}
We performed our experiments on a cluster that includes CPU nodes (approximately 280 cores) and GPU nodes (approximately 110 NVIDIA GPUs, ranging from Titan X to A6000, set up mostly in 4- and 8-GPU configurations). On the same cluster, the wall run time of \algname is approximately 1.25 times the runtime of regular RLPD, and is about half the runtime of PEX and BOORL (offline pretraining included). This highlights the superior sample and compute efficiency of \algname. 

\subsection{Hyperparameters}
We list the hyperparameters used for \algname in \tabref{tab:hyperparams}.

\begin{table}[H]
    \centering
    \begin{tabular}{l|c}
    \hline
        {\bf Parameter} & {\bf Value} \\
        \hline
        Batch size & $256$ \\
        Gradient steps $G$ & $10$ \\
        MLP Architecture & 2-Layer, with LayerNorm \\
        Network width & $256$ Units \\
        Discount & $0.99$ \\
        Learning rate & $3 \times 10^{-4}$ \\
        Ensemble size $E$ & $10$ \\
        $\densityTemp$ & $0.2$ \\
        $\advTemperature$ & $1$ \\
        $\beta$ & $0.2$ \\
        $\beta_0$ & $0.4$ \\
        Optimizer & Adam \\
        \hline
    \end{tabular}
    \caption{\algname hyperparameters.}
    \label{tab:hyperparams}
\end{table}

\end{document}